\newcommand{\imgPath}{.} % Uncomment this when in local.
\newcommand{\sqloss}{\ell_{\mathrm{sq}}}
\newcommand{\init}{{\rm init}}
\newcommand{\zero}{{\vec 0}}
\newcommand{\ind}{\mathbbm{1}}
\newcommand{\NTK}{\mathsf{NTK}}
\newcommand{\Pbsp}{\calP^{\mathsf{bsp}}}
\newcommand{\Plp}{\calP^{\mathsf{lp}}}
\begin{document}
\vspace*{-5mm}
\begin{center}
\hrule height 2.5pt
\vspace{3mm}{\LARGE Quantifying the Benefit of \\[2mm] Using Differentiable Learning
over Tangent Kernels}\\[3mm]
\hrule height 1.25pt
\vspace*{3mm}
% \begin{tabular*}{0.8\textwidth}{l @{\extracolsep{\fill}} r}
% 	{\bf Eran Malach} & {\tt eran.malach@mail.huji.ac.il} \\
% 	{\bf Pritish Kamath}  & {\tt pritish@ttic.edu} \\
% 	{\bf Emmanuel Abbe} & {\tt emmanuel.abbe@epfl.ch} \\
% 	{\bf Nathan Srebro} & {\tt nati@ttic.edu}\\[3mm]
% \end{tabular*}
\newcommand{\HUJI}{{\scriptsize Hebrew University of Jerusalem}}
\newcommand{\EPFL}{{\scriptsize EPFL}}
\newcommand{\TTIC}{{\scriptsize Toyota Technological Institute at Chicago}}
\begin{tabular*}{0.9\textwidth}{ll @{\extracolsep{\fill}} r}
{\bf Eran Malach} & \HUJI & {\tt eran.malach@mail.huji.ac.il} \\
{\bf Pritish Kamath} & \TTIC  & {\tt pritish@ttic.edu} \\
{\bf Emmanuel Abbe} & \EPFL & {\tt emmanuel.abbe@epfl.ch} \\
{\bf Nathan Srebro} & \TTIC & {\tt nati@ttic.edu}\\[1mm]
\multicolumn{3}{c}{{Collaboration on the Theoretical Foundations of Deep Learning} (\href{https://deepfoundations.ai/}{deepfoundations.ai})}
\end{tabular*}
\vspace{5mm}
%\today\\[4mm] % This is not needed---arXiv adds a date
\end{center}

\begin{abstract}
%Abstracts must be a single paragraph, ideally between 4--6 sentences long. Gross violations will trigger corrections at the camera-ready phase.
We study the relative power of learning with gradient descent on differentiable models, such as neural networks, versus using the corresponding tangent kernels.
We show that under certain conditions, gradient descent achieves small error only if a related tangent kernel method achieves a non-trivial advantage over random guessing (a.k.a.\ weak learning), though this advantage might be very small even when gradient descent can achieve arbitrarily high accuracy.
Complementing this, we show that without these conditions, gradient descent can in fact learn with small error even when no kernel method, in particular using the tangent kernel, can achieve a non-trivial advantage over random guessing.
\end{abstract}

\section{Introduction}\label{sec:intro}

A recent line of research seeks to understand Neural Networks through their kernel approximation, as given by the Neural Tangent Kernel \citep[NTK,][]{jacot18ntk}. The premise of the approach is that in certain regimes, the dynamics of training neural networks are essentially the same as those of its first order Taylor expansion at initialization, which in turn is captured by the Tangent Kernel at initialization.  It is then possible to obtain convergence, global optimality and even generalization guarantees by studying the behaviour of training using the Tangent Kernel \citep[e.g.][and many others]{li2018learning,
du2018provably,
chizat2018note,
%allen2018convergence,
zou20gradient,allen2018learning,arora2019fine,
%allenzhu19convergence,
du19global%,zou20gradient
}.
%\nati{Please check that these are relevant.  Can also reduce this list}.
Some have also suggested using Tangent Kernel directly for training \citep[e.g.][]{arora19exact}, even suggesting it can sometimes outperform training by GD on the actual network \citep{Geiger_2020}.

Can all the success of deep learning be explained using the NTK?  This would imply that we can replace training by gradient descent on a non-convex model with a (potentially simpler
%\unsure{are kernels really simpler to train? I have heard people say that kernel methods are computationally more expensive than NNs. But maybe just because current hardware is optimized for NNs? Or is it because we need to keep all data points in memory for a kernel method?} 
to train, and certainly better understood) kernel method.  Is anything learnable using gradient descent on a neural network or other differentiable model also learnable using a kernel method (i.e. using a kernelized linear model)?

This question was directly addressed by multiple authors, who showed examples where neural networks trained with gradient descent (or some variant thereof) provably outperform the best that can possibly be done using the tangent kernel or any linear or kernel method, under  different settings and assumptions~\citep{yehudai19power,allenzhu19resnets,allenzhu20backward,li20beyondntk,daniely20parities,ghorbani19linearized, ghorbani20when}. However in these examples, while training the model with gradient descent performs better than using the NTK, the error of the NTK is still much better then baseline, with a significant ``edge'' over random guessing or using a constant, or null, predictor.  That is, the NTK at least allows for ``weak learning''.  In fact, in some of the constructions, the process of leveraging the ``edge'' of a linear or kernel learner and amplifying it is fairly explicit \citep[e.g.][]{allenzhu19resnets,allenzhu20backward}. The question we ask in this paper is:

\begin{center}
{\em Can gradient descent training on the actual deep (non-convex) model only amplify (or ``boost'') the edge of the NTK, with the NTK \textbf{required} to have an edge in order to allow for learning in the first place?  Or can differentiable learning succeed even when the NTK ---or any other kernel--- does not have a non-trivial edge?}
%\nati{I think the grammar is wrong here, but I can't get it right.}\info{PK: Is it fixed now?}\info{I took a stab}
\end{center}
Can gradient descent succeed at ``strong learning'' only when the NTK achieves ''weak learning''? Or is it possible for gradient descent to succeed even when the NTK is not able to achieve any significant edge?

\paragraph{Our Contributions.} The answer turns out to be subtle, and as we shall see, relies crucially on two important considerations: the {\em unbiasedness} of the initialization, and whether we can rely on {\it input distribution knowledge} for the initialization.

We start, in \autoref{sec:separation}, by providing our own example of a learning problem where Gradient Descent outperforms the NTK, and indeed any kernel method.  But unlike the previous recent separating examples, where the NTK enjoys a considerable edge (constant edge, or even near zero error, but with a slower rate than GD), we show that the edge of the NTK, and indeed of any (poly-sized) kernel, can be arbitrarily close to zero while the edge of GD can be arbitrarily large. The edge of the NTK in this example is nonetheless vanishing at low rate, i.e., polynomially, and this leads us to ask whether the NTK must have at least a polynomial edge when GD succeeds.

In \autoref{thm:unbiased} of \autoref{subsec:unbiased-init} we show that when using an {\em unbiased} initialization, that is, where the output of the model is $0$ at initialization, then indeed the NTK must have a non-trivial edge (polynomial in the accuracy and scale of the model) in order for gradient descent \removed{on a differentiable model} to succeed.
%\nati{Add discussion of tightness after we figure it out} \removed{ And even with an unbiased initialization the NTK edge might be very small:\nati{We need to make ``small'' more precise. EA: I added a phrase here} the edge obtained by the NTK can be inverse polynomial, but it is still a vanishing edge, while GD can obtain not only a constant edge but the largest possible edge; see \autoref{sec:separation} for the formal statements.}

The requirement that the initialization is unbiased turns out to be essential:  In \autoref{sec:no-weak-learn} we show an example where gradient descent on a model succeeds, from a biased initialization, even though {\em no} (reasonably sized) kernel method (let alone the NTK) can ensure a non-trivial edge.

But all is not lost with biased initialization.  In \autoref{thm:dist-dependent} of \autoref{subsec:dist-dependent} we show that at least for the square loss, if gradient descent succeeds from any (possibly biased) initialization, then
%\removed{although the Tangent Kernel at initialization might not provide an edge,}
we can construct some alternate random ``initialization'' such that the Tangent Kernel at this random initialization (i.e.~in expectation over the randomness) {\em does} ensure a non-trivial edge.
%Eran
Importantly, this random initialization must depend on knowledge of the input distribution.
Again, our example in \autoref{sec:no-weak-learn} shows that this distribution-dependence is essential.
This also implies a separation between problems learnable by gradient descent using an unbiased vs arbitrary initialization, emphasizing that the initialization being unbiased should not be taken for granted.

This subtle answer that we uncover is mapped in \Cref{tab:results}.\info{The columns corresponding to kernels are in increasing order of generality from left to right. Lower bound on edge holds also for columns on the right; Upper bound on edge holds also for columns on the left.}

%%%%%%%%% RESULTS TABLE %%%%%%%%%%%%
\newcommand{\MR}[2]{\multirow{#1}{*}{#2}}
\newcommand{\MC}[3]{\multicolumn{#1}{#2}{#3}}
\newcommand{\ssl}[1]{\shortstack[l]{#1}}
\newcommand{\ssc}[1]{\shortstack[c]{#1}}
\newcommand{\aligncc}{\renewcommand\cellalign{cc}}
\newcommand{\alignlc}{\renewcommand\cellalign{lc}}

\newcommand{\goodnews}[1]{\textcolor{OliveGreen}{#1}}
\newcommand{\okaynews}[1]{\textcolor{RedOrange}{#1}}
\newcommand{\badnews}[1]{\textcolor{Gred}{#1}}

\newcommand{\mybullet}{{\scriptsize $\blacktriangleright$\ \xspace}}
\begin{table}[t]
\centering \small
\renewcommand{\arraystretch}{1.7}
\setlength\tabcolsep{4pt}
\begin{tabular}{!{\vrule width 1.1pt}l|l!{\vrule width 1.1pt}l|l|c!{\vrule width 1.1pt}}
%\noalign{\hrule height 1.1pt}
%\multicolumn{2}{!{\vrule width 1.1pt}}{\shortstack[l]{\\[2mm] \mbox{}}}
%&
%\multicolumn{3}{c!{\vrule width 1.1pt}}{\shortstack[c]{\\[2mm] GD v.s.\ NTK (or arbitrary Kernel) \\[.5mm]  \\[1mm] \mbox{}}}
%\\
%%%%%%%%%%%%%%%
\noalign{\hrule height 1.1pt}
\MC{2}{!{\vrule width 1.1pt}l!{\vrule width 1.1pt}}{\ssl{\\[2mm] \mbox{}}}
&
\MC{1}{c|}{\ssc{NTK at same\\[.5mm] Initialization\\[1mm] \mbox{}}}
&
\MC{1}{c|}{\ssc{\\[2mm]NTK at alternate\\[.5mm] randomized Initialization\\[1mm] \mbox{}}}
&
\MC{1}{c!{\vrule width 1.1pt}}{\ssc{NTK of arbitrary model \\ or even an arbitrary Kernel \\[0mm] \mbox{}}} \\
\noalign{\hrule height 1.1pt}
%%%%%%%%%%%%%%%
%\noalign{\hrule height 1.1pt}
\MC{2}{!{\vrule width 1.1pt}l!{\vrule width 1.1pt}}{\ssl{\\[1mm] GD with unbiased \\ initialization \\ ($\forall_x f_{\theta_0}(x)=0$) \\ ensures small error\\[2mm] \mbox{} }} &
\ssl{\\\goodnews{\mybullet NTK edge $\ge \poly^{-1}$} \\ \quad (\myref{Thm.}{thm:unbiased})\\ \okaynews{\mybullet NTK edge can be $< \poly^{-1}$} \\ \phantom{\mybullet} \okaynews{while GD reaches 0 loss}  \\ \quad (\autoref{sep:1})}
&
\MC{2}{c!{\vrule width 1.1pt}}{\ssc{\okaynews{Edge with any kernel can be $< \poly^{-1}$} \\ \okaynews{while GD reaches 0 loss} \\(\autoref{sep:2})\\[1mm] \mbox{}}} \\
\hline
%%%%%%%%%%%%%%%
\multirow{2}{*}{\ssl{\\[-9mm]GD with \\arbitrary\\init.\\ensures\\small\\error}}
&
\ssl{Kernel (or\\alt init)\\can depend\\on input\\dist.~$\calD_\calX$} & \MC{1}{c|}{\multirow{2}{*}{\ssc{\badnews{NTK edge can be $= 0$} \\
\badnews{while GD reaches arb.\ low loss}  \\(\autoref{sep:3})}}}
&
\ssl{\\\goodnews{\mybullet NTK edge $\ge \poly^{-1}$} \\ \quad (\myref{Thm.}{thm:dist-dependent})\\ \okaynews{\mybullet NTK edge can be $< \poly^{-1}$} \\ \phantom{\mybullet} \okaynews{while GD reaches 0 loss} \\ \quad (\autoref{sep:2})}
&
\ssc{\\ \okaynews{Edge can be $< \poly^{-1}$} \\ \okaynews{while GD reaches 0 loss} \\ (\autoref{sep:2}) \\[1mm] \mbox{}}
\\
\cline{2-2}\cline{4-5}
& \ssl{\\[1mm]Dist-indep\\ kernels\\[1mm] \mbox{}}
& &
\MC{2}{c!{\vrule width 1.1pt}}{\ssc{\\[1mm] \badnews{edge with any kernel can be $< \exp^{-1}$}\\\badnews{while GD reaches arb.\ low loss} \\  (\autoref{sep:4})\\[0mm] \mbox{}}}  \\
\noalign{\hrule height 1.1pt}
\end{tabular}
\caption{Our Results at a glance: What does learnability with Gradient Descent imply, and does not imply, on the edge over null prediction that the Neural Tangent Kernel (NTK), or some other kernel, must have?  The results provide a complete picture (up to polynomial differences) of how large an edge is ensured in each scenario.}
\label{tab:results}
\end{table}

\section{Differentiable Learning and Tangent Kernels}\label{sec:prelims}
We consider learning a predictor $f:\calX \to \bbR$ over an {\em input space} $\calX$, so as to minimize its {\em population loss} $\calL_\calD(f) := \mean{(x,y) \sim \calD}{\ell(f(x), y)}$ with respect to a {\em source distribution} $\calD$ over $\calX \times \calY$, where \removed{$\calY$ a {\em label space} and} $\ell:\bbR \times \calY \to \bbR_{\ge 0}$ is a loss function.  We denote by $\ell'$ and $\ell''$ the derivatives of $\ell$ w.r.t.~its first argument.  Some of our guarantees apply to any smooth loss, i.e., $\abs{\ell''}:=\sup_{\what{y},y}\abs{\ell''(\what{y},y)}<\infty$, while others are specific to the square loss $\sqloss(\what{y},y)=\frac{1}{2}(\what{y}-y)^2$ with binary labels $\calY = \sbit$.  Our lower bounds and separation results are all stated for the square loss with binary labels (they could be adapted to other loss functions, but in order to demonstrate a separation, it is sufficient to provide a specific example).  To understand whether a predictor gives any benefit, we discuss the error of a predictor compared to that of ``null prediction'', i.e.~predicting $0$ on all inputs.  For the square loss with binary labels, the error of null prediction is $\calL_{\calD}(0)=0.5$, and  so we refer to the improvement $0.5-\calL_\calD(f)$ as the ``edge'' of the predictor $f$. That is, if $\calL_\calD(f)=0.5-\gamma$, we say that $f$ has an edge of $\gamma$ (over null prediction).

\paragraph{Differentiable Learning.}\label{setup}%\label{subsec:diff-learn}
We study learning by (approximate) gradient descent on a differentiable model, such as a neural network or other parametric model.  More formally, a {\em differentiable model} of size $p$ is a mapping $f:\bbR^p \times \calX \to \bbR$, denoted $f_{\theta}(x)$, where $\theta \in \bbR^p$ are the model parameters, or ``weights'', of the model, and the mapping is differentiable w.r.t. $\theta$.  We will control the scale of the model through a bound\footnote{The quantity $C_f$ mixes the scale of the gradients and of the function values.  We use a single quantity in order to minimize notation. Separating these would allow for more consistent scaling.}   $C^2_f := \sup_{\theta, x}(\norm{\nabla_{\theta} f_{\theta}(x)}_2^2+f_\theta^2(x))$ on the norm of the gradients and function values\footnote{%
	ReLU networks do not fit in this framework because (1) they are not differentiable; and (2) their gradients are not bounded.  The first issue is not significant, since we never rely on a bound on the second derivatives of the model (only the loss), and so we can either approximate the ReLU with an arbitrarily sharp approximation, or use some relaxed notion of a ``gradient'', as is actually done informally when discussing gradients of ReLU networks. Our results do rely on a bound on the gradient norms, which for ReLU networks depends on the scale of the weights $\theta$ (formally, it is sufficient to consider the model restricted to some large enough norm ball of weights which we will never exit).
	By truncating the model outside the scale of weights we would need to consider (in the analysis only), it is sufficient to take the supremum in the definition of $C_f$ only over weight settings explored by gradient descent, and so for ReLU networks $C_f$ would scale with this scale of $\theta$.%
}.%\footnote{ReLU networks do not formally fit this framework; see \autoref{foot2} in the Appendix on how this can be incorporated.}\nati{I think we can move the remark back to a (long) footnote.}

For a differentiable model $f_\theta(x)$ and an {\em initialization} $\theta_0\in\bbR^p$, gradient accuracy $\tau$, and stepsize\footnote{The stepsize doesn't play an important role in our analysis. We can also allow variable or adaptive stepsize sequences---for simplicity of presentation we stick with a fixed stepsize.} $\eta$, {\bf\boldmath $\tau$-approximate gradient descent training} is given by iterates of the form:
\begin{align}
    \theta^{(0)} &~\gets~ \theta_0 \notag\\
    \theta^{(t+1)} &~\gets~ \theta^{(t)} - \eta g_{t} \label{eq:gd} \\
    \omit\rlap{\hspace{-7mm}for $\norm{g_t - \nabla_\theta \calL_\calD(f_{\theta^{(t)}})}_2 \leq \tau$} \label{eq:gt} 
\end{align}
The gradient estimates $g_t$ can be obtained by computing the gradient on an empirical sample of $m$ training examples drawn from $\calD$, in which case we can ensure accuracy $\tau\propto  C_f/\sqrt{m}$.  And so, we can think of $C^2_f/\tau^2$ as capturing the sample size, and when we refer to the relative accuracy $\tau/C_f$ being polynomial, one might think of the sample size used to estimate the gradients being polynomial. But here we only assume the gradient estimates $g_t$ are good approximations of the true gradients of the population loss, and do not specifically refer to sampling.\footnote{In some regimes, one should in fact distinguish the setting of large sample sets and approximate population gradients in view of the universality result proved for SGD in \cite{abbe2020polytime,abbe2020universality}. We focus here on the approximate setting that better reflects the noisier regime; see discussion in \cite{abbe2020polytime} for  parities.}  We say that $\tau$-approximate gradient descent with model $f_\theta$, initialization $\theta_0$, gradient accuracy $\tau$ and $T$ steps ensures error $\eps$ on a distribution $\calD$ if with any gradient estimates satisfying \eqref{eq:gt}, the gradient descent iterates \eqref{eq:gd} lead to an iterate $\theta^{(T)}$ with population loss $\calL_\calD(f_{\theta^{(T)}}) \leq \eps$.

\paragraph{The Tangent Kernel.}%\label{subsec:ntk}
Consider the first order Taylor expansion of a differentiable model about some $\theta_*\in\bbR^p$:
\begin{align}
    f_{\theta}(x) &\approx f_{\theta_*}(x) + (\theta-\theta_*) \nabla_{\theta} f_{\theta_*}(x) \label{eq:taylor} \\
    & = \inangle{ [ 1 , \theta-\theta_*] , \phi_{\theta_*}(x) } \label{eq:linmodel}\\
    &\!\!\! \text{where } \phi_{\theta_*}(x) = [ f_{\theta_*}(x), \nabla_{\theta} f_{\theta_*}(x) ] \in \bbR^{p+1}\,,\label{eq:phiNTK}
\end{align}
which corresponds to a linear model with the feature map as in \eqref{eq:phiNTK}, and thus can also be represented using the kernel:
\begin{equation}\notag%\label{eq:NTK}  % removed label for typographical reasons
\NTK_{\theta_*}^f(x,x')= f_{\theta_*}(x)f_{\theta_*}(x') + \inangle{\nabla_{\theta} f_{\theta_*}(x), \nabla_{\theta} f_{\theta_*}(x')}.
\end{equation}
In some regimes or model limits \citep[e.g.][]{daniely16toward,jacot18ntk,chizat19lazy},\nati{would be good to add ref before Jacot, I think Daniely and/or Du}\info{PK: Cited a paper of Daniely et al. that Eran pointed out. Anything else?} the approximation \eqref{eq:taylor} about the initialization remains valid throughout training, and so gradient descent on the actual model $f_\theta(x)$ can be approximated by gradient descent on the linear model \eqref{eq:linmodel}, i.e.~a kernalized linear model specified by the tangent kernel $\NTK_{\theta_*}^f$ at initialization.  One can then consider analyzing differentiable learning with the model $f$ using this NTK approximation, or even replacing gradient descent on $f$ with just using the NTK.\nati{Perhaps include references?  We have them in the intro.  Can move them here, or keep in intro.  Not important.} How powerful can such an approximation be relative to the full power of differentiable learning?  Can we expect that anything learnable with differentiable learning is also learnable under the NTK approximation?%Some authors have also suggested learning by directly using this kernalized linear model \cite{Sanjeev,maybeothers}.

To understand the power of a kernalized linear model with some kernel $K$, we should consider predictors realizable with bounded norm in the corresponding feature map (i.e.~norm balls in the corresponding Reproducing Kernel Hilbert Space):
\begin{align}
\calF(K,B)
&:=~ \set{x \mapsto \inangle{w, \phi(x)} : \norm{w}_2 \cdot R \le B} \label{eq:FKB}
\\
&\phantom{:}=~ \set{ f:\calX \to \bbR : \norm{f}_K \cdot R \le B } \label{eq:RKHSball}\\
\text{where } R &:=~ \sup_x \norm{\phi(x)}_2 ~=~ \sup_x \sqrt{K(x,x)} \notag
\end{align}
where $K(x,x')=\inangle{\phi(x),\phi(x')}$ and $\norm{f}_K$ is the $K$-RKHS-norm\footnote{The direct definition \eqref{eq:FKB} is sufficient for our purposes, and so we refrain from getting into the definition of an RKHS and the RKHS norm.  See, e.g.~\citet{smola1998learning}, for a definition and discussion. In \eqref{eq:RKHSball} we take the norm of $f$ to be infinite if $f$ is not in the RKHS.}  of $f$. Predictors in $\calF(K,B)$ can be learned to within error $\eps$ with $O(B^2/\eps^2)$ samples using $O(B^2/\eps^2)$ steps of gradient descent on the kernalized linear model. And since any predictor learned in this way will be in $\calF(K,B)$, showing that there is no low-error predictor in $\calF(K,B)$ establishes the limits of what can be learned using $K$.  We thus study the norm ball of the Tangent Kernel, which, slightly overloading notations, we denote:
\begin{equation}
\NTK^{f}_{\theta}(B) := \calF(\NTK^{f}_{\theta},B) \notag
\end{equation}

\paragraph{Learning Problems.}
For a fixed source distribution $\calD$, there is always a trivial procedure for ``learning'' it, where a good predictor specific to $\calD$ is hard-coded into the ``procedure''.  E.g.,~we can use a kernel with a single feature corresponding to this hard coded predictor.  Instead, in order to capture learning as inferring from data, and be able to state when this is {\em not} possible using some class of methods, e.g.~using kernel methods, we refer to a ``learning problem'' as a family $\calP$ of source distributions over $\calX \times \calY$, and say that a learning procedure learns the problem to within error $\eps$ if for any distribution $\calD \in \calP$ the method learns a predictor with population error at most $\eps$.
%\footnote{See \autoref{foot6} in the Appendix for how this relates to a probabilistic quantifier over the training set.}\nati{I think we can move the (very long) remark back into a (long) footnote, or as a "Remark" at the end of the Section in the main text}
%but the way we define differentiable learning and learning using a kernel there is with no probabilistic quantifier: For differentiable learning, we required that for any $\calD\in\calP$, gradient descent yields error at most $\eps$ using any sequence of gradient estimates satisfying \eqref{eq:gt} (this happens with high probability if we use $m=O(B^2/\tau^2)$ to obtain the gradient estimates, but we do not get into this in our results).  For kernel learning, we require that for any $\calD\in\calP$ there exists a predictor in $\calF(K,B)$ with error at most $\eps$, as these are the predictors that can be learned (with high probability) using the kernel method (but we do not get into the exact learning procedure).}  $\eps$.\nati{Do we want to get into the following sentences about the lack of probabilistic quantifiers?
%\unsure{PK: Actually, we don't want to restrict ourselves to deterministic kernels, right?  Nati: I'm rephrasing to limit to randomness in the training set.}

We consider learning both when the {\em input distribution}, i.e.~the marginal distribution $\calD_\calX$ over $\calX$, is known (or fixed) and when it is unknown.  {\bf Distribution dependent learning} refers to learning when the input distribution is either fixed (i.e.~all distributions $\calD\in\calP$ have the same marginal $\calD_\calX$), or when the model or kernel is allowed to be chosen based on the input distribution $\calD_\calX$, as might be the case when using unlabeled examples to construct a kernel.  In {\bf distribution independent learning}, we seek a single model, or kernel, that ensures small error for all source distributions in $\calP$, even though they might have different marginals $\calD_\calX$.\\[-3mm]

\noindent {\em Remark.} Typically, we would have a probabilistic quantifier (e.g.~in expectation, or with high probability) over sampling the training set, but we define differentiable learning, and learning using a kernel, without any probabilistic quantifier: For differentiable learning, we required that for any $\calD\in\calP$, gradient descent yields error at most $\eps$ using any sequence of gradient estimates satisfying \eqref{eq:gt} (this happens with high probability if we use $m=O(B^2/\tau^2)$ to obtain the gradient estimates, but we do not get into this in our results).  For kernel learning, we require that for any $\calD\in\calP$ there exists a predictor in $\calF(K,B)$ with error at most $\eps$, as these are the predictors that can be learned (with high probability) using the kernel method (but we do not get into the exact learning procedure).

\section{Gradient Descent Outperforms the NTK}\label{sec:separation}

In this section, we exhibit a simple example of a learning problem for which (i) approximate gradient descent on a differentiable model of size $p$ ensures arbitrarily small (in fact zero) error, whereas (ii) the tangent kernel for this model, or in fact any reasonably sized kernel, cannot ensure error better than  $0.5-\gamma$, for arbitrarily small $\gamma$, where recall $0.5$ is the error of the null prediction and $\gamma$ depends polynomially on the parameters of the model and of gradient descent.
%, where $\gamma=\Theta(p \tau)$ and can be made arbitrarily small when $\tau$ .
%%The previous result shows that when considering weak learning, GD with unbiased initialization is not stronger than simply using the NTK of the differentialbe model. However, we now show that when considering strong learning, i.e. - learning up to arbitrarily small error, there is a gap between learning with GD and learning with the NTK.
%This gap is maximal in the sense that it shows that there are problems that can be learned using GD to loss 0, but no NTK (and in fact no kernel method) can learn them beyond $\calL_\calD(0)-\gamma$ loss where $\gamma=n^{-\Omega(1)}$ (i.e., NTK gives only weak learning).

Several authors have already demonstrated a range of examples where gradient descent ensures smaller error than can be ensured by any reasonably sized kernel, including the tangent kernel \citep{yehudai19power,ghorbani19limitations,ghorbani19linearized,allenzhu19resnets,allenzhu20backward,li20beyondntk,daniely20parities}. In \autoref{sec:survey} and \Cref{tab:prior-sep} we survey these papers in detail and summarize the separations they establish.  Here, we provide a concrete, self-contained and simple example for completeness, so that we can explicitly quantify the edge a kernel method might have. Our emphasis is on showing that the error for kernel methods is not just worse than gradient descent, but in fact not much better than null prediction---this is in contrast to prior separations, where the error possible using a kernel is either some constant between the error of null prediction and zero error, or more frequently, close to zero error, but not as close as the error attained by gradient descent (see \autoref{sec:survey} for details).  In our example, we also pay attention to whether the model's predictions at initialization are zero---a property that, as we will later see, plays a crucial role in our analysis.

\paragraph{\boldmath The Learning Problem.} We consider the problem of learning $k$-sparse parities over $n$ biased bits, i.e.~where the marginal distribution of each bit (i.e.~coordinate) is non-uniform.  In order to easily obtain lower bounds for any kernel (not just the tangent kernel), we let the input distribution be a mixture of independent biased bits and a uniform distribution, so that we can argue that no kernel can do well on the uniform component, and hence bound its error also on the mixture.  The key is that when $k$ is moderate, up to~$k=O(\log n)$, due to the bits being biased, a linear predictor based on all the bits in the parity has enough of an edge to be detected. This does give the tangent kernel a small edge over a trivial predictor, but this is the best that can be done with the tangent kernel. However, in a differentiable model, once this linear predictor is picked up, its support can be leveraged to output a parity of these bits, instead of their sum, thereby obtaining zero error.

Formally, for integers $2\leq k \leq n$ and for $\alpha\in(0,1)$, we consider the ``biased sparse parities'' problem $\Pbsp[n,k,\alpha]$ over $\calX = \sbit^n$ and $\calY = \sbit$, and learning w.r.t.~the square loss.
%\nati{Should we perhaps give the learning problems here and in Section 5 more unique names, and perhaps also the models?  E.g. $\calP^{\textrm{bsp}}[n,k,\alpha]$, $\calP^{\textrm{lp}}[n,\alpha]$, $f^{\textrm{bsp}}$ and $f^{\textrm{lp}}$?}\info{PK: I prefer to leave it as is, as it would make the notation more cumbersome.}
The problem consists of $\binom{n}{k}$ distributions over $\calX \times \calY$, each corresponding to a subset $I \subseteq [n]$ with $|I| = k$. The distribution $\calD_I$ is defined by the following sampling procedure:

\begin{itemize}%[leftmargin=*]
\item Let $\calD_0$ be the uniform distribution over $\sbit^n$ and let $\calD_1$ be the product distribution over $\sbit^n$ with $\Ex{x_i} = \frac{1}{2}$.
Sample\footnote{The uniform component in the mixture is used to facilitate the establishment of a lower bound for kernels by directly relying on Lemma~5 in \cite{kamath20approximate}. One could also directly lower bound the kernel error of a biased input distribution without the uniform component, by going beyond a standard application of \cite{kamath20approximate}; see Remark \ref{remove_mix}.} $x \sim \calD_\calX := (1-\alpha) \calD_0 + \alpha \calD_1$.
\item Set $y \gets \chi_I(x) := \prod_{i \in I} x_i$, the parity over the subset $I$.
\end{itemize}

\paragraph{\boldmath The differentiable model.} To learn $\Pbsp[n,k,\alpha]$, we construct a differentiable model that is a combination of a linear predictor $\inangle{\theta,x}$ and a ``selected-parity'', which outputs the parity of the subset of bits indicated by the (essential) support of $\theta$, that is, $\prod_{\abs{\theta_i} \ge \nu}x_i$ (for a suitable threshold $\nu$).
Importantly, both use the same weights $\theta$ (see \autoref{fig:separation-net}).  The weak edge of the linear predictor allows gradient descent to learn a parameter vector $\theta$ such that $\set{i \mid \abs{\theta_i} \ge \nu }=I$, and once this is learned, the ``selected-parity'' kicks in and outputs a perfect predictor.

Formally, we construct a differentiable model $f_{\theta}(x)$ with $\theta \in \bbR^n$ (i.e. size $p=n$) that behaves as follows
\begin{equation}\label{eq:f-theta-sec3}
\forall \theta \in \inparen{\textstyle \insquare{-\frac2n,\frac{2}{n}} \cup \insquare{\frac{3}{n},\frac{5}{n}}}^n \quad : \quad
f_{\theta}(x)
\begin{cases}
	\approx 2 \inangle{\theta,x} & \text{if } \theta \approx 0\\
	= \prod_{i\,:\,\theta_i \ge \frac3n} x_i & \text{if } \exists i \,:\, \theta_i \ge \frac{3}{n}
\end{cases},
\end{equation}
where $\approx$ stands for the first order approximation of $f_{\theta}$ about $\theta=0$.
The behaviour \eqref{eq:f-theta-sec3} is the {\em only} property we need to show that approximate gradient descent learns $\Pbsp[n,k,\alpha]$: as formalized in \autoref{clm:gd-upper-bound-1}, a single step of gradient descent starting at $\theta^{(0)}=0$ will leave $\theta^{(1)}_i \in \insquare{-\frac2n,\frac{2}{n}}$ for $i\not\in I$, while increasing $\theta^{(1)}_i \geq \frac{3}{n}$ for $i\in I$, thus yielding the correct parity.  In what follows we show how to implement $f_{\theta}$ as a continuous differentiable model with scale $C_f=O(n)$, and furthermore how this can implemented as a feed-forward neural network with piece-wise quadratic sigmoidal activations:\\[-0.5cm]
%The model is intended to be used with accuracy $\tau=\alpha/2^{k}$ and we will describe $f$ in terms of this $\tau$.\nati{Since at the end of the day, we only care about $C_f/\tau$, it would be nicer to scale the model so that $C_f$ is independent of $\tau$ (and $\alpha$), and to learn we'll use something like $\tau=1/\alpha^3$.  But unless this is very easy to change, I wouldn't worry about it until after the deadline}\info{PK: We can actually make $f$ independent of $\alpha$, $\tau$, even $k$ by subtracting $\inangle{\theta,x}$ in $\calG$. Would make the construction look more complicated though. Should we do it? We can then make $C_f^2 = n+1$.}
%We will use the following piece-wise quadratic sigmoidal activation.\\[-0.5cm]
\begin{center}
\begin{tikzpicture}
\node at (-7,0.65) {$\sigma(z) := \begin{cases}
	0 & z < 0 \\
	2z^2 & z \in [0,\frac{1}{2}] \\
	4z - 2z^2 - 1 & z \in [\frac{1}{2}, 1] \\
	1 & z > 1
\end{cases}$};
\def\scle{1.1}
\tikzstyle{myplot} = [scale=\scle, variable=\x, Gred, line width=1pt]
\draw[-latex,black!50] (-1.3*\scle, 0) -- (2.5*\scle, 0) node[right] {\footnotesize $z$};
\draw[-latex,black!50] (0, -0.3*\scle) -- (0, 1.3*\scle) node[above] {\footnotesize $\sigma(z)$};
\node[below left] at (0,0) {\tiny $0$};
\draw[dotted] (1*\scle,1*\scle) -- (0*\scle,1*\scle) node[left] {\tiny $1$};
\draw[dotted] (1*\scle,1*\scle) -- (1*\scle,0*\scle) node[below] {\tiny $1$};

\draw[myplot, domain=-1:0]  plot ({\x}, {0});
\draw[myplot, domain=0:0.5] plot ({\x}, {2*\x*\x});
\draw[myplot, domain=0.5:1] plot ({\x}, {-2*\x*\x + 4*\x - 1});
\draw[myplot, domain=1:2]   plot ({\x}, {1});
\end{tikzpicture}
\end{center}
\vspace{-2mm}
The model $f_{\theta}$, illustrated in \autoref{fig:separation-net}, is defined below (where $\theta \circ x := (\theta_1 x_1, \ldots, \theta_n x_n)$).
\begin{align}
f_\theta(x) &~:=~
\sigma_{-1,1}^{-1,1} \inparen{\inner{\theta, x} + \calG(\theta \circ x)} \label{eq:def-f-1}\\
\calG(z) &~:=~ \textstyle S(\xi(z)) \cdot \inparen{H(\xi(z)) - {\sum_i z_i}} \label{eq:def-f-2}\\
S(s) &~:=~ \textstyle 1-\prod_{i=1}^n (1-s_i^2)\label{eq:def-f-3}\\
H(s) &~:=~ \textstyle \prod_{i=1}^n (1+s_i-s_i^2)\label{eq:def-f-3.5}\\
\xi(z)_i &~:=~ \sigma(n z_i - 2) - \sigma(- n z_i - 2) \label{eq:def-f-4}
\end{align}

\begin{wrapfigure}{R}{0.32\textwidth}
\centering
\begin{tikzpicture}
\def\scle{0.9}
\tikzstyle{myplot} = [scale=\scle, variable=\x, Gred, line width=1pt]
\draw[-latex,black!50] (-3*\scle, 0) -- (3*\scle, 0) node[right] {\footnotesize $z$};
\draw[-latex,black!50] (0, -1.3*\scle) -- (0, 1.3*\scle) node[above] {\footnotesize $\xi(z)$};
\node[below left] at (0,0) {\tiny $0$};
\draw[dotted] (2*\scle,1*\scle) -- (0*\scle,1*\scle) node[left] {\tiny $1$};
\draw[dotted] (-2*\scle,-1*\scle) -- (0*\scle,-1*\scle) node[right] {\tiny $-1$};
\draw[myplot, domain=-1:0]  plot ({\x+1}, {0});
\draw[myplot, domain=0:0.25] plot ({\x+1}, {8*\x*\x});
\draw[myplot, domain=0.25:0.5] plot ({\x+1}, {-8*\x*\x + 8*\x - 1});
\draw[myplot, domain=0.5:1.3]   plot ({\x+1}, {1});
\draw[myplot, domain=-1:0]  plot ({-\x-1}, {0});
\draw[myplot, domain=0:0.25] plot ({-\x-1}, {-8*\x*\x});
\draw[myplot, domain=0.25:0.5] plot ({-\x-1}, {8*\x*\x - 8*\x + 1});
\draw[myplot, domain=0.5:1.3]   plot ({-\x-1}, {-1});

\draw (1*\scle,-0.05*\scle) edge (1*\scle,0.05*\scle);
\draw[dotted] (1.5*\scle,0*\scle) edge (1.5*\scle,1*\scle);
\draw (-1*\scle,-0.05*\scle) edge (-1*\scle,0.05*\scle);
\draw[dotted] (-1.5*\scle,-1*\scle) edge (-1.5*\scle,0*\scle);

\node[below] at (1*\scle, 0)     {\tiny $\frac{2}{n}$};
\node[below] at (1.5*\scle, 0)   {\tiny $\frac{3}{n}$};
\node[above] at (-1*\scle+0.1, 0)   {\tiny $-\frac{2}{n}$};
\node[above] at (-1.5*\scle-0.1, 0) {\tiny $-\frac{3}{n}$};
\end{tikzpicture}
\caption{\label{fig:xi(z)}Visualization of $\xi(z)_i$.}
\end{wrapfigure}
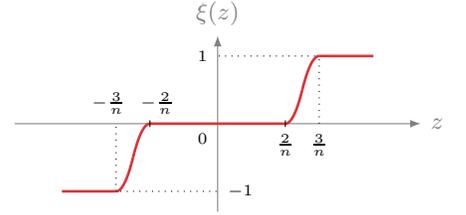

\noindent where $\sigma_{a,b}^{c,d}(z) := c + \sigma\left(\frac{z-a}{b-a} \right)(d-c)$, defined for all $a < b$ and $c, d$, satisfies (i) $\sigma_{a,b}^{c,d}(z) = c$ for every $z \le a$, (ii) $\sigma_{a,b}^{c,d}(z) = d$ for every $z \ge b$ and (iii) $\abs{\frac{d}{dz}\sigma_{a,b}^{c,d}(z)} \le \frac{2\abs{d-c}}{b-a}$. We visualize $\xi$, which is a (coordinate-wise) soft implementation of the $\sign(\cdot)$ function, in \autoref{fig:xi(z)}.

The intuition for $\calG$ is as follows. In the relevant regime of $\theta \in \inparen{[-\frac{2}{n},\frac{2}{n}] \cup [\frac{3}{n},\frac{5}{n}]}^n$ and any $x \in \sbit^n$ we have that $s = \xi(\theta \circ x) \in \set{-1,0,1}^n$, with $s_i = x_i$ if $\theta_i \in [\frac3n, \frac5n]$ and $s_i=0$ if $\theta_i \in [-\frac2n, \frac2n]$. In this regime of $\theta$, we have $S(s)= \ind\set{s \ne 0}$ and $H(s) = \prod_{i: s_i \ne 0} s_i$.
%The intuition for $H$ is that for $s \in \set{-1,0,1}^n$, we have $H(s) = 0$ when $s=0$ and $H(s) = \prod_{i : s_i \ne 0} s_i$ when $s \ne 0$.
%The intuition for $\calG$ then is as follows.
Thus, when $\theta_i \in [-\frac2n,\frac{2}{n}]$ for all $i$, we have $S(\xi(\theta\circ x)) = 0$ and hence $\calG(\theta \circ x) = 0$ for all $x \in \sbit^n$. On the other hand, when $\theta_i \in [\frac{3}{n}, \frac5n]$ for some $i$, we have $S(\xi(\theta\circ x)) = 1$ and hence $\calG(\theta \circ x) = \prod_{i\,:\,\theta_i \ge \frac{3}{n}} x_i - \inangle{\theta,x}$ for all $x \in \sbit^n$. This gives us \eqref{eq:f-theta-sec3}, by noting that $\sigma_{-1,1}^{-1,1}(z) \approx 2z$ at $z\approx 0$ (first order approximation) and $\sigma_{-1,1}^{-1,1}(z) = \sign(z)$ when $|z| \ge 1$.

Furthermore, we show how to implement $S$, $H$ (and hence $\mathcal{G}$) as a neural network using the activation $\sigma$. This is based on observing that we can compute squares in a bounded range by exploiting the quadratic part of the nonlinearity, i.e.,  $z^2=2r^2 \inparen{\sigma(z/2r)+\sigma(-z/2r)}$ for $z \in [-r,+r]$. The $n$-term products in \eqref{eq:def-f-3} and \eqref{eq:def-f-3.5} can be implemented with subnetworks of depth $O(\log n)$ and width $O(n)$ that recursively computes products of pairs, using $uv =\frac{1}{2} ((u+v)^2-u^2-v^2)$; if $u, v \in [-1,1]$, we only need to compute squares in the range $[-2,2]$. There is a slight issue in computing the final product because $H(\xi(\theta \circ x)) - \inangle{\theta,x}$ can be unbounded. However, $\inangle{\theta,x} \le 5$ in the relevant regime of $\theta$ and so we can correctly compute the final product in this regime; the product is not implemented correctly outside the relevant regime, but that doesn't affect the learning algorithm. The overall network thus has depth $O(\log n)$ and $O(n)$ edges.

We can also calculate that for any $i \in [n]$, $\theta$ and $x \in \sbit^n$, it holds that $\lvert \frac{\partial}{\partial\theta_i} f_{\theta}(x)\rvert \le O(n)$ and $|f_{\theta}(x)| \le 1$. Thus, we get that $C^2_f = \sup_{\theta, x} \|\nabla_{\theta} f_{\theta}(x)\|^2 + f_\theta(x)^2 \le O(n^2)$.

\begin{figure}
\begin{center}
%!TEX root = main.tex
\begin{tikzpicture}[scale=1,transform shape]
\tikzset{
	var/.style   = {circle, line width=1pt, draw=Ggreen, fill=Ggreen!20, minimum size=7mm, inner sep=2pt},
	theta/.style = {rectangle, line width=1pt, rounded corners=3pt, minimum height=4mm, minimum width=4mm, inner sep=0pt, draw=myGold, fill=myGold!20},
	gate/.style  = {circle, line width=1pt, draw=myBlue, fill=myBlue!20, inner sep=2pt},
	sig/.style   = {circle, line width=1pt, draw=myPurple, fill=myPurple!20, minimum size=5mm},
	mininode/.style   = {circle, line width=0.5pt, draw=black, fill=black!20, minimum size=1.7mm, inner sep=0pt, outer sep=0pt}
}

\def\xscle{1}
\def\xgap{0.75}
\node[var] (x1) at (-4*\xgap,0) {\small $x_1$};
\node[var] (x2) at (-2*\xgap,0) {\small $x_2$};
\node (xdots)   at (0,0) {\huge{$\cdots$}};
\node[var] (xn) at (2*\xgap,0) {\small $x_n$};

\def\ygap{1}
\def\yone{\ygap*1.3}
\node[theta] (t1) at (-4*\xgap,\yone) {};
\node[theta] (t2) at (-2*\xgap,\yone) {};
\node (tdots)     at (0*\xgap,\yone) {\huge{$\cdots$}};
\node[theta] (tn) at (2*\xgap,\yone) {};

\def\ytwo{3*\ygap}
\node[rectangle, rounded corners=3pt, draw=Gred, fill=Gred!20, minimum width=2.7cm, minimum height=1.2cm, line width=1pt] (G) at (-1.9,\ytwo) {\LARGE $\qquad\quad\,\calG$};

%% Mini Net
\def\xm{-0.1}
\def\ym{-0.4}
\node[mininode] (i1) at ($(G)+(\xm-0.2,\ym)$) {};
\node[mininode] (i2) at ($(G)+(\xm+0.2,\ym)$) {};
\def\ym{0}
\node[mininode] (g1) at ($(G)+(\xm-0.54,\ym)$) {};
\node[mininode] (g2) at ($(G)+(\xm-0.18,\ym)$) {};
\node[mininode] (g3) at ($(G)+(\xm+0.18,\ym)$) {};
\node[mininode] (g4) at ($(G)+(\xm+0.54,\ym)$) {};
\def\ym{0.4}
\node[mininode] (o)  at ($(G)+(\xm,\ym)$) {};

\path[-latex,line width=0.3pt]
(i1) edge (g1)
(i1) edge (g2)
(i1) edge (g3)
(i2) edge (g2)
(i2) edge (g3)
(i2) edge (g4)
(g1) edge (o)
(g2) edge (o)
(g3) edge (o)
(g4) edge (o);
%% End of Mini Net

\node[gate] (p1) at (1, \ytwo) {\small{\boldmath$+$}};
\def\ythree{4.3*\ygap}
\node[gate] (p2) at (-0.5, \ythree) {\small{\boldmath$+$}};
\def\yfour{5.4*\ygap}
\node[sig] (sig) at (-0.5, \yfour) {{\boldmath $\sigma$}};

\path[-{Stealth},line width=0.5pt]
(x1) edge node[midway, left] {{\boldmath $\theta_1$}} (t1)
(x2) edge node[midway, left] {{\boldmath $\theta_2$}} (t2)
(xn) edge node[midway, left] {{\boldmath $\theta_n$}} (tn)
(t1) edge[bend right=5] (G)
(t2) edge (G)
(tn) edge[bend left=15] (G)
(t1) edge[bend right=5] (p1)
(t2) edge[bend right=18] (p1)
(tn) edge (p1)
(G) edge (p2)
(p1) edge (p2)
(p2) edge (sig)
(sig) edge ($(sig.north)+(0,0.4)$);
\end{tikzpicture}
\end{center}
\caption{Schematic diagram for the construction of $f_\theta$, as used in \autoref{clm:gd-upper-bound-1}, with only trainable parameters being $\theta_1, \theta_2, \ldots, \theta_n$.  The sub-network $\calG$ is a fixed module implementing the ``selected-parity'' function.}
\label{fig:separation-net}
\end{figure}
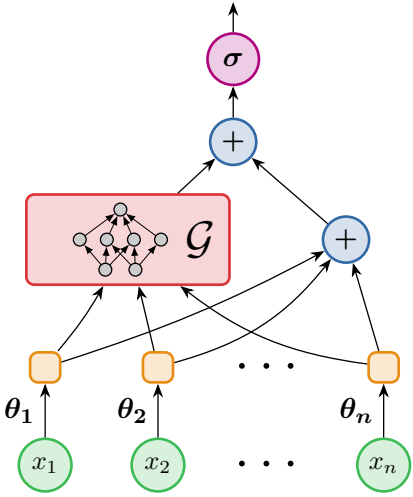

The following Claim (proved in \Cref{apx:neural-net-ub-1}) formalizes how a single step of gradient descent is sufficient for $\theta$ to be away from zero on $I$, and hence for the network to output the correct labels.

\begin{restatable}{claim}{gdubone}\label{clm:gd-upper-bound-1}
For any $n$, $k$ and $\alpha \in (0,1)$, and any $\calD_I \in \Pbsp[n,k,\alpha]$, $\tau$-approximate gradient descent on the model $f$ of size $p=n$ and $C_f=O(n)$ described above, with %(unbiased)\nati{We still didn't define what this means.  Not sure where is the right place to mention this}
initialization $\theta_0 = 0$ (at which $\forall_x f_{\theta_0}(x)=0$), accuracy $\tau \le \alpha/2^{k}$, step size $\eta=2^k/(\alpha n)$ and $T=1$ step ensures $\calL_{\calD_I}(f_{\theta^{(T)}}) = 0$. In particular, if $k\leq\log n$ then an accuracy of $\tau \le \alpha/n$ is sufficient.
\end{restatable}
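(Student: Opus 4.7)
The plan is to reduce the claim to a direct one-step gradient computation: I would show that the true population gradient at $\theta_0 = 0$ already has a clean, well-separated signal on the coordinates of $I$, and that the chosen step size and $\tau$-tolerance preserve this separation in $\theta^{(1)}$ well enough to trigger the ``selected-parity'' branch of \eqref{eq:f-theta-sec3}.

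First I would exploit the defining property \eqref{eq:f-theta-sec3} at $\theta = 0$. The first-order approximation $f_\theta(x) \approx 2\langle \theta, x\rangle$ gives $f_0(x) = 0$ and $\nabla_\theta f_0(x) = 2x$ (one can verify independently that the contribution of $\calG(\theta\circ x)$ to the gradient at $\theta=0$ vanishes, because $\xi(0) = 0$ and $S$ depends on $\xi$ only through $\prod_j(1-s_j^2)$, so $\partial_i S$ vanishes at $s=0$). Consequently,
\begin{equation*}
\nabla_\theta \calL_{\calD_I}(f_0) \;=\; \mean{(x,y)\sim\calD_I}{(f_0(x)-y)\,\nabla_\theta f_0(x)} \;=\; -2\,\mean{(x,y)\sim\calD_I}{y\,x}.
\end{equation*}

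Second, I would compute $\E[y x_i] = \E_{\calD_\calX}[\chi_I(x) x_i]$ separately on the two mixture components. Under the uniform part $\calD_0$ this expectation vanishes for all $i$, since $|I|\ge 2$ implies $I\triangle\{i\}\ne \emptyset$ (so the relevant Fourier character is nontrivial). Under the biased product $\calD_1$ with $\E[x_j] = \tfrac12$, a short independence calculation gives $\E_{\calD_1}[\chi_I(x)x_i] = 2^{-(k-1)}$ if $i\in I$ and $2^{-(k+1)}$ if $i\notin I$. Hence
\begin{equation*}
-\nabla_i \calL_{\calD_I}(f_0) \;=\; \begin{cases}\alpha\cdot 2^{2-k} & i\in I,\\ \alpha\cdot 2^{-k} & i\notin I.\end{cases}
\end{equation*}

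Third, I would carry out the one-step update. The approximate-gradient condition $\|g_0 - \nabla\calL\|_2 \le \tau$ implies $|g_{0,i} - \nabla_i| \le \tau$ coordinate-wise, so $|\eta (g_{0,i}-\nabla_i)| \le \eta\tau \le (2^k/(\alpha n))(\alpha/2^k) = 1/n$. Combining this with the exact gradient values above yields
\begin{equation*}
\theta^{(1)}_i \;\ge\; \eta\cdot\alpha\cdot 2^{2-k} - \eta\tau \;=\; \tfrac{4}{n} - \tfrac{1}{n} \;=\; \tfrac{3}{n} \quad (i\in I), \qquad |\theta^{(1)}_i| \;\le\; \tfrac{1}{n} + \tfrac{1}{n} \;=\; \tfrac{2}{n} \quad (i\notin I).
\end{equation*}
By \eqref{eq:f-theta-sec3}, this places $\theta^{(1)}$ in the ``selected-parity'' regime with selected set exactly $I$, so $f_{\theta^{(1)}}(x) = \prod_{i\in I} x_i = y$ almost surely under $\calD_I$ and $\calL_{\calD_I}(f_{\theta^{(1)}})=0$. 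The second part of the claim follows because $k\le \log n$ gives $2^k\le n$, hence $\alpha/n \le \alpha/2^k$.

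The main obstacle I anticipate is simply justifying cleanly that the gradient contribution from $\calG$ at $\theta=0$ is zero; once that is in hand, everything else is the two-line calculation above. The margin between the signal ($4/n$) and noise ($1/n$ from both the off-support coordinates and from $\tau$-error) is comfortable precisely because the step size was tuned to $2^k/(\alpha n)$, and the $\alpha \cdot 2^{1-k}$ gap in gradient magnitude between $i\in I$ and $i\notin I$ is exactly what a biased product distribution gives for a parity.
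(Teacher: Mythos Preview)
Your proposal is correct and follows essentially the same approach as the paper: compute $\nabla_\theta f_0(x)=2x$ from \eqref{eq:f-theta-sec3}, evaluate the population gradient coordinatewise on the two mixture components to get $-4\alpha/2^k$ on $I$ and $-\alpha/2^k$ off $I$, and check that with $\eta=2^k/(\alpha n)$ and $\tau\le\alpha/2^k$ the update lands in the ``selected-parity'' regime. The only cosmetic differences are that the paper records the full interval $\theta_i^{(1)}\in[\tfrac{3}{n},\tfrac{5}{n}]$ for $i\in I$ (you only wrote the lower bound, but the upper bound $\le \tfrac{5}{n}$ is equally immediate) and $\theta_i^{(1)}\in[0,\tfrac{2}{n}]$ for $i\notin I$ (your triangle-inequality bound $|\theta_i^{(1)}|\le \tfrac{2}{n}$ is slightly looser but still lands in the admissible set of \eqref{eq:f-theta-sec3}).
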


\noindent On the other hand, the next claim (proof in \Cref{apx:tang-kernel-lb-1}), establishes that the tangent kernel at initialization only has a small edge over the null prediction.

%\begin{claim}\nati{Maybe write this without a "claim" envionrment?}
\begin{restatable}{claim}{ntkedge}\label{clm:ntk-edge-1}
The tangent kernel of the model $f$ at $\theta_0=0$ is the scaled linear kernel $\NTK^f_{\theta_0}(x,x')= 4 \inangle{x,x'}$, and for any $2\leq k \leq n$,  $\alpha\in(0,1)$ and $\calD_I\in\Pbsp[n,k,\alpha]$ the error with this kernel is $\inf_{h\in\NTK^f_{\theta_0}(B)} \calL_{\calD_I}(h) \geq \frac{1}{2} - \frac{\alpha}{2} = \frac12 - O\inparen{\frac{n^2 \tau}{C_f}}$ for all $B \ge 0$, where $\tau=\alpha/2^k$ as in \autoref{clm:gd-upper-bound-1}.
On the other hand, $\exists {h\in\NTK^f_{\theta_0}(n)}$ s.t. $\calL_{\calD_I}(h) \leq \frac{1}{2}-\frac{\alpha^2}{2^{2k}}=\frac{1}{2}-\Omega\inparen{\frac{n^2\tau^2}{C_f^2}}$
%\removed{and for $k<\log(n)$, $\exists_{h\in\NTK^f_{\theta_0}(n}$ s.t. $\calL_{\calD_I}(h) > \alpha^2 / n^2$}
\end{restatable}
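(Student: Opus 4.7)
The plan is to prove the three parts of the claim in turn: first compute the tangent kernel at $\theta_0 = 0$ and show it equals $4\langle x, x'\rangle$; then bound the error of every predictor in the corresponding RKHS ball; and finally exhibit a specific predictor realizing the claimed edge.

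For the kernel computation, I would first observe that $f_{\theta_0} \equiv 0$: by construction $\xi(0) = 0$ (since $\sigma(-2) = 0$), so $S(\xi(0)) = 0$, hence $\calG(0) = 0$, and $\sigma_{-1,1}^{-1,1}(0) = 0$ then gives $f_{\theta_0}(x) = 0$ for every $x$. This collapses the $f_{\theta_0}(x)f_{\theta_0}(x')$ term in the NTK. For the gradient, the chain rule yields
\[
\nabla_\theta f_\theta(x)\big|_{\theta=0}\; =\; \bigl(\sigma_{-1,1}^{-1,1}\bigr)'(0)\cdot\Bigl(x + \nabla_\theta \calG(\theta\circ x)\big|_{\theta=0}\Bigr).
\]
Direct computation gives $\bigl(\sigma_{-1,1}^{-1,1}\bigr)'(0) = \sigma'(1/2) = 2$, while the $\nabla_\theta \calG$ term vanishes at $\theta = 0$. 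To see the latter, apply the product rule to $\calG(z) = S(\xi(z))\cdot (H(\xi(z))-\sum_i z_i)$: since $S(0) = 0$, only the derivative of the first factor survives at the origin, and by the chain rule this equals $\nabla_s S(0)\cdot \nabla \xi(0)$ --- each factor of which already vanishes ($S$ is quadratic in $s$ near zero, and $\sigma'(-2) = 0$). Hence $\nabla_\theta f_{\theta_0}(x) = 2x$, and $\NTK^f_{\theta_0}(x,x') = 4\langle x, x'\rangle$.

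For the upper bound on edge, since $f_{\theta_0} \equiv 0$, every $h \in \NTK^f_{\theta_0}(B)$ is a linear function $h(x) = 2\langle v, x\rangle$ with no bias. For $k \geq 2$, the parity $\chi_I$ is Fourier-orthogonal to every coordinate under the uniform distribution, so $\mathbb{E}_{\calD_0}[h(x)\chi_I(x)] = 0$ and expanding the square loss gives $\calL_{\calD_0}(h) = \tfrac{1}{2} + \tfrac{1}{2}\mathbb{E}_{\calD_0}[h^2] \geq \tfrac{1}{2}$. Since the label is the deterministic function $\chi_I$ and the input marginal is a mixture, $\calL_{\calD_I}(h) = (1-\alpha)\calL_{\calD_0}(h) + \alpha\calL_{\calD_1}(h) \geq (1-\alpha)/2$ using $\calL_{\calD_1}(h) \geq 0$, and this bound is independent of $B$.

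For the existence statement, I would take $h(x) = c\sum_{i\in I} x_i$ and optimize $c > 0$. Under $\calD_1$, for $j \in I$ one has $\mathbb{E}_{\calD_1}[x_j \chi_I] = (1/2)^{k-1}$, while under $\calD_0$ this expectation is zero, so $\mathbb{E}_\calD[y h(x)] = \alpha k c/2^{k-1}$. A direct calculation also gives $\mathbb{E}_\calD[h(x)^2] = c^2(k + k(k-1)\alpha/4) = O(c^2 k^2)$. Minimizing $\calL(h) = 1/2 - \mathbb{E}[y h] + \tfrac{1}{2}\mathbb{E}[h^2]$ over $c$ yields an edge of order $\alpha^2/2^{2k}$, and the norm budget $\|v\|_2 \cdot R = (c\sqrt{k}/2)\cdot (2\sqrt{n}) \leq B = n$ is easily met for the optimal $c = \Theta(\alpha/2^k)$. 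The main obstacle is the tangent-kernel computation, specifically verifying carefully that $\nabla_\theta \calG(\theta\circ x)|_{\theta=0}$ vanishes coordinate-wise; once the NTK is identified as the scaled linear kernel, both edge bounds reduce to elementary Fourier and moment calculations on the biased-bit mixture.
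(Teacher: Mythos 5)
Your proposal is correct and follows essentially the same route as the paper's proof: identify $\nabla_\theta f_{\theta_0}(x)=2x$ (so the NTK is $4\langle x,x'\rangle$ with no bias feature since $f_{\theta_0}\equiv 0$), use Fourier orthogonality of $\chi_I$ to linear functions on the uniform component for the $\frac12-\frac{\alpha}{2}$ lower bound, and optimize $h(x)=c\sum_{i\in I}x_i$ with the same moments $\Ex[Z^2]=k+\alpha k(k-1)/4$ and $\Ex[Zy]=\alpha k/2^{k-1}$ for the edge. Your explicit chain-rule verification that $\nabla_\theta\calG(\theta\circ x)|_{\theta=0}=0$ is a more careful rendering of a step the paper delegates to its earlier derivation of the model's first-order behaviour, but it is not a different argument.
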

\noindent We already see that gradient descent can succeed where the tangent kernel at initialization can only ensure an arbitrarily small edge over the error achieved by null prediction, $\calL(0)=1/2$:
\begin{sepres}\label{sep:1}
For any $\gamma>0$, there exists a source distribution (with binary labels and w.r.t.~the square loss), such that
\begin{itemize}[leftmargin=*]
\item {[Gradient Descent]} Using a differentiable model with $p=2$ parameters, $T=1$ steps, and accuracy $\frac{\tau}{C_f}=\Theta(\gamma)$, $\tau$-approximate gradient descent can ensure zero squared-loss, but
\item {[Tangent Kernel]} The tangent kernel of the model at initialization does not ensure square loss lower than $\frac{1}{2}-\gamma=\frac{1}{2}-\Theta(\tau/C_f)$ (compare to the null prediction that always has square loss $\calL(0)=\frac{1}{2}$).
\end{itemize}
Furthermore, the gradient descent algorithm has an initialization $\theta_0$ s.t.~$\forall_x f_{\theta_0}(x)=0$.
\end{sepres}
\begin{proof} Apply Claims \ref{clm:gd-upper-bound-1} and \ref{clm:ntk-edge-1} to the sole source distribution in $\Pbsp[n=2,k=2,\alpha=2\gamma]$, (corresponding to $I=\{1,2\}$).
\end{proof}
%\begin{proof} We can see from \eqref{eqn:unbiased-prop} that the tangent feature map is $\phi_{\theta_0}(x)=x$.  By symmetry, the optimal predictor with this feature map has constant coefficients on coordinates in $I$ and zero outside, i.e.~$h(x)\propto\sum_i x_i$.  Any such predictor has error $0.5$ for $x\sim\calD_0$, which already establishes $\calL_{\calD_I}(h)\geq \frac{1}{2}-\frac{\alpha}{2}$.  The exact expression is obtained by calculating the correlation of $\sum_{i\in I} x_i$ with the labels $y=\prod_i x_i$,
%\end{proof} 
%\vspace{-1mm}
\noindent Even though the tangent kernel at the initialization used by gradient descent might have an arbitrarily small edge, one might ask whether better error can be ensured by the tangent kernel at some other ``initialization'' $\theta$, or perhaps by the tangent kernel of some other model, or perhaps even some other kind of kernel\footnote{For each instance $\calD_I\in\Pbsp[n,k,\alpha]$, there is of course always a point $\theta$ at which the tangent kernel allows for prediction matching that of Gradient Descent, namely the iterate $\theta^{(T)}$ reached by Gradient Descent.  But to {\em learn} using a kernel, the kernel should be chosen without knowing the instance, i.e.~without knowing $I$.}. The following claim shows that this is not the case (proof in \Cref{apx:kernel-lb-2}).

%\paragraph{\boldmath Kernel Methods cannot strongly-learn $\calP$.}

\begin{restatable}{claim}{kerlbone}\label{clm:ker-lower-bound-1}
For all $\alpha < 1/2, k, p, B, n$, and any kernel $K$ corresponding to a $p$-dimensional feature map, there exists $\calD_I \in \Pbsp[n,k,\alpha]$ such that
\[\inf_{h \in \calF(K,B)}\ \calL_{\calD_I}(h) ~\ge~ \calL_{\calD_I}(0)-\frac{\alpha}{2}- \min\set{ \frac{p}{2|\Pbsp|},~ O\inparen{\frac{B^{\nicefrac{2}{3}}}{|\Pbsp|^{\nicefrac{1}{3}}}}}\]
% Additionally,
% \[\textstyle \inf_{h \in \calF(K,B)}\ \calL_{\calD_I}(h) ~\ge~ \calL_{\calD_I}(0)-\frac{\alpha}{2}- O\inparen{\frac{B^2}{\binom{n}{k}}}^{1/3}\]
where $\calL_{\calD_I}$ is w.r.t.~the square loss, and note that $|\Pbsp[n,k,\alpha]| = \binom{n}{k}$.
\end{restatable}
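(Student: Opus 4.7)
The plan is to reduce the claim to bounding the ``edge'' that kernel predictors in $\calF(K,B)$ can achieve under the uniform component $\calD_0$ of the mixture input distribution, and then to obtain the two parts of the minimum separately: one via a dimension-based projection argument, and the other by invoking Lemma~5 of \cite{kamath20approximate}. Since $\calD_\calX = (1-\alpha)\calD_0 + \alpha\calD_1$ and $\chi_I(x)^2 \equiv 1$, we have $\calL_{\calD_I}(0) = \tfrac{1}{2}$ and $\calL_{\calD_I}(h) = (1-\alpha)\calL_{\calD_0}(h) + \alpha\calL_{\calD_1}(h)$, so
\[
\calL_{\calD_I}(0) - \calL_{\calD_I}(h) ~\le~ \bigl[\tfrac{1}{2} - \calL_{\calD_0}(h)\bigr]_+ + \tfrac{\alpha}{2},
\]
using $\calL_{\calD_1}(h) \ge 0$ to control the biased component. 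Since $\{\chi_I\}_{|I|=k}$ is orthonormal in $L^2(\calD_0)$, expanding the square loss yields the identity $\tfrac{1}{2} - \calL_{\calD_0}(h) = \what{h}_0(I) - \tfrac{1}{2}\|h\|_{L^2(\calD_0)}^2$, where $\what{h}_0(I) := \mean{x \sim \calD_0}{h(x)\chi_I(x)}$, so it remains to produce some $I$ making this quantity at most $\min\{p/(2|\Pbsp|),\ O(B^{2/3}/|\Pbsp|^{1/3})\}$ uniformly in $h \in \calF(K,B)$.

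For the dimension term, I would let $V \subseteq L^2(\calD_0)$ be the $p$-dimensional subspace spanned by the feature components of $K$; then $\calF(K,B) \subseteq V$, and for $h \in V$ one has $\what{h}_0(I) = \inangle{h, \Pi_V \chi_I}_{L^2}$. Cauchy--Schwarz followed by optimizing over the $L^2$-scale of $h$ gives, uniformly over $h \in V$, the bound $\what{h}_0(I) - \tfrac{1}{2}\|h\|_{L^2}^2 \le \tfrac{1}{2}\|\Pi_V \chi_I\|_{L^2}^2$. Summing over $I$, Bessel's inequality applied to the orthonormal $\{\chi_I\}$ gives $\sum_I \|\Pi_V \chi_I\|^2 = \mathrm{tr}(\Pi_V) \le p$, so some $I$ satisfies $\|\Pi_V \chi_I\|^2 \le p/|\Pbsp|$, proving the first term. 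For the norm term, I would invoke Lemma~5 of \cite{kamath20approximate} directly, applied to the $L^2(\calD_0)$-orthonormal family $\{\chi_I\}$ of size $|\Pbsp|$ and the kernel ball $\calF(K,B)$: it produces $I$ such that $\sup_{h \in \calF(K,B)}|\what{h}_0(I)| = O(B^{2/3}/|\Pbsp|^{1/3})$, and since $\what{h}_0(I) - \tfrac{1}{2}\|h\|_{L^2}^2 \le \what{h}_0(I)$, this gives the second term.

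The main technical obstacle is the norm-based bound: it rests on a covering/Parseval-type argument on $\calF(K,B)$ that is not entirely routine and is handled by invoking Lemma~5 of \cite{kamath20approximate} as a black box---this is precisely the reason the input distribution was chosen as a mixture with a uniform component, so that the orthonormality of the parities under $\calD_0$ makes that lemma directly applicable. By contrast, the mixture reduction and the dimension-based projection are elementary, relying only on orthonormality of the parities, Cauchy--Schwarz, and Bessel's inequality, and together they yield the stated $\alpha/2 + \min\{\cdots\}$ upper bound on $\calL_{\calD_I}(0) - \inf_{h \in \calF(K,B)} \calL_{\calD_I}(h)$.
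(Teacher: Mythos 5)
Your proposal is correct and follows essentially the same route as the paper: decompose the mixture loss, discard the (nonnegative) biased component at a cost of $\alpha/2$, and reduce to the kernel lower bound for the orthonormal parity family under the uniform marginal, which the paper obtains by invoking the dimension and norm bounds of \citet{kamath20approximate} (stated as \autoref{thm:dc-mc-lb}). The only differences are presentational: you re-derive the dimension term from first principles via projection onto the feature span and Bessel's inequality (precisely the argument underlying the cited Theorem~19), and you attribute the $O(B^{2/3}/|\Pbsp|^{1/3})$ bound directly to Lemma~5 of that reference, whereas it actually follows from combining Lemma~5 (a dimension-reduction step) with the dimension bound --- a harmless imprecision, since the combined statement is exactly what the paper uses.
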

And so, setting $k=\Theta(\log n)$ and $\alpha=1/\poly(n)$, we see that gradient descent with polynomial parameters can learn $\Pbsp[n,k,\alpha]$, while no tangent kernel of a polynomial sized model (i.e.~with $p=\poly(n)$) can ensure better than an arbitrarily small polynomial edge:
\begin{sepres}\label{sep:2}
For any sequence $\gamma_n=1/\poly(n)$, there exists a sequence of learning problems  $\calP_n$ with fixed input distributions\removed{over $\calX_n=\left\{ \pm 1 \right\}^n$} (with binary labels and w.r.t.~the square loss,), such that 
\begin{itemize}[leftmargin=*]
\item {[Gradient Descent]} for each $n$, using a differentiable model with $p=n$ parameters, realizable by a neural network of depth $O(\log n)$ with $O(n)$ edges (where some edges have trainable weights and some do not), $T=1$ steps, polynomial accuracy $\frac{\tau}{C_f} = O\left(\frac{\gamma_n}{n^{2}}\right)$, and initialization $\theta_0$ s.t.~$\forall_x f_{\theta_0}(x)=0$, $\tau$-approximate gradient descent learns $\calP_n$ to zero loss; but
\item {[Poly-Sized Kernels]} no sequence of kernels $K_n$ corresponding to feature maps of dimension $\poly(n)$ (and hence no tangent kernel of a poly-sized differentiable models) can allow learning $\calP_n$ to square loss better than $\frac{1}{2}-\gamma_n$ for all $n$; and 
\item{[Arbitrary Kernel, Poly Norm]} no sequence of kernels $K_n$ of any dimension can allow learning $\calP_n$ to square loss better than $\frac{1}{2}-\gamma_n$ for all $n$ using predictors in $\calF(K_n,B_n)$ of norm $B_n = \poly(n)$.
\end{itemize}
\end{sepres}
\begin{proof}Consider $\calP_n=\Pbsp[n,k=\log_2 n,\alpha=\gamma_n=1/\poly(n)]$. Learnability using GD follows from \autoref{clm:gd-upper-bound-1}, noting that $\frac{\tau}{C_f}=\frac{\alpha}{n 2^k}=\frac{\gamma_n}{n^2}$ is inverse-polynomial in $n$.  Since $\binom{n}{\log_2 n}=n^{\Omega(\log n)}\gg\poly(n)$, if the kernel dimension $p(n)$ (or similarly, the norm $B_n$) is polynomial in $n$, then $\frac{p}{2\binom{n}{k}}=o(1)$, and so for large enough $n$, the error lower bound in \autoref{clm:ker-lower-bound-1} would be $> \frac{1}{2}-\gamma_n$. \end{proof}

\paragraph{Empirical Demonstration in Two-Layer Networks.}
While for ease of analysis we presented a fairly specific model, with many fixed (non-trainable) weights and only few trainable weights, we expect the same behaviour occurs also in more natural, but harder to analyze models.  To verify this, we trained a two-layer fully-connected ReLU network on the source distribution $\calD_{\alpha}$ analyzed above, for $n=128$ and $k=7$.  We observe that indeed when $\alpha>0$, and thus a linear predictor has at least some edge, gradient descent training succeeds in learning the sparse parity, while the best predictor in the Tangent Kernel cannot get error much better than $0.5$.  See \autoref{fig:parity_experiment} for details.

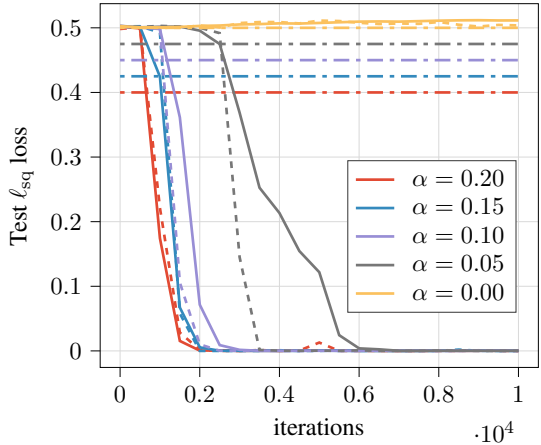
\begin{figure}%[h]
\centering
\scalebox{.85}{% This file was created by tikzplotlib v0.9.6.
\begin{tikzpicture}

\definecolor{color0}{rgb}{0.886274509803922,0.290196078431373,0.2}
\definecolor{color1}{rgb}{0.203921568627451,0.541176470588235,0.741176470588235}
\definecolor{color2}{rgb}{0.596078431372549,0.556862745098039,0.835294117647059}
\definecolor{color3}{rgb}{0.984313725490196,0.756862745098039,0.368627450980392}

\tikzset{
    mydotdash/.style={dash pattern=on 2pt off 3pt on 7pt off 3pt},
    mythick/.style={line width=1.2pt}
}

\begin{axis}[
axis background/.style={fill=white},%{fill=white!89.8039215686275!black},
axis line style={black},%{white},
legend cell align={left},
legend style={fill opacity=1, draw opacity=1, text opacity=1, at={(0.95,0.36)}, anchor=east, draw=white!20!black, fill=white},
tick align=outside,
tick pos=left,
x grid style={black!15},%{white},
xlabel={iterations},
xmajorgrids,
xmin=-500, xmax=10500,
xtick style={black},%{color=white!33.3333333333333!black},
y grid style={black!15},%{white},
ylabel={Test $\sqloss$ loss},
ymajorgrids,
ymin=-0.02558925, ymax=0.53737425,
ytick style={black}%{color=white!33.3333333333333!black}
]
\addplot [mythick, color0]
table {%
0 0.5023
500 0.49904
1000 0.174365
1500 0.015385
2000 0.00049
2500 0.00076
3000 8.5e-05
3500 0.000235
4000 0.000145
4500 0.00048
5000 0.00074
5500 0.00081
6000 0.0004
6500 0.00019
7000 0.0001
7500 0.000175
8000 0.000665
8500 0.000125
9000 0.00029
9500 0.000825
10000 0.000205
};
\addlegendentry{$\alpha=0.20$}
\addplot [mythick, color0, dashed, forget plot]
table {%
0 0.4982
500 0.5012
1000 0.2212
1500 0.0283
2000 0.0017
2500 0
3000 0
3500 0.0013
4000 0
4500 0.0006
5000 0.0129
5500 0
6000 0
6500 0
7000 0
7500 0
8000 0
8500 0.0001
9000 0.0009
9500 0
10000 0
};
\addplot [mythick, color0, mydotdash, forget plot]
table {%
0 0.399999826671651
500 0.399999826671651
1000 0.399999826671651
1500 0.399999826671651
2000 0.399999826671651
2500 0.399999826671651
3000 0.399999826671651
3500 0.399999826671651
4000 0.399999826671651
4500 0.399999826671651
5000 0.399999826671651
5500 0.399999826671651
6000 0.399999826671651
6500 0.399999826671651
7000 0.399999826671651
7500 0.399999826671651
8000 0.399999826671651
8500 0.399999826671651
9000 0.399999826671651
9500 0.399999826671651
10000 0.399999826671651
};
\addplot [mythick, color1]
table {%
0 0.501565
500 0.5014
1000 0.424375
1500 0.06712
2000 0.005425
2500 0.00027
3000 0.00015
3500 0.00015
4000 7.5e-05
4500 0.000195
5000 0.000615
5500 0.00014
6000 7e-05
6500 0.00018
7000 7e-05
7500 0.00014
8000 0.00022
8500 0.000235
9000 0.000465
9500 0.000175
10000 0.00028
};
\addlegendentry{$\alpha=0.15$}
\addplot [mythick, color1, dashed, forget plot]
table {%
0 0.5022
500 0.5024
1000 0.4926
1500 0.0579
2000 0
2500 0
3000 0
3500 0
4000 0.001
4500 0
5000 0
5500 0.0008
6000 0
6500 0.0001
7000 0
7500 0.0001
8000 0
8500 0.002
9000 0.0002
9500 0
10000 0
};
\addplot [mythick, color1, mydotdash, forget plot]
table {%
0 0.424999826671651
500 0.424999826671651
1000 0.424999826671651
1500 0.424999826671651
2000 0.424999826671651
2500 0.424999826671651
3000 0.424999826671651
3500 0.424999826671651
4000 0.424999826671651
4500 0.424999826671651
5000 0.424999826671651
5500 0.424999826671651
6000 0.424999826671651
6500 0.424999826671651
7000 0.424999826671651
7500 0.424999826671651
8000 0.424999826671651
8500 0.424999826671651
9000 0.424999826671651
9500 0.424999826671651
10000 0.424999826671651
};
\addplot [mythick, color2]
table {%
0 0.50296
500 0.500335
1000 0.4982
1500 0.36145
2000 0.071505
2500 0.008975
3000 0.00163
3500 0.00018
4000 0.0002
4500 0.00018
5000 0.0002
5500 0.00022
6000 0.000195
6500 0.00031
7000 0.000305
7500 0.00034
8000 0.000155
8500 0.00017
9000 0.00016
9500 0.00045
10000 0.000125
};
\addlegendentry{$\alpha=0.10$}
\addplot [mythick, color2, dashed, forget plot]
table {%
0 0.501
500 0.5007
1000 0.4996
1500 0.1058
2000 0.0101
2500 0.0003
3000 0.0003
3500 0.0003
4000 0
4500 0
5000 0.0004
5500 0.0001
6000 0
6500 0.0011
7000 0.0002
7500 0
8000 0
8500 0.0001
9000 0.0007
9500 0
10000 0
};
\addplot [mythick, color2, mydotdash, forget plot]
table {%
0 0.449999826671651
500 0.449999826671651
1000 0.449999826671651
1500 0.449999826671651
2000 0.449999826671651
2500 0.449999826671651
3000 0.449999826671651
3500 0.449999826671651
4000 0.449999826671651
4500 0.449999826671651
5000 0.449999826671651
5500 0.449999826671651
6000 0.449999826671651
6500 0.449999826671651
7000 0.449999826671651
7500 0.449999826671651
8000 0.449999826671651
8500 0.449999826671651
9000 0.449999826671651
9500 0.449999826671651
10000 0.449999826671651
};
\addplot [mythick, white!46.6666666666667!black]
table {%
0 0.50201
500 0.500725
1000 0.501675
1500 0.501965
2000 0.49563
2500 0.47493
3000 0.36867
3500 0.252545
4000 0.21341
4500 0.15454
5000 0.121785
5500 0.024315
6000 0.00391
6500 0.002415
7000 0.000175
7500 0.00017
8000 0.00016
8500 0.00024
9000 0.00019
9500 0.000195
10000 0.00029
};
\addlegendentry{$\alpha=0.05$}
\addplot [mythick, white!46.6666666666667!black, dashed, forget plot]
table {%
0 0.5028
500 0.5013
1000 0.503
1500 0.502
2000 0.5001
2500 0.4918
3000 0.1451
3500 0.0014
4000 0.0004
4500 0.0001
5000 0.0005
5500 0.0001
6000 0.0009
6500 0.0001
7000 0.0001
7500 0
8000 0.0006
8500 0.0004
9000 0
9500 0
10000 0
};
\addplot [mythick, white!46.6666666666667!black, mydotdash, forget plot]
table {%
0 0.474999826671651
500 0.474999826671651
1000 0.474999826671651
1500 0.474999826671651
2000 0.474999826671651
2500 0.474999826671651
3000 0.474999826671651
3500 0.474999826671651
4000 0.474999826671651
4500 0.474999826671651
5000 0.474999826671651
5500 0.474999826671651
6000 0.474999826671651
6500 0.474999826671651
7000 0.474999826671651
7500 0.474999826671651
8000 0.474999826671651
8500 0.474999826671651
9000 0.474999826671651
9500 0.474999826671651
10000 0.474999826671651
};
\addplot [mythick, color3]
table {%
0 0.502605
500 0.500655
1000 0.50065
1500 0.50135
2000 0.502945
2500 0.50355
3000 0.505215
3500 0.50577
4000 0.50691
4500 0.506605
5000 0.50755
5500 0.508585
6000 0.509585
6500 0.50989
7000 0.509575
7500 0.51033
8000 0.511065
8500 0.51159
9000 0.511785
9500 0.511395
10000 0.511455
};
\addlegendentry{$\alpha=0.00$}
\addplot [mythick, color3, dashed, forget plot]
table {%
0 0.501
500 0.5008
1000 0.5001
1500 0.5001
2000 0.5036
2500 0.5005
3000 0.5074
3500 0.5089
4000 0.5089
4500 0.5068
5000 0.5112
5500 0.5105
6000 0.5078
6500 0.5059
7000 0.5067
7500 0.5062
8000 0.509
8500 0.5048
9000 0.502
9500 0.5045
10000 0.5037
};
\addplot [mythick, color3, mydotdash, forget plot]
table {%
0 0.499999826671651
500 0.499999826671651
1000 0.499999826671651
1500 0.499999826671651
2000 0.499999826671651
2500 0.499999826671651
3000 0.499999826671651
3500 0.499999826671651
4000 0.499999826671651
4500 0.499999826671651
5000 0.499999826671651
5500 0.499999826671651
6000 0.499999826671651
6500 0.499999826671651
7000 0.499999826671651
7500 0.499999826671651
8000 0.499999826671651
8500 0.499999826671651
9000 0.499999826671651
9500 0.499999826671651
10000 0.499999826671651
};
\end{axis}

\end{tikzpicture}}
\caption{Two-layer fully-connected ReLU network (128 hidden units) training on data sampled from $\calD_I$ with $n=128$, $k=7$ for different values of $\alpha$, trained using Adam optimizer with learning-rate of $0.01$. Solid lines show the test performance at each iteration averaged over 20 runs. Dashed lines show the test performance of the model with best test performance of the last iterate between the 20 runs. The horizontal dot-dash lines indicate the lower bound (from \autoref{clm:ker-lower-bound-1}) on best accuracy attainable by any kernel method with feature map with as many number of parameters as the NTK.}
\label{fig:parity_experiment}
\end{figure}

% \paragraph{Setup.} We let $n = 128$ and $k=7$. We train a $1$-hidden layer ReLU network with $128$ hidden units, on distributions $\calD_{\alpha}$ as presented in proof of \autoref{thm:ntk_gd_gap}, for different values of $\alpha$. The networks are trained using Adam optimizer with learning-rate of $0.01$.
% %
% In \autoref{fig:parity_experiment}, we plot the average square loss on the test set versus number of iterations. For $\alpha < 1$, the neural network is able to get nearly zero test loss. On the other hand, when $\alpha=0$, the test loss of the neural network is essentially that of the zero predictor.

% In addition to the test loss plots, we also indicate the kernel lower bound in \autoref{thm:ntk_gd_gap}. This lower bound is calculated for $p = 128^2$, which is the number of parameters in the neural network we train. The lower bound indicates the best loss that we can expect for any kernel method with $p$ parameters.

\section{Ensuring the Tangent Kernel has an Edge}\label{sec:weak-learn}

%To Do:
%\begin{itemize}[itemsep=1pt,topsep=1pt,leftmargin=*]
%\item Case of Unbiased Initialization
%\item Case of Distribution Dependent NTK
%\end{itemize}

In the example of \autoref{sec:separation} we saw that the Tangent Kernel at initialization cannot ensure small error, and even though Gradient Descent finds a perfect predictor, the tanget kernel only has an arbitarily small edge over the null prediction.  But this edge isn't zero: the second part of \autoref{clm:ntk-edge-1} tells us that at least in the example of \autoref{sec:separation}, the tangent kernel {\em will} have an edge, and this edge is polynomial (even linear) in the accuracy required of gradient descent.  Is this always the case?  We now turn to asking whether whenever gradient descent succeeds in ensuring small error, then the Tangent Kernel must indeed have an edge that is at least polynomial in the gradient descent parameters.

\subsection{With Unbiased Initialization}
\label{subsec:unbiased-init}

We begin by considering situations where gradient descent succeeds starting at an initialization yielding zero predictions on all inputs, i.e.~such that:
\begin{equation}\label{eq:unbiased}
\forall\,x\ : \ f_{\theta_0}(x)~=~0.
\end{equation}
Following \citet{chizat19lazy}, we refer to $\theta_0$ satisfying \eqref{eq:unbiased} as {\em unbiased} initializations.  With an unbiased initialization, the Taylor expansion \eqref{eq:taylor} does not have the zero-order, or ``bias'' term, there is no need for the first coordinate in the feature vector $\phi_{\theta_0}$, and the Tangent Kernel becomes $\NTK_{\theta_0}^f(x,x')= \inangle{\nabla_{\theta} f_{\theta_0}(x), \nabla_{\theta} f_{\theta_0}(x')}$, simplifying the Neural Tangent Kernel analysis. \citet{chizat19lazy}, \citet{woodworth20kernelrich} and others discuss how to construct generic unbiased initializations, namely by including pairs of units that cancel each other out at initialization, but quickly diverge during training and allow for meaningful learning.  The initialization used in \autoref{clm:gd-upper-bound-1} of \autoref{sec:separation} also satisfies \eqref{eq:unbiased}.

In \autoref{thm:unbiased} below, we show that if gradient descent improves over the loss at initialization, then the Tangent Kernel must also have an edge over initialization, which for an unbiased initialization means an edge over the null prediction.

We can also considered a relaxation of  the unbiasedness assumption \eqref{eq:unbiased}, and instead require only that the output of the network at initialization is very close to zero.  This would be the case with some random initialization schemes which initialize the network randomly, but with small magnitude\nati{Would be good to cite the init where the magnitude is $\propto \sqrt{width}$.}.  As long as the deviation from unbiasedness is smaller than the edge guaranteed by \autoref{thm:unbiased}, the Theorem would still ensure the tangent kernel has an edge over null prediction.

\begin{theorem}\label{thm:unbiased}
For any smooth loss function $\ell$, differentiable model $f$, initialization $\theta_0$, and source distribution $\calD$, if $\tau$-accurate gradient descent for $T$ steps ensures error $\calL_{\calD}(f_{\theta^{(T)}}) \leq \eps$, then for $B = C_f+\frac{\tau}{\abs{\ell''}C_f}$, there exists $h \in \NTK^{f}_{\theta_0}(B)$ with 
\begin{equation}\label{eq:thmgen}
\textstyle \calL_{\calD}(h) \leq \max\left(\eps,\,\calL_\calD(f_{\theta_0}) - \frac{\tau^2}{2\abs{\ell''} C^2_f}\right).  
\end{equation}
In particular, if $\theta_0$ is an {\bf unbiased initialization} and $\eps<\calL_\calD(0)$, then
\begin{equation}\label{eq:thmun}
 \textstyle \calL_{\calD}(h) \leq \calL_\calD(0) - \frac{\tau^2}{2\abs{\ell''} C^2_f}.  
\end{equation}
\removed{
loss function $\ell$, a distribution $\calD \in \Delta_{\calX \times \calY}$ and a differentiable model $f : \bbR^p \times \calX \to \bbR$. If there exists a $\tau$-GD algorithm on $f$, with unbiased initialization $\theta^{(0)} \in \bbR^p$ and $T$ steps such that $\calL_{\calD}(f_{\theta^{(T)}}) < \calL_\calD(0)$, then for $B = \frac{\tau}{\abs{\ell''} C_f}$, there exists $h \in \NTK^{f}_{\theta^{(0)}}(B)$ with $\calL_{\calD}(h) < \calL_\calD(0) - \frac{\tau^2}{2\abs{\ell''} C_f}$.%, where $C := \mean{\x \sim \calD}{\norm{\nabla_\theta f_{\theta^{(0)}}(\x)}^2}$.
}
\end{theorem}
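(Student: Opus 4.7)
My plan is a dichotomy based on the size of the population gradient at initialization, $g^* := \nabla_\theta \calL_\calD(f_{\theta_0})$, combined with the observation that both the bias feature $f_{\theta_0}$ itself and a one-step linearized descent direction lie inside $\NTK^f_{\theta_0}(B)$.

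If $\|g^*\|_2 \le \tau$, I would first observe that the all-zero choice $g_t \equiv 0$ satisfies the gradient-accuracy condition \eqref{eq:gt} at $\theta_0$, so by induction this keeps the iterates at $\theta_0$ forever and therefore the hypothesis that $\tau$-approximate gradient descent ensures error $\eps$ forces $\calL_\calD(f_{\theta_0}) \le \eps$. Taking $h := f_{\theta_0}$ makes $h$ exactly the bias coordinate of $\phi_{\theta_0}$, so $h \in \NTK^f_{\theta_0}(C_f) \subseteq \NTK^f_{\theta_0}(B)$ and $\calL_\calD(h) \le \eps$, in accordance with \eqref{eq:thmgen}.

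If instead $\|g^*\|_2 > \tau$, I would set $\lambda := \tau/(\abs{\ell''} C_f^2)$, $v := -\lambda\, g^*/\|g^*\|_2$, and define $h(x) := f_{\theta_0}(x) + v \cdot \nabla_\theta f_{\theta_0}(x) = \inangle{[1,v],\,\phi_{\theta_0}(x)}$. Since $\|[1,v]\|_2 \le 1+\lambda$ and $\sup_x \|\phi_{\theta_0}(x)\|_2 \le C_f$, this $h$ sits in $\NTK^f_{\theta_0}(C_f(1+\lambda)) = \NTK^f_{\theta_0}(B)$. A second-order Taylor expansion of $\ell(\cdot,y)$ around $f_{\theta_0}(x)$ with pointwise remainder at most $\tfrac{\abs{\ell''}}{2}\bigl(v \cdot \nabla_\theta f_{\theta_0}(x)\bigr)^2$, after integrating against $\calD$ and using $\mean{(x,y)\sim\calD}{\ell'(f_{\theta_0}(x),y)\,\nabla_\theta f_{\theta_0}(x)} = g^*$ together with $\|\nabla_\theta f_{\theta_0}(x)\|_2 \le C_f$, should yield
\[\calL_\calD(h) \;\le\; \calL_\calD(f_{\theta_0}) - \lambda\|g^*\|_2 + \tfrac{\abs{\ell''}}{2}\lambda^2 C_f^2 \;<\; \calL_\calD(f_{\theta_0}) - \tfrac{\tau^2}{2\abs{\ell''} C_f^2},\]
matching the second branch of \eqref{eq:thmgen}. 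The unbiased refinement \eqref{eq:thmun} then follows by noting that $f_{\theta_0}\equiv 0$ gives $\calL_\calD(f_{\theta_0})=\calL_\calD(0)$, so whenever $\eps<\calL_\calD(0)$ the small-gradient case is impossible and the large-gradient estimate applies directly.

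The only subtlety I expect is tracking the constant $B = C_f + \tau/(\abs{\ell''}C_f)$: the additive $C_f$ encodes the cost of carrying the bias feature $f_{\theta_0}$ inside $\phi_{\theta_0}$, which is what lets $h=f_{\theta_0}$ serve as the competitor in the small-gradient case and anchors the one-step descent around $f_{\theta_0}$ in the large-gradient case, while the $\tau/(\abs{\ell''}C_f)$ term is precisely the scale of linearized descent needed to convert a gradient of norm $\tau$ into the quadratic improvement $\tau^2/(2\abs{\ell''}C_f^2)$. Beyond that bookkeeping, the argument is a routine exchange of gradient and expectation plus the observation that $g_t \equiv 0$ is an admissible $\tau$-approximate gradient as long as the iterates have not moved away from $\theta_0$.
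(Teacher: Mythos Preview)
Your proposal is correct and follows essentially the same argument as the paper's proof: the same two-case split (the paper splits on whether $\calL_\calD(f_{\theta_0})\le\eps$ and then deduces $\|g^*\|>\tau$ in the other case, whereas you split directly on $\|g^*\|$, but these are logically equivalent), the same observation that $g_t\equiv 0$ is an admissible approximate gradient forcing the iterates to stay at $\theta_0$, and the same one-step linearized descent $h(x)=f_{\theta_0}(x)+\langle v,\nabla_\theta f_{\theta_0}(x)\rangle$ with $v=-\lambda g^*/\|g^*\|$, $\lambda=\tau/(|\ell''|C_f^2)$, bounded via a second-order Taylor expansion of $\ell$. Your bookkeeping for $B=(1+\lambda)C_f$ also matches the paper's.
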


\begin{proof}
At a high level, we argue that if gradient descent is going to move at all, the gradient at initialization must be substantially non-zero, and hence move the model in some direction that improves over initialization.  But since the tangent kernel approximates the true model close to the initialization, this improvement translates also to an improvement over initialization also in the Tangent Kernel.  If the initialization is unbiased, initialization is at the null predictor, and this is an improvement over null predictions.

Let us consider first the general (not unbiased) case, and establish \eqref{eq:thmgen}. If $\calL_{\calD}(f_{\theta_0})\leq\eps$, then $f_{\theta_0}\in\NTK^{f}_{\theta_0}(C_f)$ already satisfies \eqref{eq:thmgen}.  Otherwise, we must have that $\norm{\nabla_\theta \calL_\calD(f_{\theta_0})}_2 \ge \tau$, since otherwise $g_t$ can be $0$ at every iteration of gradient-descent, forcing gradient-descent to return $f_{\theta_0}$.
Denote \removed{$\calL(\Bw) := \calL_\calD(h_{\Bw})$ where} $h_\Bw(x) = f_{\theta_0}(x)+\inner{\Bw, \nabla_\theta f_{\theta_0}(x)}$, for $\Bw \in \bbR^p$, noting that $h_{\Bw} \in \NTK^{f}_{\theta_0}((1+\norm{w})C_f)$ and $h_0=f_{\theta_0}$. Observe that:
\begin{align}
\nabla_\theta \calL_\calD(f_{\theta_0})
~=~ \mean{(x,y) \sim \calD}{\ell'(h_0(x), y)\nabla_\theta f_{\theta_0}(x)} ~=~ \nabla_{\Bw} \calL(h_0)
\end{align}

For any $\alpha > 0$, denoting $\Bw_* = -\alpha \frac{\nabla_{\Bw} \calL(h_0)}{\norm{\nabla_{\Bw} \calL(h_0))}}$, we have:
\begin{align*}
\calL(\Bw_*)
&~=~ \Ex_{(x,y) \sim \calD} \ell(\inner{\Bw_*, \nabla_\theta f_{\theta_0}(x)}, y)\\
&~\le~ \Ex_{(x,y) \sim \calD}%
	\inbmat{%
	\ell(h_0(x), y) ~+~
	\ell'(h_0(x),y)\inner{\Bw_*, \nabla_\theta f_{\theta_0}(x)} ~+~
	\frac{\abs{\ell''}}{2} \inner{\Bw_*, \nabla_\theta f_{\theta_0}(x)}^2} \\
&~\le~ \calL(h_0) + \inner{\Bw_*, \nabla_{\Bw} \calL(h_0)} + \frac{\abs{\ell''}}{2}\norm{\Bw_*}^2 C^2_f\\ 
&~=~ \calL(h_0) - \alpha \norm{\nabla_{\Bw} \calL(h_0)} + \frac{\alpha^2\abs{\ell''} C^2_f}{2}\\
&~<~ \calL(h_0) - \alpha \tau + \frac{\alpha^2 \abs{\ell''} C^2_f}{2}.
\end{align*}
Choosing $\alpha = \frac{\tau}{\abs{\ell''} C^2_f}$ completes the proof of \eqref{eq:thmgen}.  For unbiased initialization, if $\eps < \calL_\calD(0)=\calL_\calD(f_{\theta_0})$ then again we must have $\norm{\nabla_\theta \calL_\calD(f_{\theta_0})}_2 \ge \tau$ and the proof proceeds as above, noting that $f_{\theta_0}=h_0=0$.
\end{proof}

\autoref{sep:1} establishes that \autoref{thm:unbiased} is polynomially tight: it shows that despite gradient descent succeeding with an unbiased initialization, the tangent kernel at initialization has an edge of at most $O(\tau/C_f)$.  This leaves a quadratic gap versus the guarantee of $\Omega(\tau^2/C^2_f)$\nati{I changed quartic to quadratic.  Please double check.}\info{PK: looks good to me...} in the Theorem (recalling $\abs{\ell''}=1$ for the square loss), but establishes the polynomial relationship.

\autoref{thm:unbiased} is a strong guarantee, in that it holds for each source distribution separately.  This immediately implies that if $\tau$-approximate gradient descent on a differentiable model $f$ with unbiased initialization $\theta_0$ learns some family of distribution $\calP$ to within small error, or even just to with some edge over the null prediction, then the Tangent Kernel at $\theta_0$ also has an edge of at least $\frac{\tau^2}{2\abs{\ell''} C^2_f}$ over the null prediction.  Note that this edge is polynomial in the algorithm parameters $\tau$ and $C_f$, as is the required norm $B$.  And so, if, for increasing problem sizes $n$, gradient descent allows for ``polynomial learnability'', in the sense of learning with polynomially increasing complexity, then the Tangent Kernel at least allows for ``weak learning'' with a polynomial edge (in all the parameters, and hence in the problem size $n$).  \autoref{sep:2} shows that this relationship is tight, in that there are indeed problems where strong learning is possible with gradient descent with polynomial complexity, but the tanget kernel, or indeed any kernel of polynomial complexity, can only ensure polynomially weak learning.

\autoref{thm:unbiased} relies on the initialization being unbiased, or at the very least the predictions at initialization not being worse than the null predictions.  Can we always ensure the initialization satisfies such a condition?  And what happens if it does not?  Might it then be possible for Gradient Descent to succeed even though the Tangent Kernel does not have an edge over the null prediction?

The example in \autoref{sec:no-weak-learn} shows that, indeed, some learning problems might be learnable by gradient descent, but only using an initilization that is {\em not} unbiased\removed{(i.e.~no reasonably sized differentiable model with an unbiased initialization can ensure small error, but there exists a not-unbiased initialization that does ensure arbitrarily small error)}.  That is, we should not expect initializations to always be unbiased, nor can we expect to always be able to ``fix'' the initialization to be unbiased.  And furthermore, the example shows that with an initilization that is not unbiased, the Tangent Kernel at initilization might not have a significant edge over the null predictor.

But before seeing this example, let us ask: What can we ensure when the initialization is not unbiased?  

\subsection{With Distribution Dependence} 
\label{subsec:dist-dependent}

In \autoref{thm:dist-dependent} we show that, at least for the square loss, for an arbitrary initialization, even though the Tangent Kernel at initialization might not have an edge, we can construct a distribution $\calW$, which we can think of as an alternate ``random initilization'', such that the Tangent Kernel of the model at a random point $\theta\sim\calW$ drawn from this distribution, {\em does} have an edge over null prediction.  But the catch is that this distribution depends not only on the true initialization $\theta_0$, but also on the input distribution $\calD_\calX$.  That is, the Tangent Kernel for which we are establishing an edge is {\em distribution dependent}. In \autoref{sec:no-weak-learn} we will see that this distribution-dependence is unavoidable.

\removed{
%We first consider the power of learning using NTK with distribution-dependent initialization. In this case,
We show that weak learning using GD is not stronger than weak learning using a {\em randomized} and  {distribution-dependent} NTK. That is, we consider an NTK that is defined at a randomized initialization instead of a fixed one (different from the initialization for gradient descent) and moreover, where we allow this initialization to depend on the marginal $\calD_{\calX}$.
%\nati{Need to carefully introduce what we mean by ``distribution dependent'' NTK.  Three are three steps here: first, we are no longer talking about NTK at the same initialization.  Second, we are not talking about NTK at a fixed location, but rather at a random location.  Third, that this choice of random location depends on the marginal over X.  We can either state this as referring to a problem where the marginal is fixed or as choosing the initialization based on the marginal.  This should also be related to semi-supervised learning.  Additionally, perhaps at the end of this Section, it would also be good to state this as having an edge with a fixed kernel that is the average over the NTKs at the different locations (this incurs an additional factor of $T$ in the norm)}
}

\begin{theorem}\label{thm:dist-dependent}
%Let $\ell$ be the square loss $\ell(\hat{y}, y) = \frac{1}{2} (\hat{y}-y)^2$.
%Let $\ell$ be some $\abs{\ell''}$-smooth loss, such that $\ell'(\hat{y},y) = \ell_1(\hat{y}) + \ell'(0,y)$, for some $\ell_1 : \bbR \to \bbR$.\improve{PK: I wonder if it is cleaner to write this as assuming $\ell(\hat{y},y) = L_0(y) + \hat{y} \cdot L_1(y) + L_2(\hat{y})$ for some functions $L_0, L_1, L_2 : \bbR \to \bbR$. First term does not depend on $\hat{y}$, second term is linear in $\hat{y}$, so we are really asking for $\abs{\ell''}$-smoothness of $L_2$. In your notation, $\ell_1(\hat{y}) = L_2'(\hat{y})$ and $\ell'(0,y) = L_1(y)$}
For a differentiable model $f$, initialization $\theta_0$, stepsize $\eta$, number of iterations $T$, and the square loss, given an input distribution $\calD_\calX$, there exists a distribution $\calW$ (that depends on $\calD_\calX$), supported on $T+1$ parameter vectors in $\bbR^p$, such that for any source distribution $\calD$ that agrees with $\calD_\calX$ and for which $\tau$-approximate gradient descent ensures error $\calL_{\calD}(f_{\theta^{(T)}}) < \calL_\calD(0) - \gamma$, we have
\begin{equation}\label{eq:thm:dist}
    \Ex_{\theta \sim \calW}~\inf_{h \in \NTK^{f}_{\theta}(B)}~\calL_{\calD}(h) < \calL_\calD(0)-\gamma'
\vspace{-3mm}
\end{equation} 
where $\gamma' =  \min \inbrace{\frac{\gamma}{T+1}, \frac{\tau^2}{2 C^2_f}}$ and $B = \sqrt{2 \gamma'}C_f$. % $B = \sqrt{\frac{2\gamma'}{C^2_f}}$.  
And so, the Average Tangent Kernel $K=\Ex_{\theta \sim \calW} \NTK_\theta^f$ has rank at most $(T+1)(p+1)$ and
\begin{equation}
    \inf_{h\in\calF(K,B)} \calL_\calD(h)\leq \calL_\calD(0)-\gamma'.
\end{equation}
\end{theorem}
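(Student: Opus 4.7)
My plan is to take $\calW$ to be the uniform distribution over the $T+1$ iterates $\theta^{(0)},\theta^{(1)},\dots,\theta^{(T)}$ produced by the given run of gradient descent on $\calD$. The support-size of $T+1$ is then immediate, and the rank bound on $K=\Ex_{\theta\sim\calW}\NTK^f_\theta$ follows because $K$ is the average of $T+1$ rank-$(p+1)$ kernels, with explicit feature map $x\mapsto \bigoplus_{t=0}^T \phi_{\theta^{(t)}}(x)/\sqrt{T+1}$ into $\bbR^{(T+1)(p+1)}$. The iterates technically depend on the labels of $\calD$, so strictly speaking I would construct a distinct $\calW$ per $\calD$ in the family; the dependence on $\calD_\calX$ emphasized in the statement is the distribution-dependence that, as \autoref{sec:no-weak-learn} will show, cannot be avoided.

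The heart of the proof is a per-iterate bound on $\inf_{h\in\NTK^f_{\theta^{(t)}}(B)}\calL_\calD(h)$, exploiting that the square loss is exactly quadratic in the NTK weight: for $h_w(x)=\inangle{w,\phi_{\theta^{(t)}}(x)}$, one has
\[
\calL_\calD(h_w)=\calL_\calD(f_{\theta^{(t)}})+\inangle{\tilde w,\nabla_\theta\calL_\calD(f_{\theta^{(t)}})}+\tfrac12\inangle{\tilde w,M_t\tilde w},
\]
with $\tilde w$ the tangent-coordinate part and $\|M_t\|_{\mathrm{op}}\le C_f^2$. Mirroring the proof of \autoref{thm:unbiased} with this quadratic parameterization and the norm constraint $\|w\|\cdot R\le B$, I would pick $w$ antialigned with $\nabla_\theta\calL_\calD(f_{\theta^{(t)}})$, yielding a predictor $h_t^\star$ that beats $f_{\theta^{(t)}}$ by $\min\bigl(\|\nabla_\theta\calL_\calD(f_{\theta^{(t)}})\|^2/(2C_f^2),\,\gamma'\bigr)$. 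Any iterate where the true-gradient norm is at least $\tau$ therefore yields $\gamma'$-improvement.

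Telescoping then proceeds by a case analysis. If some $L_t:=\calL_\calD(f_{\theta^{(t)}})$ is already at least $\gamma/(T+1)$ below $\calL_\calD(0)$, a scaled-down $\beta f_{\theta^{(t)}}\in\NTK^f_{\theta^{(t)}}(B)$ alone gives edge $\ge\gamma'$ at that iterate. Otherwise every $L_t$ stays within $\gamma/(T+1)$ of $\calL_\calD(0)$; the hypothesis $L_T<\calL_\calD(0)-\gamma$ then forces $\sum_{t=0}^{T-1}(L_t-L_{t+1})>\gamma$, so since every ``active'' GD step requires $\|\nabla_\theta\calL_\calD(f_{\theta^{(t)}})\|\ge\tau$ (lest the adversarial $g_t$ be zero and GD stall), the per-iterate lemma yields enough $\gamma'$-improvements in the tangent direction that averaging gives $\Ex_{\theta\sim\calW}\inf_h\calL_\calD(h)<\calL_\calD(0)-\gamma'$. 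For the $\calF(K,B)$-statement, $\bar h=\tfrac1{T+1}\sum_t h_t^\star$ has concatenated weight of norm $\le B/R_K$ (provided each $w_t$ is chosen with $\|w_t\|\le B/C_f$, a conservative but valid choice), and $\calL_\calD(\bar h)\le \Ex_t\calL_\calD(h_t^\star)$ by convexity of the square loss.

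The main obstacle I expect is that $B=\sqrt{2\gamma'}C_f$ is strictly smaller than $C_f$ when $\gamma'<1/2$, so $f_{\theta^{(t)}}$ itself does not lie in $\NTK^f_{\theta^{(t)}}(B)$ and cannot simply be used to ``track'' $L_t$. Rescaling the bias direction preserves only a $\sqrt{\gamma'}$-fraction of the edge over null, so balancing the bias contribution (edge-over-null, but rescaled) against the tangent contribution (clean $\gamma'$ edge over $L_t$, but not over the null) when traversing iterates whose $L_t$ lies above $\calL_\calD(0)$ is the delicate bookkeeping that the case analysis must address to get a clean $\gamma'$ in the final average.
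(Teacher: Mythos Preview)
Your proposal misses the central point of the theorem, and the gap is not a technicality. Read the quantifier order in the statement again: ``there exists a distribution $\calW$ (that depends on $\calD_\calX$) \ldots\ such that \emph{for any} source distribution $\calD$ that agrees with $\calD_\calX$ \ldots''. A single $\calW$, chosen knowing only the input marginal $\calD_\calX$, must work simultaneously for every labeling. Your $\calW$ is the uniform distribution over the gradient-descent iterates \emph{on $\calD$}, which manifestly depends on the labels. You even note this and propose ``a distinct $\calW$ per $\calD$'', but that trivializes the theorem: if $\calW$ may depend on $\calD$, just take $\calW$ to be the point mass at $\theta^{(T)}$, since $f_{\theta^{(T)}}\in\NTK^f_{\theta^{(T)}}$ already has the required edge. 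The whole content of the result---and the reason the paper stresses that $\calW$ depends on $\calD_\calX$ but not on $\calD$---is that the randomized initialization may use unlabeled data but not labels. There is also a secondary issue: the iterates of $\tau$-approximate gradient descent on $\calD$ are not even uniquely defined (the $g_t$ are adversarial), so ``the given run'' is not a well-defined object to build $\calW$ from.

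The paper's proof supplies exactly the missing idea. It defines a \emph{label-free} trajectory $\tilde\theta_0,\dots,\tilde\theta_T$ by running exact gradient descent on the square loss with the same marginal $\calD_\calX$ but with all labels set to zero; these $\tilde\theta_t$ depend only on $\calD_\calX$, and $\calW$ is uniform over them. The true population gradient at $\tilde\theta_t$ decomposes as the zero-label gradient plus the correlation term $v^{(t)}=\Ex_\calD[y\,\nabla_\theta f_{\tilde\theta_t}(x)]$. If the NTK at each $\tilde\theta_t$ had edge at most $\gamma'$ over null, the same one-step descent calculation you sketch (but applied to the \emph{bias-free} tangent predictor $x\mapsto\inangle{w,\nabla_\theta f_{\tilde\theta_t}(x)}$, measured against $\calL_\calD(0)$ rather than $\calL_\calD(f_{\tilde\theta_t})$) forces $\|v^{(t)}\|\le (T+1)\sqrt{2\gamma'}C_f\le\tau$. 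But then the zero-label gradient is itself a valid $\tau$-approximate gradient of the true loss, so $\theta^{(t)}=\tilde\theta_t$ is an admissible run of $\tau$-approximate GD on $\calD$, and its final output $f_{\tilde\theta_T}$ lies in $\NTK^f_{\tilde\theta_T}$, contradicting the assumed edge of GD. Your telescoping and case analysis on $L_t=\calL_\calD(f_{\theta^{(t)}})$ are not needed once the right trajectory is in place; the argument is a clean contradiction, and the ``obstacle'' you flag about $B<C_f$ does not arise in the same form because the relevant tangent predictors have zero bias coordinate.
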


\begin{proof}
\removed{
Suppose for contradiction that % the $\NTK$ of $h_\theta$ cannot $\eps'$-approximate $\calF$ with norm $B$.
there exists some $\calD \in \calP$ such that for every initialization $\calW(\calD_\calX)$ we have:
\begin{equation}
\Ex_{\theta \sim \calW(\calD_\calX)} ~ \inf_{h \in \NTK^{h}_{\theta}(B)} ~ \calL_{\calD}(h) ~\ge~ \calL_{\calD}(0) - \gamma' \label{eqn:dist-dep-1}
\end{equation}}
We first define a sequence of iterates $\tilde{\theta}_t$ which corresponds to gradient descent descent on $\calL_{\tilde{\calD}}(f_\theta)$, where $\tilde{\calD}$ is defined as a distribution with marginal $\calD_\calX$ and $y=0$ with probability one.  These iterates are given by:
\begin{align}
    \tilde{\theta}_0 &= \theta_0 \\
    \tilde{\theta}_{t+1} &= \tilde{\theta}_{t} - \eta  \mean{x \sim \calD_\calX}{f_{\tilde{\theta}_{t}}(x) \cdot \nabla_{\theta_{t}} f_{\tilde{\theta}_{t }}(x)} 
\end{align}
Let $\calW$ be the uniform distribution over $\{\tilde{\theta}_0, \dots, \tilde{\theta}_T\}$.  At a high level, if the Tangent Kernel at all of these iterates does not have an edge, then gradient descent will not deviate from them, in the same way that in \autoref{thm:unbiased} the Tangent Kernel had to have an edge in order for gradient descent to move.  But then gradient descent leads to $f_{\tilde{\theta}_T}\in\NTK^f_{\tilde{\theta}_t}$, and so any edge gradient descent has is also obtained in this Tangent Kernel.  

In a sense, we are decomposing the gradient into two components\nati{It would be really nice to write down an equation that shows this decomposition and refer to it more directly.  In fact, it would be good to do even before defining $\tilde{\theta}$, and then have this directly motivate the definition of $\tilde{\theta}$.  We can even define $\calD_0$ which is $\calD_X$ composed with zero labels, and then write it down in terms of the gradient of the loss on $\calD_0$.}: the gradient if all labels were zero, and the deviation from this gradient due to the labels not being zero.  The second component is non-zero only when the tangent kernel has an edge.  If the initialization is unbiased, the first component is zero, and so if the tangent kernel doesn't have an edge, gradient descent will not move at all.  But if the initialization is not unbiased, the first component (corresponding to zero labels) might be non-zero even if the tangent kernel does not have an edge, and so gradient descent can explore additional iterates $\tilde{\theta}_t$, and any one of these having the edge could be sufficient for gradient descent succeeding.  The key is that this sequence of iterates depends only on the input distribution $\calD_\calX$, but not on the labels (since it is defined as the behaviour on zero labels).

Denote $v^{(t)} := \mean{(x,y) \sim \calD}{y \cdot \nabla_\theta f_{\tilde{\theta}^{(t)}}(x)}$, and let $\Bw^{(t)} = -\alpha{v^{(t)}}/{\norm{v^{(t)}}}$, with $\alpha$ to be set later s.t.~$(x\mapsto \inangle{ \Bw^{(t)} , \nabla_\theta f_{\tilde{\theta}^{(t)}}(x)})\in\NTK^f_{\tilde{\theta}^{(t)}}(B)$.  Assume for contradiction that \eqref{eq:thm:dist} does not hold (i.e.~the Tangent Kernels do not have an edge on average), and so:
\begin{align}
&\E_{t\sim[0..T]} ~ \mean{(x,y) \sim \calD}{\ell \left(\inner{\Bw^{(t)}, \nabla_\theta f_{\tilde{\theta}^{(t)}} (x)}, y\right)}
~\ge~ \E_{\theta \sim \calW}\inf_{h \in \NTK^{f}_{\theta}(B)} \calL_{\calD}(h) ~\ge~  \calL_{\calD}(0) - \gamma'
\end{align}
On the other hand, we have for every $t$:
\begin{align*}
&{\mean{(x,y) \sim \calD}{\ell\left(\inner{\Bw^{(t)},  \nabla_\theta f_{\tilde{\theta}^{(t)}} (x)} ,y\right)}}\\
&=~ {\Ex_{(x,y) \sim \calD}
    \insquare{\frac12 y^2
    ~-~ y \inner{\Bw^{(t)}, \nabla_\theta f_{\tilde{\theta}^{(t)}} (x)}
    ~+~ \frac{1}{2} \inner{\Bw^{(t)}, \nabla_\theta f_{\tilde{\theta}^{(t)}} (x)}^2}} \\
&\le~ \E_{(x,y) \sim \calD} \frac12 y^2 - \inner{\Bw^{(t)}, \E_{(x,y) \sim \calD} y \nabla_\theta f_{\tilde{\theta}^{(t)}} (x)}
~+~ \frac{1}{2} \E_{(x,y) \sim \calD} \|\Bw^{(t)}\|^2 \norm{\nabla_\theta f_{\tilde{\theta}^{(t)}} (x)}^2 \\
&\le~ \calL_{\calD}(0) - \alpha \norm{v^{(t)}}  + \frac{\alpha^2}{2} C^2_f
\end{align*}
Using the above, taking $\alpha = \frac{\sqrt{2\gamma'}}{C_f}$ we get that for every $t$:
\begin{equation}
\label{eq:grad_bound}
\|v^{(t)}\|
\le (T+1)
\E_{t \sim [0..T]}\|v^{(t)}\| \le (T+1) \sqrt{2 \gamma'}C_f
\end{equation}
We will now argue inductively that $\tilde{\theta}^{(t)}$ is a valid choice for the iterate $\theta^{(t)}$, i.e.~satisfying \eqref{eq:gd}.  For iteration $t=0$, $\tilde{\theta}^{(0)}=\theta^{(0)}=\theta_0$. If up to iteration $t$ we can have $\tilde{\theta}^{(t)}=\theta^{(t)}$ then:
\begin{align}
\nabla_{\theta} \insquare{\E_{(x,y) \sim \calD} \ell(f_{\theta^{(t)}}(x), y)} %\notag \\
&~=~ \E_{(x,y) \sim \calD}\  (f_{\theta^{(t)}}(x) - y) \cdot \nabla_{\theta} f_{\theta^{(t)}}(x) \\
&=~ \E_{(x,y) \sim \calD}\  (f_{\tilde{\theta}^{(t)}}(x) - y) \cdot \nabla_{\theta} f_{\tilde{\theta}^{(t)}}(x) \\
&~=~ \E_{x \sim \calD_\calX}\  f_{\tilde{\theta}^{(t)}}(x) \cdot \nabla_{\theta} f_{\tilde{\theta}^{(t)}}(x) -  v^{(t)}
\end{align}
Now, from \eqref{eq:grad_bound}, and since $\tau \ge (T+1) \sqrt{2 \gamma'}C_f$,  we have that $g_t=\E_{x \sim \calD_\calX}f_{\theta^{(t)}}(x) \nabla_{\theta} f_{\theta^{(t)}}(x)$ satisfies \eqref{eq:gt}, and so $\theta^{(t+1)} = \tilde{\theta}^{(t+1)}$ satisfies \eqref{eq:gd}.  And by induction, GD can return $f_{\theta^{(T)}} = f_{\tilde{\theta}^{(T)}}$.
Since $f_{\tilde{\theta}^{(T)}} \in \NTK^{f}_{\tilde{\theta}^{(T)}}(B)$ we get:
\begin{align*}
\calL_{\calD}(0) - \calL_{\calD}(f_{\theta^{(T)}})
&~=~ \calL_{\calD}(0) - \calL_{\calD}(f_{\tilde{\theta}^{(T)}})\\
&~\le~ \calL_{\calD}(0) - \inf_{h \in \NTK^{f}_{\tilde{\theta}^{(T)}}(B)}\calL_{\calD}(h) \\
&~\le~ (T+1) \mean{\theta \sim \calW(\calD_\calX)}{\calL_{\calD}(0) - \inf_{h \in \NTK^{f}_{\theta}(B)}\calL_{\calD}(h)}\\
&~\le~ \gamma' (T+1) ~\le~ \gamma\,.
\end{align*}
This leads to a contradiction, thereby completing the proof of \autoref{thm:dist-dependent}.  For the average Kernel, note that it corresponds to a $(T+1)(p+1)$ dimensional feature space which is the concatenation of the feature spaces of each $\NTK^f_{\tilde{\theta}^{(t)}}$, and consider a predictor which is the concatenation of the best predictors for each kernel scaled by $1/(T+1)$.
\end{proof}

\section{No Edge with the Tangent Kernel}\label{sec:no-weak-learn}

As promised, we will now present an example establishing:
\begin{enumerate}[noitemsep,topsep=0pt,parsep=0pt,partopsep=0pt,leftmargin=*]
    \item Some learning problems are learnable by gradient descent, but only with a biased initialization
    \item \autoref{thm:unbiased} cannot be extended to biased initializations: Even if gradient descent, with a {\em biased} initialization, ensures small error, the Tangent Kernel at initialization might not have a significant edge.
    \item \autoref{thm:dist-dependent} cannot be made distribution-independent: Even if gradient descent, with a biased initialization, ensures small error on a learning problem, there might not be any {\em distribution-independent} random ``initialization'' that yields a significant edge in a Tangent Kernel at a random draw from this ``initialization''.
\end{enumerate}

% In \autoref{sec:weak-learn}, we showed that whenever gradient descent training starting from an unbiased initialization succeeds at getting an edge over the zero predictor, the corresponding NTK also succeeds at getting an edge. Additionally, when we allow the parameter $\theta$ defining the NTK to depend on the marginal $\calD_\calX$, then again the NTK succeeds at getting an edge over the zero predictor.

% We now show that without either of these conditions, this is not true in general. Namely we exhibit a learning problem for which there is a gradient descent training algorithm, with a biased initialization, that succeeds at getting small error (strong learning), but any NTK that does not depend on the marginal $\calD_{\calX}$ has only an insignificant edge over the zero predictor.
% %That is, if we do not allow the initialization of NTK to depend on the marginal distribution over $\calX$, then there is a gap between GD and NTK, even in the weak learning regime.
% In other words, there are problems for which approximate gradient-descent can achieve small error (in fact, match the Bayes optimal predictor), while the NTK has error that is exponentially close to the trivial predictor.\nati{Also explicitly say this shows that the NTK at the initialization itself is useless}

\paragraph{The Learning Problem.} This time we will consider learning parities of any cardinality (not necessarily sparse), but we will ``leak'' the support of the parity through the input distribution $\calD_\calX$. For an integer $n$ and $\alpha \in (0,1)$, we consider the ``leaky parities'' problem $\Plp[n,\alpha]$ over $\calX = \sbit^n$ and $\calY = \sbit$ with $2^n - n - 1$ distributions, each corresponding to $I \subseteq [n]$ with $|I| \ge 2$.  Each source distribution $\calD_I$ is a mixture of two components $\calD_I=(1-\alpha)\calD_I^{(0)}+\alpha \calD_I^{(1)}$: the first component $\calD_I^{(0)}$ has labels $y=\chi_I(x)$ corresponding to parity of bits in $I$, with a uniform marginal distribution over $x$.  The second component has uniform random labels $y$ independent of the inputs (i.e.~the labels in this component are entirely uninformative), but the input is deterministic and set to $x := x^I$ where $x^I_i = 1$ for $i \in I$ and $-1$ for $i \notin I$.  Learning this problem is very easy: all a learner has to do is examine the marginals of each input coordinate of $x_i$.  Those in the support $I$ will have $\Ex_{\calD_I}[x_i]=\alpha$ while when $i\not\in I$ we have $\Ex_{\calD_I}[x_i]=-\alpha$.  The marginals over $x_i$ thus completely leak the optimal predictor.  But as we shall see, while gradient descent can leverage the marginals in this way, kernel methods (where the kernel is pre-determined and does not depend on the input distribution) fundamentally cannot.

More formally, $\calD_I^{(0)}$ is the distribution obtained by sampling $x \sim \calU(\{\pm 1\}^n)$ and setting $y = \chi_I(x) := \prod_{i \in I} x_i$, while $\calD_I^{(1)}$ is the distribution obtained by (deterministically) setting $x := x^I$ and sampling $y \sim \calU(\sbit)$, and recall $\calD_I := (1-\alpha) \calD_I^{(0)} + \alpha \calD_{I}^{(1)}$. In other words, $\calD_I$ corresponds to first choosing $b \in \bit$ with $\Pr[b=1] = \alpha$ and sampling $(x,y) \sim \calD_I^{(b)}$.

\paragraph{The differentiable model.} To learn $\Plp[n,\alpha]$, we construct a differentiable model that is very similar to the one in \autoref{sec:separation}, being a combination of a linear predictor and a ``selected-parity'', but where the linear component has a bias term, and is thus equal to $-1$ (rather then zero) at initialization.  Formally, we construct $f_{\theta}$ with $\theta \in \bbR^n$ (i.e. size $p=n$) that behaves as follows
\begin{equation}\label{eq:f-theta-sec5}
\forall \theta \in \inparen{\textstyle \insquare{0,\frac{2}{n}} \cup \insquare{\frac{3}{n},\frac{5}{n}}}^n \quad : \quad
f_{\theta}(x)
\begin{cases}
	\approx -1 + 2 \inangle{\theta,x + \frac53\alpha\mathbf{1}} & \text{if } \theta \approx 0\\
	= \prod_{i\,:\,\theta_i \ge \frac3n} x_i & \text{if } \exists i \,:\, \theta_i \ge \frac{3}{n}
\end{cases},
\end{equation}
where $\approx$ stands for the first order approximation of $f_{\theta}$ about $\theta=0$, and $\mathbf{1} \in \bbR^n$ is the all-$1$s vector.  Again, \eqref{eq:f-theta-sec5} is the {\em only} property we need, to show that approximate gradient descent learns $\Plp[n,\alpha]$: at initialization, we output $-1$ for all examples, meaning the derivative of the loss is non-negative, and we generally want to try to increase the output of the model.  How much we increase each coordinate $\theta_i$ will depend on $\Ex[x_i+\frac{5}{3}\alpha]$, which is $8/3$ for $i \in I$ but only $2/3$ for $i\not\in I$, and the first step of gradient descent will separate between the coordinates inside and outside the support $I$, thus effectively identifying $I$.  With an appropriate stepsize, the step will push $\theta$ into the regime covered by the second option in \eqref{eq:f-theta-sec5}, and the model will output the desired parity\footnote{The reason for the offset $\frac{5}{3}$ is to ensure all coordinates will be non-negative after the first step, which simplifies identification of the two regimes in \eqref{eq:f-theta-sec5} when implementing the model}.  The key ingredient here is that even though all the coordinates are uncorrelated with the labels, and thus also with residuals at an {\em unbiased} initialization, and the model of \autoref{sec:separation} will have zero gradient at initialization, we artificially make all the residuals at initialization non-positive, thus creating a correlation between each coordinate and the residual, which moves $\theta$ in a way that depends on the marginal $\calD_X$ (as in the proof of \Cref{thm:dist-dependent}).

As with the model of \autoref{sec:separation}, we can implement \eqref{eq:f-theta-sec5} as a continuous differentiable model with scale $C_f= O(n)$, through a slight modification of the architecture described in Equations \eqref{eq:def-f-1}-\eqref{eq:def-f-4}:
\begin{align}
f_\theta(x) &~:=~
\sigma_{0,\frac{2}{n}}^{-1,0}\inparen{\inangle{\theta,\mathbf{1}}} + \sigma_{-1,1}^{-1,1} \left( \inner{\theta, x + {\textstyle \frac53} \alpha \mathbf{1}} + \calG(\theta, x) \right) \label{eq:def-f5-1}\\
\calG(\theta, x) &~:=~ S(\xi(\theta \circ x)) \cdot \inparen{H(\xi(\theta \circ x)) - \inangle{\theta, x + {\textstyle \frac53} \alpha \mathbf{1}}} \label{eq:def-f5-2}
\end{align}
where $S$, $H$ and $\xi$ are as defined in \eqref{eq:def-f-3}, \eqref{eq:def-f-3.5} and \eqref{eq:def-f-4}. The first term in \eqref{eq:def-f5-1} is $-1$ at $\theta^{(0)} = 0$ and satisfies $\nabla_{\theta}~\sigma_{0,\frac{2}{n}}^{-1,0}\inparen{\inangle{\theta^{(0)},\mathbf{1}}} = 0$. The second term is essentially the same as \eqref{eq:def-f-1}, with only difference being the linear term $\inangle{\theta, x}$ is replaced with $\inangle{\theta, x + \frac53\alpha \mathbf{1}}$. Similar to the construction in \autoref{sec:separation}, we get that \eqref{eq:f-theta-sec5} holds and that $C_f^2 \le O(n^2)$.  Furthermore, we can implement $f_\theta$ using a similar feed-forward network of depth $O(\log n)$ and $O(n)$ edges.\info{There was a footnote here about removing weight sharing; I have commented it out because I haven't fully worked it out. See \LaTeX\xspace source.}%\footnote{Since we use both $(\theta_i x_i)$ and $(\theta_i \cdot 1)$, we need two nodes for each coordinate, implementing each of these quantities, and each weight $\theta_i$ will be shared by two edges, one leading to each one of these nodes. With more significant modifications to the model of \autoref{sec:separation}, it is possible to avoid weight-sharing, and implement a model with the same properties using a feed-forward network with each trainable edge corresponding to a different weight parameter.  Here we prefer making the minimal modification to the model of \autoref{sec:separation}, even though this necessitates weight-sharing.}\nati{Please check the clarity and validity of the footnote}\info{PK: I think it might be possible, but it would be a very different architecture. The $\xi$ used in $S(\cdot)$ and $H(\cdot)$ could be adjusted, but the main problem is that the $-1$ term outside needs to detect whether some $\theta_i \gg 0$ or not. The main issue is that $\inangle{\theta,x}$ alone does not contain the information of whether $\theta=0$ or some $\theta_i > 3/n$.}

We prove the following claims in \Cref{apx:neural-net-ub-2} and \Cref{apx:tang-kernel-lb-2} respectively (cf. \href{clm:gd-upper-bound-1}{Claims \ref{clm:gd-upper-bound-1}} and \ref{clm:ntk-edge-1}).

\begin{restatable}{claim}{gdubtwo}\label{clm:gd-upper-bound-2}
For any $n$ and $\alpha \in (0,1)$, for any $\calD_I \in \Plp[n,\alpha]$, gradient descent on the model $f$ of size $p=n$ with $C^2_f=O(n^2)$ described above, with initialization $\theta_0 = 0$, accuracy $\tau = \frac43\alpha$, step size $\eta=1$ and $T=1$ step ensures error $\calL_{\calD_I}(f_{\theta^{(T)}}) \le \alpha$.
\end{restatable}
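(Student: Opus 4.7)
The plan is to execute a single gradient-descent step from the biased initialization $\theta_0 = 0$ and show it produces $\theta^{(1)}$ in the ``selected-parity'' regime of the architecture, so that $f_{\theta^{(1)}}(x) = \chi_I(x)$ pointwise. At $\theta_0 = 0$, the construction yields $f_{\theta_0}(x) = \sigma_{0,2/n}^{-1,0}(0) + \sigma_{-1,1}^{-1,1}(0) = -1$ uniformly, and the model gradient is $\nabla_\theta f_{\theta_0}(x) = 2(x + \tfrac{5}{3}\alpha \mathbf{1})$: the bias term vanishes via $\sigma'(0) = 0$, and $\calG$ vanishes together with its gradient via $S'(0) = 0$, leaving only the linear piece scaled by $(\sigma_{-1,1}^{-1,1})'(0) = 2$.

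The core computation is the per-coordinate loss gradient $(\nabla_\theta \calL_{\calD_I})_i = 2\,\Ex_{(x,y) \sim \calD_I}[(-1 - y)(x_i + \tfrac{5}{3}\alpha)]$, which I evaluate by splitting the mixture. On $\calD_I^{(0)}$, the assumption $|I| \ge 2$ kills both $\Ex[x_i \chi_I(x)]$ and $\Ex[\chi_I(x)]$, leaving only $-\tfrac{5}{3}\alpha$. On $\calD_I^{(1)}$, the deterministic $x = x^I$ combined with $\Ex[y] = 0$ gives $-(x^I_i + \tfrac{5}{3}\alpha)$, which has opposite signs on $I$ versus $I^c$. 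Combining with mixture weights $(1-\alpha, \alpha)$, the $\alpha^2$ cross-terms cancel, yielding the clean factor-four separation
\[
(\nabla_\theta \calL)_i = -\tfrac{16}{3}\alpha \text{ for } i \in I, \qquad (\nabla_\theta \calL)_i = -\tfrac{4}{3}\alpha \text{ for } i \notin I.
\]
The accuracy $\tau = \tfrac{4}{3}\alpha$ is precisely the $L_2$ budget that preserves this gap: after the step $\theta^{(1)} = -g_1$, the per-coordinate bound $|g_{1,i} - (\nabla \calL)_i| \le \tau$ gives $\theta^{(1)}_i \ge 4\alpha$ for $i \in I$ and $\theta^{(1)}_i \le \tfrac{8}{3}\alpha$ for $i \notin I$.

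With $\alpha$ calibrated against the architecture's thresholds so that these intervals land in the promised regime ($\theta^{(1)}_i \ge 3/n$ on $I$ and $\theta^{(1)}_i \in [0, 2/n]$ on $I^c$), I invoke the behavior described in \eqref{eq:f-theta-sec5}: unwrapping $\xi$, $S$, $H$ yields $S = 1$, $H = \chi_I(x)$, and $\calG$ exactly cancels the linear piece inside the second sigma, whose input becomes $\chi_I(x) \in \{\pm 1\}$ so that $\sigma_{-1,1}^{-1,1}(\chi_I(x)) = \chi_I(x)$. The first sigma saturates to $0$ because $\sum_i \theta^{(1)}_i \ge 2 \cdot 4\alpha \ge 2/n$. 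Hence $f_{\theta^{(1)}}(x) = \chi_I(x)$. The loss is then $0$ on $\calD_I^{(0)}$ (perfect parity prediction) and equals $\Ex_y[\tfrac{1}{2}(1-y)^2] = 1$ on $\calD_I^{(1)}$ (since $\chi_I(x^I) = 1$ but $y$ is independent uniform), giving total loss exactly $\alpha$.

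The main obstacle is the regime-matching step: verifying that the adversarial $L_2$ gradient error of size $\tfrac{4}{3}\alpha$ cannot push $\theta^{(1)}$ out of the intended discrete regime defined by the fixed thresholds $2/n$ and $3/n$. The tight interplay between the factor-four gradient gap, the accuracy budget, and the architecture's thresholds essentially pins the ``main'' case to the natural scaling $\alpha = \Theta(1/n)$; extending to all $\alpha \in (0,1)$ likely requires either a trivial argument when $\alpha$ is large (the bound $\le \alpha$ becomes permissive and is met even by the constant $-1$ predictor up to a factor) or a direct computation using the continuous behavior of $\xi$ on the transition regions $\theta_i \in (2/n, 3/n)$.
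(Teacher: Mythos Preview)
Your proposal follows the same approach as the paper: compute $f_{\theta_0}(x)=-1$ and $\nabla_\theta f_{\theta_0}(x)=2(x+\tfrac{5}{3}\alpha\mathbf{1})$ from \eqref{eq:f-theta-sec5}, split the loss gradient over the mixture to obtain $\nabla_\theta\calL_{\calD_I}(f_{\theta_0})=-2\alpha(x^I+\tfrac{5}{3}\mathbf{1})$, i.e.\ $-\tfrac{16}{3}\alpha$ on $I$ and $-\tfrac{4}{3}\alpha$ off $I$, then use the per-coordinate consequence of the $L_2$ accuracy bound to land in the selected-parity regime and read off $f_{\theta^{(1)}}=\chi_I$ with loss exactly $\alpha$. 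Your gradient computation and final loss evaluation are both correct and match the paper.

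The regime-matching obstacle you flag is real, and you have in fact located a discrepancy in the paper. The paper's proof does \emph{not} resolve it by calibrating $\alpha$ to $\Theta(1/n)$ or by analyzing the transition behavior of $\xi$; instead it simply uses step size $\eta=\tfrac{3}{4\alpha n}$ rather than the $\eta=1$ appearing in the claim. With that choice, the per-coordinate bounds $g_i\in[-\tfrac{20\alpha}{3},-\tfrac{12\alpha}{3}]$ on $I$ and $g_i\in[-\tfrac{8\alpha}{3},0]$ off $I$ yield $\theta^{(1)}_i\in[\tfrac{3}{n},\tfrac{5}{n}]$ and $\theta^{(1)}_i\in[0,\tfrac{2}{n}]$ exactly, for every $\alpha\in(0,1)$, and the remainder of your argument goes through verbatim. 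So your proof sketch is correct modulo the step-size typo in the statement, and your speculative fixes (restricting $\alpha$ or handling the transition zone) are unnecessary once $\eta$ is corrected.
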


\begin{restatable}{claim}{ntkedgetwo}\label{clm:ntk-edge-2}
The tangent kernel of the model $f$ at $\theta_0=0$ is the scaled linear kernel endowed with a bias term, $\NTK_{\theta_0}^f(x,x')=(1 + \frac{100}{9}\alpha^2) + 4 \inangle{x,x'}$, and for any $n\geq 2$, $\alpha\in(0,1)$, and any $\calD_I\in\Plp[n,\alpha]$, the error with this kernel is $\inf_{h\in\NTK_{\theta_0}^f(B)}\calL_{\calD_I}(h)=\calL_{\calD_I}(0)=\frac{1}{2}$ for any $B \ge 0$.
\end{restatable}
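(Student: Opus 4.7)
The plan is to identify the RKHS of $\NTK_{\theta_0}^f$ as a subspace of affine functions $x \mapsto c + \inangle{w,x}$, and then show that no affine predictor beats null prediction on any $\calD_I \in \Plp[n,\alpha]$. For the kernel computation, I evaluate at $\theta_0 = 0$. The first summand in \eqref{eq:def-f5-1} equals $\sigma_{0,2/n}^{-1,0}(0) = -1$ with vanishing $\theta$-gradient, because its derivative is proportional to $\sigma'(0) = 0$ (the quadratic piece $2z^2$ of $\sigma$ is flat at the origin). For the second summand, $\calG(0,x) = 0$ since $\xi(0) = 0$ forces $S(0) = 0$, and $\sigma_{-1,1}^{-1,1}(0) = 0$. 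I verify $\nabla_\theta \calG(0,x) = 0$ by the product rule applied to $S(\xi)\cdot(H(\xi) - \inangle{\theta, x + \tfrac{5}{3}\alpha\mathbf{1}})$: the factor $S(\xi(0)) = 0$ kills one term, and $\partial_{s_k} S(s)\big|_{s=0} = 2 s_k \prod_{j \ne k}(1-s_j^2)\big|_{s=0} = 0$ kills the other. Since $\sigma_{-1,1}^{-1,1}{}'(0) = 2$, the chain rule gives $\nabla_\theta f_{\theta_0}(x) = 2(x + \tfrac{5}{3}\alpha \mathbf{1})$. Hence the NTK feature map is $\phi_{\theta_0}(x) = (-1,\, 2x + \tfrac{10}{3}\alpha \mathbf{1})$, and every element of the RKHS takes the affine form $h(x) = c + \inangle{w,x}$.

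Next I fix $I$ with $|I| \ge 2$ and any such affine $h$, and split $\calL_{\calD_I}(h) = (1-\alpha)\calL_{\calD_I^{(0)}}(h) + \alpha \calL_{\calD_I^{(1)}}(h)$. On $\calD_I^{(0)}$ the marginal is uniform on $\{\pm 1\}^n$ and $y = \chi_I(x)$; because $|I| \ge 2$, both $\chi_I$ and $\chi_{I \triangle \{i\}}$ are nontrivial Fourier characters for every $i$, so $\Ex[\chi_I] = 0$ and $\Ex[x_i \chi_I(x)] = 0$, giving $\Ex[h(x)\chi_I(x)] = 0$ and therefore $\calL_{\calD_I^{(0)}}(h) = \tfrac{1}{2}\Ex[h(x)^2] + \tfrac{1}{2} \ge \tfrac{1}{2}$. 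On $\calD_I^{(1)}$, $x = x^I$ is deterministic and $y$ is uniform on $\{\pm 1\}$ independently of $x$, so $\calL_{\calD_I^{(1)}}(h) = \tfrac{1}{2} h(x^I)^2 + \tfrac{1}{2} \ge \tfrac{1}{2}$. Averaging yields $\calL_{\calD_I}(h) \ge \tfrac{1}{2}$, with equality iff $\Ex[h^2] = 0$ and $h(x^I) = 0$, i.e.\ $h \equiv 0$.

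Finally, the zero predictor has RKHS norm zero, so it lies in $\calF(\NTK_{\theta_0}^f, B)$ for every $B \ge 0$ and attains $\calL_{\calD_I}(0) = \tfrac{1}{2}$. Combined with the lower bound above, this yields $\inf_{h \in \NTK_{\theta_0}^f(B)} \calL_{\calD_I}(h) = \tfrac{1}{2} = \calL_{\calD_I}(0)$, as claimed. The main technical obstacle is the careful verification that $\nabla_\theta \calG(0, x) = 0$, since one must thread the chain rule through the compositions $S \circ \xi$ and $H \circ \xi$ while exploiting $\xi(0) = 0$ both to vanish the outer factor $S$ and to vanish $\partial_{s_k} S$; once this is in place, the orthogonality of $\chi_I$ with low-degree Fourier characters on the uniform cube closes the argument.
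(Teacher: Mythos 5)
Your proposal is correct and follows essentially the same route as the paper: identify the tangent feature map at $\theta_0=0$ so that the RKHS consists of affine predictors $x\mapsto c+\inangle{w,x}$, observe that every such predictor is uncorrelated with the labels under $\calD_I$ (for $|I|\ge 2$), and conclude the optimal choice is the zero predictor with loss $\tfrac12$. The paper states this in two lines, while you additionally verify $\nabla_\theta\calG(0,x)=0$ and split the correlation computation over the two mixture components; these details are consistent with the construction and fill in what the paper leaves implicit.
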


We already see that in contrast to a situation where the initialization is unbiased, and so \autoref{thm:unbiased} ensures the tangent kernel has at least a polynomial edge, even if gradient descent succeeds, but with an initialization which is not unbiased, the tangent kernel at initialization might not have any edge at all, and so we cannot hope to remove the requirement of the initialization being unbiased from \autoref{thm:unbiased}:

\begin{sepres}\label{sep:3}
For any $\eps>0$, there exists a source distribution (with binary labels and w.r.t.~the square loss), such that
\begin{itemize}[leftmargin=*]
\item {[Gradient Descent]} Using a differentiable model with $p=2$ parameters, $T=1$ steps, and accuracy $\frac{\tau}{C_f}=\Theta(\eps)$, $\tau$-approximate gradient descent ensures square loss of $\eps$, but
\item {[Tangent Kernel]} The tangent kernel of the model at initialization cannot ensure square loss lower than $\frac{1}{2}$ (which is the loss of null prediction).
\end{itemize}
\end{sepres}
\begin{proof}
Apply Claims \ref{clm:gd-upper-bound-2} and \ref{clm:ntk-edge-2} to the sole source distribution in $\Plp[n=2,\alpha=2\eps]$.
\end{proof}

\noindent But what about the tangent kernel at a different ``initialization'', or of some other model, or even some other kernel altogether?  The following Claim (proved in \Cref{apx:kernel-lb-3}) shows that no kernel can get a significant edge over the zero predictor unless the number of features and the norm are both exponentially large in $n$.  In order to contrast with guarantee \eqref{eq:thm:dist} of \autoref{thm:dist-dependent}, we state the Claim also for learning using a random kernel, i.e.~in expectation over an arbitrary distribution of kernels.
\begin{restatable}{claim}{kerlbtwo}\label{clm:ker-lower-bound-2}
For all $\alpha \in (0,1), p, B, n$, and any randomized kernel $K$ with $\textrm{rank}(K)=p$ almost surely~(i.e.~a distribution over kernels, each of which corresponding to a $p$-dimensional feature map), there exists $\calD_I \in \Plp[n,\alpha]$ such that
\begin{align*}
\Ex_{K} ~ \inf_{h \in \calF(K,B)} ~ \calL_{\calD_I}(h)
~\ge~ \calL_{\calD_I}(0)- \min\set{\frac{p}{2|\Plp|}, O\inparen{\frac{B^{\nicefrac{2}{3}}}{|\Plp|^{\nicefrac{1}{3}}}}}\,.
\end{align*}
where $\calL_{\calD_I}$ is w.r.t.~the square loss, and note that $|\Plp[n,\alpha]| = 2^n - n - 1$.
% For any kernel $K$ corresponding to a feature map $\varphi : \calX \to \bbR^p$, it holds that
% $\sup_{I} \inf_{h \in \calF(K,B)} \calL_{\calD_I}(h) \ge 1/2 - 2^{-n/2}$
% for any $p < 2^{n/2 - 2}$ or $B < .....$
%For every differentiable model $f_\theta$, the distribution-independent NTK of $f_\theta$ has loss at least $\frac{1}{2} -2^{-n/2}$ w.r.t. $\calF$, unless $q > \Omega \left(2^{n/2} \right)$.
\end{restatable}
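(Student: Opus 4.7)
The plan is to mirror the proof of \autoref{clm:ker-lower-bound-1}, reducing the kernel lower bound for $\Plp[n,\alpha]$ to the classical problem of approximating parities under the \emph{uniform} distribution by bounded-norm functions in the RKHS. First, since we only require existence of one hard $\calD_I$, it suffices to bound the edge averaged over $I$; by linearity of expectation I can swap the averaging over $I$ with the expectation over $K$, reducing the claim to showing that for every \emph{deterministic} kernel $K$ with a $p$-dimensional feature map,
\begin{equation*}
\mathbb{E}_I \sup_{h \in \calF(K,B)} \bigl[\calL_{\calD_I}(0) - \calL_{\calD_I}(h)\bigr] ~\le~ \min\!\left\{\tfrac{p}{2|\Plp|},\, O\!\left(\tfrac{B^{2/3}}{|\Plp|^{1/3}}\right)\right\}.
\end{equation*}

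Next I would decompose the loss to isolate the informative part of each $\calD_I$. Using $\calD_I = (1-\alpha)\calD_I^{(0)} + \alpha\calD_I^{(1)}$ and the fact that $y$ is uniform and independent of $x$ (which is set deterministically to $x^I$) under $\calD_I^{(1)}$, a direct computation gives
\begin{equation*}
\calL_{\calD_I}(h) ~=~ \tfrac{1-\alpha}{2}\|h - \chi_I\|_\calU^2 ~+~ \tfrac{\alpha}{2}\bigl(h(x^I)^2 + 1\bigr), \qquad \calL_{\calD_I}(0) ~=~ \tfrac{1}{2},
\end{equation*}
where $\|\cdot\|_\calU$ is the $L^2$ norm under the uniform distribution on $\sbit^n$. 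Dropping the non-negative term $\tfrac{\alpha}{2}h(x^I)^2$ yields
\begin{equation*}
\calL_{\calD_I}(0) - \calL_{\calD_I}(h) ~\le~ \tfrac{1-\alpha}{2}\bigl(1 - \|h-\chi_I\|_\calU^2\bigr).
\end{equation*}
Thus it suffices to lower bound $\mathbb{E}_I \inf_{h \in \calF(K,B)} \|h - \chi_I\|_\calU^2$, i.e.~to show that for the average $I$, the RKHS $\calF(K,B)$ cannot approximate $\chi_I$ well in $L^2(\calU)$.

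Finally I would invoke Lemma~5 of \citet{kamath20approximate} applied to the parity family $\{\chi_I : |I| \ge 2\}$, which is orthonormal in $L^2(\calU)$ and indexes exactly $\Plp$. That lemma supplies two complementary lower bounds on $\sum_I \inf_{h \in \calF(K,B)} \|h - \chi_I\|_\calU^2$: a dimension-based bound $\sum_I (1 - \inf_h \|h - \chi_I\|^2) \le p$, which follows from the orthogonal projection onto the $p$-dimensional feature span having trace at most $p$; and a norm-based bound yielding $\sum_I (1 - \inf_h \|h - \chi_I\|^2) \le O(B^{2/3}|\Plp|^{2/3})$. Substituting into the edge bound and averaging over $I$ produces the claimed $\min$. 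The main obstacle is the norm-based half of Lemma~5: unlike the dimension bound, which is essentially a tracial argument, it requires delicately balancing the Fourier correlation $\widehat{h}(I)$ against the $L^2$-norm cost $\|h\|^2$ under the $B$-constraint, and I would invoke it as a black box. The substantive structural observation is that even though $\calD_I$ is biased through its $\alpha$-mass at the leaking point $x^I$, its \emph{label-informative} component is uniform, so the kernel lower bound reduces to the classical hardness of parities under uniform inputs, and the leaking mass only \emph{helps} the lower bound by contributing the dropped $\tfrac{\alpha}{2}h(x^I)^2$ penalty.
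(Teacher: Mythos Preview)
Your proposal is correct and follows essentially the same approach as the paper: decompose $\calD_I$ into its uniform-parity component $\calD_I^{(0)}$ and its leaking component $\calD_I^{(1)}$, observe that the leaking component contributes at least $\tfrac{1}{2}$ to the loss (your dropped $\tfrac{\alpha}{2}h(x^I)^2$ term), and then reduce to the hardness of approximating orthonormal parities under the uniform marginal via the results of \citet{kamath20approximate}. The only cosmetic differences are that you explicitly average over $I$ and reduce to deterministic kernels before invoking the lemma, whereas the paper packages both bounds into a single theorem (their \autoref{thm:dc-mc-lb}) stated directly for probabilistic kernels; and you attribute both the dimension and norm bounds to Lemma~5, while the paper traces the dimension bound to Theorem~19 and uses Lemma~5 only for the norm-to-dimension reduction.
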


In particular, we see that in contrast to the distribution-dependent situation of \autoref{thm:dist-dependent}, where we could ensure a polynomial edge for the tangent kernel at a specified randomized ``initialization'', this is not possible in general, and we cannot hope to remove the dependence on the input distribution in \autoref{thm:dist-dependent}.

\begin{sepres}\label{sep:4}
For any sequence $\eps_n = 1/\poly(n)$, there exists a sequence of learning problem $\calP_n$ (with binary labels and w.r.t.~the square loss), such that
\begin{itemize}[leftmargin=*]
\item {[Gradient Descent]} for each $n$, using a differentiable model with $p=n$ parameters, realizable by a neural network of depth $O(\log n)$ and $O(n)$ edges (where some edges have trainable weights and some do not), $T=1$ steps, and accuracy $\frac{\tau}{C_f}=O(\frac{\eps_n}{n})$, $\tau$-approximate gradient descent learns $\calP_n$ to square loss of $\eps_n$,\info{there was a phrase here about this being Bayes-optimal. But that is not exactly true! We are $2^{-\Theta(n)}$ close to Bayes-optimal, but not exactly.} but
\item {[Poly-Sized Kernels]} no sequence of (randomized) kernels $K_n$ corresponding to feature maps of dimension $\poly(n)$ (and hence no tangent kernel of a poly-sized differentiable models, even with randomized ``initialization'') can allow learning $\calP_n$ (in expectation) to square loss smaller than $\frac{1}{2}-2^{-\Omega(n)}$ for all $n$; and 
\item{[Arbitrary Kernel, Poly Norm]} no sequence of (randomized) kernels $K_n$ of any dimension can allow learning $\calP_n$ (in expectation) to square loss smaller than $\frac{1}{2}-2^{-\Omega(n)}$ for all $n$ using predictors in $\calF(K_n,B_n)$ of norm $B_n = \poly(n)$.
\end{itemize}
\end{sepres}
\vspace{-5mm}\begin{proof}
Apply Claims \ref{clm:gd-upper-bound-2} and \ref{clm:ker-lower-bound-2} to $\calP_n=\Plp[n,\alpha=\eps_n]$.
\end{proof}

\noindent %We see that we cannot ensure a polynomial edge, but only, at best, an edge of $2^{-\Omega(n)}$.
Since we are working over a domain $\calX=\{-1,+1\}^n$ or cardinality $2^n$, we can always ensure an edge of $(1-\alpha)p/2^{n+1}$ using indicator features on $p$ of the $2^n$ inputs, thus allowing memorization of these tiny fraction of inputs.
%Alternately, with a randomized feature map that encodes a random subset of $p$ of the possible Bayes-optimal predictors for $\calP$, we can get an edge of $(\frac12 - \frac{p}{2|\calP|}) \cdot (1 - \alpha) + \frac\alpha2 = 1 - \frac{(1-\alpha)p}{2|\calP|}$.
An edge of $2^{-\Theta(n)}$ is thus a ``trivial'' edge attained by this kind of memorization, and \autoref{clm:ker-lower-bound-2} and \autoref{sep:4} establish the tangent kernel, or even any other kernel, cannot be significantly better.

We can also conclude that even though we saw that our learning problem is learnable with gradient descent, we cannot hope to learn it with an unbiased initialization, since this would imply existence of a kernel with a polynomial edge.  We thus get the following separation between learnability with gradient descent using biased versus unbiased initialization (importantly, only for distribution {\em independent} learning).

\begin{sepres}\label{sep:5}
For any sequence $\eps_n = 1/\poly(n)$, there exists a sequence of learning problem $\calP_n$ (with binary labels and w.r.t.~the square loss), such that
\begin{itemize}[leftmargin=*]
\item {[GD with \textbf{biased} initialization]} for each $n$, using a differentiable model with $p=n$ parameters, realizable by a neural network of depth $O(\log n)$ and $O(n)$ edges (where some edges have trainable weights and some do not), $T=1$ step, and accuracy $\frac{\tau}{C_f}=O\left(\frac{\eps_n}{n}\right)$, and some (\textbf{not unbiased}) initialization, $\tau$-approximate gradient descent learns $\calP_n$ to square loss $\eps_n$, but 
\item {[GD with \textbf{unbiased} initialization]} It is not possible to ensure error better than $\frac{1}{2}$ on $\calP_n$, for all $n$, using $\tau$-approximate gradient descent starting with an \textbf{unbiased} initialization (as in Eq.~\eqref{eq:unbiased}), and any differentiable models of size $p = \poly(n)$ and accuracy $\tau/C_f = 1/\poly(n)$, and any number of steps $T$.
\end{itemize}
\end{sepres}
\begin{proof}
Consider $\calP_n=\Plp[n,\alpha =\eps_n]$. \autoref{clm:gd-upper-bound-2} ensures learnability with a biased initialization. Suppose $\calP_n$ was also learnable by $\tau$-approximate gradient descent with unbiased initialization, using a model $f$ of size $p_n = \poly(n)$, scale $C_{f}$ satisfying $\tau/C_{f} = 1/\poly(n)$, to within any error better than $0.5$, then \autoref{thm:unbiased} would imply the tangent kernel at initialization would ensure error $\le \frac12 - \frac{1}{\poly(n)}$. This is a fixed kernel that depends only on the model $f_{\theta}$, and corresponds to a feature map of dimension $p_n+1$. Since $p_n = \poly(n)$, this would contradict \autoref{clm:ker-lower-bound-2}.
\end{proof}

%%%%%%%%%%%%%%%%%%%%%%%%%%%%%%%%%%%%%%%%%%%%%%%%%%%
%%%%%%%%%%%%%%%  The Table  %%%%%%%%%%%%%%%%%%%%%%%

% \newcommand{\MR}[2]{\multirow{#1}{*}{#2}}
% \newcommand{\MC}[3]{\multicolumn{#1}{#2}{#3}}
\renewcommand{\ssl}[1]{\makecell[l]{#1}}
\renewcommand{\ssc}[1]{\makecell[c]{#1}}

\newcommand{\unbiased}{$0$} %{U}
\newcommand{\biased}{\texttimes} %{B}
\newcommand{\apxunbiased}{$\!\approx$} %{$\approx\,$U}
\newcommand{\perinst}{--\xspace} %{$\calD$}
\newcommand{\distdep}{$y$\xspace} %{$\calD_{\calX}$}
\newcommand{\distindep}{${\bm x}$\xspace} %{---}

\newcommand{\gdErrCol}{\cellcolor{OliveGreen!10}}
\newcommand{\kerErrCol}{\cellcolor{Gred!10}}

\newcommand{\gdErrBold}[1]{\textcolor{black!20!OliveGreen}{\boldmath #1}}
\newcommand{\kerErrBold}[1]{\textcolor{Gred}{\boldmath #1}}

\begin{table}[!ht]
% \caption{\label{tab:prior-sep}
% Separation results between differentiable learning and kernel methods. See \autoref{sec:survey} for notation.}
\caption{\label{tab:prior-sep}
Prior and new separation results between differentiable learning and kernel methods}
%See \autoref{sec:survey} for notation and comments \ref{enum:ys}-\ref{enum:daniely}. See \Cref{apx:prior-separations} for a %more detailed summary of these papers.}
\vspace{-3mm}
\footnotesize
\renewcommand{\arraystretch}{1.7}
\setlength\tabcolsep{4pt}
\begin{center}
\begin{tabular}{!{\vrule width 1.1pt} l|l !{\vrule width 1.1pt} l|l|l|p{2mm} !{\vrule width 1.1pt} l|l|p{2mm} !{\vrule width 1.1pt}}
%%%% FIRST ROW %%%%
\noalign{\hrule height 1.1pt}
\MC{2}{!{\vrule width 1.1pt}c!{\vrule width 1.1pt}}{\cellcolor{myGold!50}\bf Learning Problem} &
\MC{4}{c!{\vrule width 1.1pt}}{\cellcolor{Ggreen!50}\bf Learning with Gradient Descent} &
\MC{3}{c!{\vrule width 1.1pt}}{\cellcolor{Gred!50}\bf Kernel Lower Bound}\\
%%%% SECOND ROW %%%%
\cline{1-9}
%{\bf Paper} &
{\bf Labels} &
{\bf Inputs} &
{\bf Model} &
{\bf Type of GD} &
{\bf \gdErrBold{Error}} &
{\bf \,I} & % Init
{\bf Kernel} &
{\bf \kerErrBold{Error}} &
{\boldmath$\!\calD$}\\ % Fixed
\noalign{\hrule height 1.1pt}
%%%%%%%% PAPERS START HERE %%%%%%%%
%\cite{yehudai19power} &
& &
& & & %\MC{2}{c|}{\cite{soltanolkotabi17learning}} &
\MR{5}{\unbiased} &
\MC{2}{c|}{\ssc{\cite{yehudai19power}}} & \\
\MR{4}{\ssl{Single ReLU\\ $y=[\inangle{w,x}+b]_+$}} &
\MR{4}{\ssl{Gaussian}} &
\MR{4}{\ssl{Single ReLU\\ Zero Init}} &
\MR{4}{\ssl{\\[1mm] Batch GD\\[0.5mm] \scriptsize $\poly(n \log\frac{1}{\eps})$ \\\scriptsize steps\\[0.2mm] \scriptsize $\poly(\frac{n}{\eps})$ samples}} &
\MR{4}{\ssl{\gdErrBold{$\eps\rightarrow 0$} \\[3mm] when \\[-.5mm] $b=0$}} &
 & dim $\le 2^{O(n)}$ & % In the wider table and the appendix explain this is $2^{cn}$ for some specific $c$.  % \ssl{dim $\le 2^{c.n}$\\[0.5mm] {\scriptsize (for some $c$)}} 
\ssl{\kerErrBold{$\Omega(1/n^8)$ : \scriptsize \bf see \ref{enum:ys}}}& % I made the note also kerErrBold to draw attention to the fact that it's note quite a separation (all the other notes are less important and mostly explain details---but this one is crucial) -Nati
\distdep\\
\cline{7-9}
%%%%%%%%%%%%%%%%%%
%\MR{2}{\cite{ghorbani19linearized}} &
& & & & & &
\MC{2}{c|}{\cite{ghorbani19linearized}} & \\
& & & & & &
\ssl{{\bf rotationally inv.}\\ \#samples $\le n^c$}& 
\MR{2}{\ssl{\\[1mm]\kerErrBold{$\geq \eps'(c)$} $>0$\\[3mm]\scriptsize even if $b=0$ }}&
\MR{2}{\perinst}\\
\cline{7-7}
%%%%%%
& & & & & &
\ssl{\\[-2.8mm]{\bf NTK} of depth-$2$\\ ReLU Net\\ with $\leq n^c$ units }&
& \\
\cline{1-9}
%%%%%%%%%%%%%%%%%%
%\MR{2}{\ssl{\\[6mm]\cite{ghorbani19limitations}}} &
Quadratic function &
Gaussian &
\MR{2}{\ssl{\\[-2.9mm]Depth-$2$ Net\\ quad.~activations\\ \# units = $\rho \cdot n$\\ for $\rho < 1$}} &
\MR{2}{\ssl{\\Gradient Flow\\[.5mm] on Population}} &
\MR{2}{\ssl{\\\gdErrBold{${\rightarrow\eps(\rho)}$}\\[1mm] \scriptsize see \ref{enum:rho}}} &
\MR{2}{\apxunbiased} &
\MC{2}{c|}{\cite{ghorbani19limitations}} &
\MR{2}{\perinst} \\
\cline{1-2}
%%%%%%
\MC{2}{!{\vrule width 1.1pt}l!{\vrule width 1.1pt}}{{\ssl{\\[-2.2mm]Binary classification between two\\zero mean Gaussians (diff.~cov.)\\[-2.5mm]\mbox{}}}} % In the wider format table this should be expanded
& & & & & 
{{\bf NTK} at Init} &
\ssl{\kerErrBold{$\to \eps'(\rho)$} $> \eps(\rho)$\\ \scriptsize see \ref{enum:rho}} &  \\
\cline{1-9}
%%%%%%%%%%%%%%%%%%
%{\cite{li20beyondntk}} &
\MR{2}{\ssl{Depth-$2$ Net with \\  restricted weights\\ abs. value activations}} &
\MR{2}{Gaussian} &
\MR{2}{\ssl{Depth-$2$ ReLU\\ $\poly(n)$ units \\ Random Init}} &
\MR{2}{\ssl{Truncated$^{\text{\ref{enum:li-et-al}}}$ GD\\ \scriptsize $\poly(n)$ steps \\ \scriptsize $\poly(n)$ samples}} &
\MR{2}{\ssl{\\[-2.5mm]\gdErrBold{$< \frac{1}{n^{1+\delta}}$}\\[1mm] {\scriptsize for some}\\[-.6mm]{\scriptsize $\delta>0$}}} & % $= \eps$
\MR{2}{\ssl{\biased}} &
\MC{2}{c|}{\cite{li20beyondntk}} & \\
& & & & & &
dim $\le \poly(n)$ &
{\kerErrBold{$\Omega(\frac1n)$} $>\Omega(\eps^{1-\frac{\delta}{2}})$}& % $> \eps^{1-c/2}$
\distdep\\
\cline{1-9}
%%%%%%%%%%%%%%%%%%
%\cite{allenzhu19resnets} &
\MR{2}{\ssl{$y\in \bbR^{\log n}$ \\ $y= x_S + \eps^{\frac{1}{4}} \chi_S(x) \mathbf{1}$}} & %\mynote{ResNet : describe} (encoding\\ $k$-sparse Parities;\\ + depends on $\eps \lesssim 2^{-k}$)} &
\MR{2}{Uniform} &
\MR{2}{\ssl{Depth-$2$ ResNet\\ $\poly(n/\eps)$ units \\ Random Init}} &
\MR{2}{\ssl{SGD\\ $\poly(n/\eps)$ steps}} &
\MR{2}{\ssl{\gdErrBold{$<\eps$}}} &
\MR{2}{\biased} &
\MC{2}{c|}{\cite{allenzhu19resnets}} &
\\
& & & & & &
dim $\le \poly(n)$ &
{\kerErrBold{$> \sqrt{\eps}$}}&
\distdep \\
\cline{1-9}
%%%%%%%%%%%%%%%%%%
%\cite{allenzhu20backward} &
\MR{2}{\ssl{ResNet-type model\\of depth $\Theta(\log\!\log n)$ \\ with restricted weights}} &
\MR{2}{Gaussian} &
\MR{2}{\ssl{Same as teacher\\{$\poly(n)$ units}  \\ Zero Init}} &
\MR{2}{\ssl{Regularized$^{\text{\ref{enum:az-li-reg}}}$\\ SGD\\\scriptsize $\poly(n/\eps)$ steps}} &
\MR{2}{\gdErrBold{$\eps \to 0$}} &
\MR{2}{\unbiased} &
\MC{2}{c|}{\cite{allenzhu20backward}} &
\\
& & & & & &
dim $\le \poly(n)$ &
{\kerErrBold{$\Omega(\frac{1}{n^{0.01}})$}}&
\distdep\\
\cline{1-9}
%%%%%%%%%%%%%%%%%%
%\cite{daniely20parities} &
\MC{2}{!{\vrule width 1.1pt}l!{\vrule width 1.1pt}}{\MR{2}{\ssl{\\[-1.5mm] Sparse parities with input\\distribution leaking the support}}} &
\MR{2}{\ssl{\\[-2.5mm]Depth-$2$ ReLU6 \\ {\scriptsize $\poly(n)$ units} \\ \scriptsize Unbiased$^{\text{\ref{enum:daniely}}}$ Init}}&
\MR{2}{\ssl{\\[-2.5mm]Regularized \\ Population GD\\\scriptsize $\poly(n)$ steps}} &
\MR{2}{\ssl{\\[-2.3mm]\gdErrBold{$\frac{1}{\poly(n)}$}\\[2mm]\scriptsize hinge loss}} &
\MR{2}{\unbiased} &
\MC{2}{c|}{\cite{daniely20parities}} &
\\
\MC{1}{!{\vrule width 1.1pt}c}{}
& & & & & &
\ssl{dim $\le \poly(n)$\\ norm $\le \poly(n)$}&
\ssl{\kerErrBold{$\Omega(1)$}\\ {\scriptsize hinge loss}} &
\distindep \\
\noalign{\hrule height 1.5pt}
%\cline{1-9} % make this a double line or bold
%%%%%%%%%%%%%%%%%%
\MC{9}{!{\vrule width 1.1pt}c!{\vrule width 1.1pt}}{\cellcolor{Gblue!20}\textbf{Our Results : Separations \ref{sep:1}, \ref{sep:2}, \ref{sep:3}, \ref{sep:4}}}\\
\hline
%%%%%%%%%%%%%%%%%%
%\autoref{sep:1} &
$\mathsf{XOR}$ of two bits &
\MR{2}{\ssl{Mix of \\ uniform\\ {\tiny \&} biased \\ bits}} &
\MR{2}{Specialized} &
\MR{4}{\ssl{$\tau$-approximate\\Grad Descent\\{\scriptsize (e.g. GD with}\\{\scriptsize $\Omega(1/\tau^2)$ samples)} \\[2mm] $1$ step}} &
\MR{2}{\gdErrBold{$0$}}&
\MR{2}{\unbiased} &
{\bf NTK} at Init &
\ssl{\\[-2mm]\kerErrBold{$\ge \calL(0) - \tau$}\\ {\scriptsize arbitrarily small $\tau$}}  &
\perinst \\
\cline{1-1}\cline{7-9}
%%%%%%%%%%%%%%%%%%
%\autoref{sep:2} &
\ssl{$\log n$-sparse parities} &
& & & & &
\ssl{dim $\le \poly(n)$}&
\kerErrBold{$\calL(0) - n^{-\Omega(1)}$} &
\distdep\\
\cline{1-3}\cline{5-9}
%%%%%%%%%%%%%%%%%%
%\autoref{sep:3} &
\MC{2}{!{\vrule width 1.1pt}l!{\vrule width 1.1pt}}{\MR{2}{\ssl{Mix of parities and input\\distribution leaking the support\\ {\footnotesize (mixture ratio depends on $\eps$)}}}} &
\MR{2}{Specialized} &
&
\MR{2}{\gdErrBold{$\le \eps$}} &
\MR{2}{\ssl{\biased}} &
{\bf NTK} at Init &
\kerErrBold{$\calL(0)$} &
\perinst\\
\cline{7-9}
%%%%%%%%%%%%%%%%%%
%\autoref{sep:4} &
\MC{2}{!{\vrule width 1.1pt}c!{\vrule width 1.1pt}}{} 
& & & & &
dim $\le \poly(n)$ &
\kerErrBold{$\calL(0) - 2^{-\Omega(n)}$} &
\distindep \\
\noalign{\hrule height 1.1pt}
\end{tabular}
\end{center}
\footnotesize
\vspace{-3mm}
See \autoref{sec:survey} for table description and notation, and \autoref{apx:prior-separations} for further details. \newline
Column \textbf{I}: $0$ indicates unbiased initialization, $\approx$ almost unbiased, $\times$ not unbiased. \newline
Column {\boldmath $\mathbf{\calD}$}: ``\perinst'' indicates lower bound for specific source distribution, ``\distdep'' for learning problem with fixed input distribution $\calD_\calX$, ``\distindep'' for learning problem with variable input distributions.
\fontsize{9pt}{3.5pt}\selectfont
\begin{enumerate}[leftmargin=*,itemsep=0pt,label={(\arabic*)}]
\item\label{enum:ys} There is a discrepancy between the lower bound of \citet{yehudai19power} and the upper bounds on learning a Single ReLU with gradient descent \citep[e.g.][]{soltanolkotabi17learning,mei2016landscape} in that the upper bounds are only without a bias (i.e.~when $b=0$) but the lower bound relies crucially on the bias $b$.  \citeauthor{yehudai19power} thus do not establish a separation between differentiable learning and kernel methods.  See \autoref{apx:prior-separations} for discussion on possible remedies.
%As communicated by GY, ongoing work by GY, YS and Gal Vardi indicates that gradient descent can also learn a single ReLU with a bias term, which would establish a separation. Also, a soon-to-be-updated\unsure{is ``soon-to-be-updated'' okay?} version of \citep{kamath20approximate} shows a kernel lower bound for ReLU without bias terms.
\item\label{enum:rho} The gradient descent error $\eps(\rho)$ and kernel error $\eps'(\rho)$ depend on $\rho$ and on the specific source distribution (i.e.~the quadratic target or the covariances of the classes), or rather sequence of source distributions as $n\rightarrow\infty$.  But \citet{ghorbani19limitations} show that for non-degenerate distribution sequences, and any $\rho<1$, the limit $\eps(\rho)$ of the gradient descent error (as $n\rightarrow\infty$) will be strictly lower than a lower bound $\eps'(\rho)$ on the error of the NTK.  
\item\label{enum:li-et-al}\citet{li20beyondntk} analyze a form of GD where large coordinates of the gradient are set to zeros.\removed{If the the gradient is set to zero, then it will just stop.  Is this the case here??}  
\item\label{enum:az-li-reg} \citet{allenzhu20backward} use non-standard but simple regularization.  
\item\label{enum:daniely} \citet{daniely20parities} use random initialization, but units are paired with complimentary initialization, as suggested by \citet{chizat19lazy}, so as to ensure an unbiased initialization. % This note isn't very important and can also be removed to save space or simplify
\end{enumerate}
\end{table}

%%%%%%%%%%%%%%  End of Table  %%%%%%%%%%%%%%%%%%%%%
%%%%%%%%%%%%%%%%%%%%%%%%%%%%%%%%%%%%%%%%%%%%%%%%%%%

\section{Survey of Separation Results} \label{sec:survey}

In Sections \ref{sec:separation} and \ref{sec:no-weak-learn} we described learning problems for which gradient descent learning succeeds, but where the NTK, or indeed any other (reasonably sized) kernel, has either a very small edge, or even no edge at all.  Our emphasis was on establishing not only that the NTK or kernel methods get worse error than gradient descent (as in previous separation results), but on bounding how close to the null prediction they must be (i.e.~how small an edge they must have).  Here we survey such prior separation results, understanding the relationships between them, and how they relate to the new separations from this paper.

\paragraph{Survey Table.} \Cref{tab:prior-sep} summarizes prior results showing separations between the error that can be ensured using gradient descent versus using kernel methods, as well as our Separations \ref{sep:1}, \ref{sep:2}, \ref{sep:3} and \ref{sep:4}.  All problems are over $\calX=\mathbb{R}^n$ or $\{\pm 1\}^n$\removed{, generally with $\calD_X$ Gaussian or uniform over $\{\pm1\}^n$ (unless indicated otherwise),} with scalar labels ($\calY=\mathbb{R}$ or $\{\pm 1\}$) and w.r.t.~the squared loss, except \citet{allenzhu19resnets} which uses multiple outputs ($y$ is a vector), and \citet{daniely20parities} which is w.r.t.~the hinge loss.  

The middle set of green columns indicate what error can be ensured by running the indicated type of gradient descent variant on the indicated type of model.  An error of ``$\rightarrow 0$'' indicates that the learning problem only depends on $n$, and that for every $n$, gradient descent on a fixed model (that depends on $n$ but not $\eps$) can make the error $\eps$ arbitrarily small using $\poly(n/\eps)$ samples and/or steps.
An error of ``$\leq \eps$'' indicates that the learning problem also depends on $\eps$: for all $n$ and $\eps>0$, there exists a learning problem on which gradient descent ensures error $\leq \eps$ (but kernel methods incur larger error). The annotations $0/\approx/\times$ under column ``I'' indicates: {\boldmath$0$} if the initialization used is unbiased, or could be easily made unbiased; {\boldmath$\approx$} if it is nearly unbiased (error at initialization close to null prediction); or {\boldmath$\times$} if the initialization is far from null.

The right set of pink columns shows the lower bound on the error for the kernel, or class of kernels indicated, or under other restrictions (``dim'' is the number of features or rank of the kernel, ``norm $\leq B$'' means restricting to $\calF(K,B)$ as in \eqref{eq:FKB}, NTK is the tangent kernel of the model used for gradient descent unless otherwise specified).  In all cases, the error is normalized so that the error of  null perdition is $\calL(0)=\nicefrac{1}{2}$.  For our separations, the error is given in terms of the edge (if any) over null prediction.  In all prior separations, the error lower bound is bounded away from the null prediction (the edge is at least $0.1$), and the actual error lower bound is indicated.  The notation $\geq \eps'(c)>0$ and $\geq\eps'(\rho)>\eps(\rho)$ indicates that for any choice of $c$ or $\rho$, there is some $\eps'(c)>0$ or $\eps'(\rho)>\eps(\rho)$ which lower bounds the error.  

The last column indicates the nature of the learning problem used to establish the lower bound, and thus whether the separation is distribution dependent or independent (in the sense introduced in \autoref{sec:prelims}): ``{\boldmath$-$}'' indicates that the separation is for a specific source distribution (or sequence of source distribution with increasing input dimension $n$).  This is the strongest form of separation, implying also distribution dependent separation, and is only possible by restricting to a specific kernel or kernel class.  The notation ``{$y$}'' indicates the lower bound is for a learning problem $\calP$ where the input distribution $\calD_\calX$ is fixed, and the source distributions in $\calP$ vary only in the labeling function $y(x)$ (which is deterministic in all such problems considered in the table).  This yields a ``distribution dependent'' separation (i.e.~a separation that holds even if the kernel is allowed to depend on hte input distribution).  The notation ``{\boldmath${x}$}'' indicates that the lower bound is for a learning problem $\calP$ where different source distributions $\calD\in\calP$ have varying input distributions $\calD_\calX$, but we seek a kernel that would work well for all distributions in the learning problem.  This yields a weaker ``distribution independent'' separation, as it only implies a separation if the kernel choice is not allowed to depend on the input distribution $\calD_\calX$. \info{what does this line onwards mean?}It should be noted that these differences are due to differences in how the {\em lower bound} is established, and not differences in limitations or generality of the gradient descent guarantees (in all settings considered, the gradient descent guarantees are under severe restrictions on the input distribution, but the model used for training is independent of the specific input distribution meeting these restrictions).

\paragraph{Discussion.} As can be seen from \Cref{tab:prior-sep}, in all the prior separation results, the lower bound on the error of kernel methods is bounded away from $\calL(0)$, i.e.~the upper bound on the edge is $O(1)$. In fact, in all prior separations except for \cite{daniely20parities}, the kernel error goes to zero when the error of gradient descent goes to zero, it just goes to zero slower (\citeauthor{daniely20parities} establish a lower bound on the error that is bounded away from zero, but it still corresponds to a constant edge).  In contrast, we exhibit a situation where the NTK, or  any kernel method, has only vanishing (or zero) edge, and study how quickly this edge goes to zero. In order to obtain a separation where the edge of the NTK at initialization is zero, and the edge of any kernel is exponentially small, we construct a learning problem where the input distribution is not fixed, and where gradient descent learning is possible only with a biased initialization---this is necessary otherwise Theorems \ref{thm:unbiased} and \ref{thm:dist-dependent} would tell us the edge must be at least polynomial.  It is interesting to note that none of the prior constructions are of this nature: the only prior construction that relies on variable input distributions is that of \citet{daniely20parities}, but they use an unbiased initialization.\removed{(most other constructions also use an unbiased, or nearly unbiased, initialization, or could be made to use one, and do not explicitly leverage the bias as in our Separations \ref{sep:3} and \ref{sep:4}).\nati{Actually, several are biased, so this sentence doesn't quite hold}}  

The different separation results differ in the type of kernels for which they establish lower bounds.  Many of the separation results (including our Separations \ref{sep:2} and \ref{sep:4}) establish lower bounds for any kernel corresponding to a poly-dimensional feature space, and thus for the NTK of any poly-sized model.  When considering the square loss, such lower bounds also imply lower bounds for any kernel (of any dimensionality) using poly-norm predictors, as in our Claims \ref{clm:ker-lower-bound-1} and \ref{clm:ker-lower-bound-2} \citep[also see Lemma 5 in][]{kamath20approximate}.  Since \cite{daniely20parities} considered the hinge loss, they required that {\em both} the dimension {\em and} the norm be polynomial (see \citet{kamath20approximate} for a discussion on kernel lower bounds for the squared norm versus the hinge loss.  In yet unpublished followup work, we obtain lower bounds on the hinge loss also without a norm bound).

As discussed in \autoref{sec:prelims}, in order to establish lower bounds with respect to any kernel, one necessarily needs to consider an entire learning problem (with multiple different possibly labelings) rather than a specific source distribution.  \citet{ghorbani19limitations,ghorbani19linearized} take a different approach: they restrict the kernels being considered to NTKs of a specific model, NTKs of a class of models, or general rotationally invariant kernels, and so are able to establish separations for specific source distributions (between differentiable learning and using one of these specific kernels).  These separations are more similar to our Separations \ref{sep:1} and \ref{sep:3}, which are specific to the NTK of the model being used (theirs are more general), but their separations holds only for large input dimensions $n$, while ours holds already for dimension $n=2$.  \removed{In fact, all prior separation results (as well as our Separations \ref{sep:2} and \ref{sep:4}) are stated as $n\rightarrow\infty$, and when other parameters scale in appropriate ways with the input dimension $n$ (as specified in the table, and most commonly the dimension of the kernel increasing only polynomially with $n$).  For most results, this is just a simplification to allow ``any polynomial scaling'', but the actual lower bounds hold for finite $n$ once a specific behaviour of the dimension or other parameters are plugged in (similar to our Claims \ref{clm:ker-lower-bound-1} and \ref{clm:ker-lower-bound-2}).\nati{I think this is not true for Andrea's two papers, but not sure I want to get into it}}

A deficiency of our Separations \ref{sep:3} and \ref{sep:4}, shared also by \citet{li20beyondntk,allenzhu19resnets,daniely20parities} and \citet{ghorbani19limitations}, is that we do not demonstrate a fixed learning problem where the gradient descent error goes to zero.  Instead, for every $\eps>0$, we construct a different problem where the gradient descent error is $\leq \eps$ (while the kernel error is high; close to that of the null predictor). \citet{ghorbani19linearized} and \citet{allenzhu20backward}, as well as our Separations \ref{sep:1} and \ref{sep:2}, do use learning problem where gradient descent can get arbitrarily small error.  It would be interesting to strengthen Separations \ref{sep:3} and \ref{sep:4} so that gradient descent could converge to zero error (with a fixed learning problem and model).

The separation results also differ in the form of gradient descent they analyze.  
\citet{ghorbani19limitations} and \citet{daniely20parities} analyze gradient descent or gradient flow on the population, which is the limit of GD/SGD when the number of samples and/or iterations goes to infinity (although how quickly these should increase is not analyzed).
But most analysis is for batch or stochastic gradient descent (which is what would be used in practice) with polynomial samples and iterations \citep{soltanolkotabi17learning,mei2016landscape,allenzhu19resnets}, perhaps with slight modifications or regularization \citep{li20beyondntk,allenzhu20backward}.  
%\nati{Please double check it is indeed not analyzed}).
%Eran - doesn't seem like they do
Our analyses is for $\tau$-approximate gradient descent with polynomial accuracy $\tau$, which subsumes batch (or mini-batch) gradient descent with polynomially many samples (or batch size).  But our analysis is perhaps odd and unnatural in that it relies on a single step with a large stepsize, rather than allowing the number of steps to increase and the stepsize to decrease.  It should be possible, though is technically much more complicated, to extend our analysis to cover also gradient descent with any stepsize smaller than the one we use, and thus also gradient flow.  This would establish strong separation also based on a more restrictive definition of differentiable learning, which requires robustness with respect to the stepsize.

Finally, the models we use are chosen to be simple to analyze, but they are perhaps not ``natural''.  We do show how they can be implemented with a sigmoidal network, but this is a network with a very specific architecture, or alternatively a fully connected network with very specific initialization and where some of the weights are fixed rather than trained.  Showing similarly strong separations with more natural architectures and random initialization would be desirable.  The empirical demonstration in \autoref{fig:parity_experiment} indicates this should be possible (though perhaps technically involved).  \citet{allenzhu19resnets,allenzhu20backward} also use a specialized residual architecture (though perhaps not as specific as ours), while the others \citep{soltanolkotabi17learning,mei2016landscape,ghorbani19limitations,li20beyondntk,daniely20parities} use fairly benign networks and initialization.

%%%%%%%%%%%%%%%%%%%%%%%%%%%%%%%%%%%%%%%%%%%%%%%%%%%%%%%%%%%

\section{Conclusion and Discussion}\label{sec:discussion}
With the study of Neural Tangent Kernels increasing in popularity, both as an analysis and methodological approach,  it important to understand the limits of the relationship between the Tangent Kernel approximation and the true differentiable model.  Furthermore, the notion of ``gradual'' learning of deep models, where we learn progressively more complex models, or more layers\removed{ \cite{GeofHinton}}, and so the success of deep learning rests on being able to make progress even with simpler, e.g.~linear models, is an appealing approach to understanding deep learning.  Indeed, when we first asked ourselves whether the tangent kernel must always have an edge in order for gradient descent to succeed, and we sought to quantify how large this edge must be, we were guided also by understanding the ``gradual'' nature of deep learning.  We were surprised to discover that in fact, with biased initialization, deep learning can succeed even without the tangent kernel having a significant edge.  

Our results also highlight the importance of the distinction between distribution dependent and independent learning, and between biased and unbiased initialization.  The gap between distribution dependent and independent learning relates to kernel (i.e.~linear) methods inherently not being able to leverage information in $\calD_\calX$: success or failure is based on whether $y|x$ is well represented by a predictor in $\calF(K,B)$, and has little to do with the marginal over $x$.  In contrast, gradient descent on a non-linear model, could behave very differently depending on $\calD_{\calX}$, as we also see empirically in the experiment in \autoref{fig:parity_experiment}.  Perhaps even more surprising is the role of the bias of the initialization.  It might seem like a benign  property, and that we should always be able to initialize with zero, or nearly-zero predictions, or at least at $\theta_0$ that is not much worse than null, or perhaps correct for the bias as in \citet{chizat19lazy}.  But we show that at least for distribution-independent learning, this is not a benign property at all: for some problems we {\em must} use biased initialization (\autoref{sep:5}).  This observation may be of independent interest, beyond the role it plays in understanding the Neural Tangent Kernel.

The learning problems and models we used to demonstrate the separation results are artificially extreme, so as to push the separation to the limit, and allow easy analytical study.  But we believe they do capture ways in which gradient descent is more powerful than kernel methods.  In the example of \autoref{sec:separation}, gradient descent starts by selecting a few ``simple features'' (the $k\ll n$ relevant coordinates), based on simple correlations.  But unlike kernel methods, gradient descent is then able to use these features in more complex ways.  We see this happening also empirically in \autoref{fig:parity_experiment} with a straightforward two-layer ReLU network, where gradient descent is able to succeed in learning the complex parity function, once there is enough signal to easily identify the few relevant coordinates.  In the example of \autoref{sec:no-weak-learn}, we see how gradient descent is also able to pick up on structure in the input (unlabeled data) distribution, in a way that kernel methods are fundamentally unable to.

%\mynote{Should we mention as an open problem: Construct examples similar to those in \autoref{sec:separation} and \autoref{sec:no-weak-learn} where gradient flow is able to learn, but kernel methods cannot. Maybe to formalize this, we necessarily need to talk about gradient flow on the empirical loss? Nati: I don't think so.  This is a fairly natural question someone might ask, but I don't think we have something to say about it, and I also don't actually think it will be very insightful.  A much more important issue is "honest" GD or SGD versus $\tau$-accurate.  It might be good, but complex, to discuss this.  Maybe this is better done with an edit after the model paper comes out.}

\section*{Acknowledgements}

We thank Gilad Yehudai for clarifying our questions about \citet{yehudai19power}.  This work was done while NS was visiting EPFL.\unsure{Any more acknowledgements?}  This research is part of the NSF/Simons funded {\em Collaboration on the Theoretical Foundations of Deep Learning} (\href{https://deepfoundations.ai/}{deepfoundations.ai}). PK was partially supported by NSF BIGDATA award 1546500. 

\newpage

%\bibliographystyle{icml2021}
%\bibliography{refs}
\bibliography{main.bbl}
%\printbibliography[title=References]

\newpage
\onecolumn
\appendix

\section{Prior work on Separations between Differentiable Learning and Kernels}\label{apx:prior-separations}

We provide more details on the prior work presented in \Cref{tab:prior-sep} that demonstrate learning tasks where gradient-descent based learning can ensure smaller error than any reasonably sized kernel method.

In the summary below, whenever a $\poly(\cdot)$ terms appears under ``Learnability with Gradient Descent'', it is always for a particular $\poly(\cdot)$ that has been explicitly or implicitly specified in the corresponding reference. On the other hand, for every $\poly(\cdot)$ term that appears under ``Non-learnability with any Kernel Method'', it is always meant to hold for {\em any} $\poly(\cdot)$.

\begin{enumerate}[leftmargin=*]
\item \cite{yehudai19power,ghorbani19linearized} : \textbf{Class of ReLU neurons over Gaussian inputs}
\begin{itemize}[leftmargin=3mm]
\item {\em Learning Problem:} With $\calX=\bbR^n$ and $\calY=\bbR$, a source distribution is given by $\calD_{\calX} = \calN(0,I_n)$ and $y = [\inangle{w,x} + b]_+$. To make the result comparable to others, in terms of the value of the loss, we consider a normalization of $w$ and $b$ such that the loss of the null predictor $\calL(0) = 1/2$. In the original formulation of \cite{yehudai19power}, $w,b$ are chosen s.t. $\|w\|, |b| \le O(n^4)$, so that the value $y$ is bounded by $O(n^4)$, and the loss of the null predictor is $O(n^8)$ (due to the square loss). Therefore, in order to have $\calL(0)=\Theta(1)$, the values of $w,b$ are scaled down by $1/n^4$, and the lower-bound on the loss is $\Omega(1/n^8)$ (and not $\Omega(1)$ as stated in the paper).
\item {\em Learnability with Gradient Descent:} \cite{soltanolkotabi17learning,mei2016landscape} showed that, in the case when $b=0$, projected batch gradient descent over the ReLU model with $\poly(n/\eps)$ samples with zero initialization (i.e., unbiased initialization with $w^{(0)} = 0$), can ensure square loss at most $\eps$ with $O(\poly(n)\log(1/\eps))$ steps. Importantly, this holds for any $\eps > 0$. In the analysis of the positive result, initializing from zero is important, since in this case the first step of gradient-descent already captures the ``direction'' of the target neuron.

%While these results do not handle the case where $b \ne 0$, it was communicated to us by Gilad Yehudai that an ongoing work with Ohad Shamir and Gal Vardi indicates that this is also possible.

\item {\em Non-learnability with any Kernel Method:} \cite{yehudai19power} show that any randomized feature map of dimension at most $2^{c_1 n}$ with coefficients of the predictors bounded by $2^{c_1 n}$, for some universal constant $c_1$, must incur a square loss of at least $\Omega(1/n^8)$; this is a scaled version of their stated result, which was for $\|w\| = n^3$ and $|b| \le 6 n^4 + 1$. Subsequently, \citet{kamath20approximate} showed the same bound can be obtained without the restriction on coefficients of the predictors, leading to the lower bound as presented in the \autoref{tab:prior-sep}.   Both of these lower bounds crucially rely on a non-zero bias term $b$.

Because the upper bounds are specific to the case $b=0$ where there is no bias term, while the lower bound relies on a bias term $b\neq 0$, the analysis of \citeauthor{yehudai19power} does not establish a separation between differentiable learning and kernel methods.  It was communicated to use by Gilad Yehudai that ongoing work with Ohad Shaimr and Gal Vardi indicates that gradient descent also succeeds in the presence of a non-zero bias term, which would yield a separation.  Alternatively, in our own yet unpublished work, we observed it is also possible to obtain a lower bound for ReLU without the bias term, which also yields a separation.\improve{fix citation when we have it.}

\cite{ghorbani19linearized} show that for any constant $c_2$, any kernel method that is either (i) a rotationally invariant kernel using at most $n^{c_2}$ samples or (ii) NTK of a depth-$2$ ReLU Net with at most $n^{c_2}$ units must incur a square loss of at least $\eps'(c_2)$, which is a positive constant that depends only on $c_2$. This result holds even for a fixed source distribution $\calD$, that is, a fixed choice of $w$ and $\calD_{\calX} = \calN(0,I_n)$. This result holds with probability approaching $1$, as the dimension $n$ and the number of features grow to infinity.
\end{itemize}
%%%%%%%%%%%%%%%%%%%
\item \cite{ghorbani19limitations} : \textbf{\boldmath (i) Convex Quadratic Functions, (ii) Mixture of Gaussians (Binary Labels)}
\begin{itemize}[leftmargin=3mm]
\item {\em Learning Problems:} Two learning problems are considered here; both w.r.t. square loss.

(i) With $\calX = \bbR^n$ and $\calY = \bbR$, a source distribution is given by Gaussian input marginals $\calD_{\calX} = \calN(0,I_n)$ and $y = b_0 + x^TBx$ for $B\succeq 0$.

(ii) With $\calX = \bbR^n$ and $\calY = \sbit$, a source distribution is specified by $(\Sigma^{(1)}, \Sigma^{(-1)})$, where $y \sim \calU(\sbit)$, and $x \mid y \sim \calN(0, \Sigma^{(y)})$.

\item {\em Learnability with Gradient Descent:} For a depth-$2$ network with quadratic activations, with $N$ units and any random initialization that is absolutely continuous w.r.t. Lebesgue measure (in particular, the initialization can be chosen to nearly unbiased), it is shown that in the asymptotic regime as $n, N \to \infty$ with $\rho = N/n < 1$, gradient flow w.r.t. population loss ensures square loss that approaches $\eps(\rho)$, which is a closed form expression that also depends on the specific source distribution (i.e.~the quadratic target or the covariances of the classes), or rather sequence of source distributions as $n\rightarrow\infty$. The requirement on the initialization is needed since there is a measure-zero set of initializations that converge to a saddle point. Depending on the problem, the saddle points can be escaped also from an unbiased initialization.
\item {\em Non-learnability with any Kernel Method:} It is shown in the same asymptotic regime of $n, N \to \infty$ with $\rho = N/n < 1$, that the NTK at initialization of the same model incurs a square loss of $\eps'(\rho)$, which is a closed form expression that also depends on the sequence of source distributions as $n\rightarrow\infty$. It is shown that for non-degenerate distribution sequences, and any $\rho<1$, that $\eps'(\rho) > \eps(\rho)$.
\end{itemize}
%%%%%%%%%%%%%%%%%%%
\item \cite{li20beyondntk} : \textbf{\boldmath A depth-$2$ Net, with abs. value activations and restricted weights, over Gaussian inputs}
\begin{itemize}[leftmargin=3mm]
\item {\em Learning Problem:} With $\calX = \bbR^n$ and $\calY = \bbR$, a source distribution is given by $\calD_{\calX} = \calN(0,I_n)$ and $y = \sum_{i=1}^n a_i \abs{\inangle{w_i, x}}$, which is a two layer network with absolute value activations, and additional constraints that $w_i$'s are orthonormal and $a_i \in [\frac{1}{\kappa n}, \frac{\kappa}{n}]$ for some $\kappa > 1$.
\item {\em Learnability with Gradient Descent:} It is shown that there exists a depth-$2$ ReLU network with $\poly(n)$ units and a random normal initialization that is not unbiased, a type of truncated batch gradient descent (where large coordinates of the gradient are set to zero) using $\poly(n)$ samples, is shown to ensure square loss at most $\eps := 1/n^{1+\delta}$ for a constant $\delta \in (0, 0.01)$ (that depends on $\kappa$) with $\poly(n)$ steps. Notably, this procedure is {\em not} shown to ensure an arbitrarily small error.
\item {\em Non-learnability with any Kernel Method:} It is shown that any kernel method with at most $\poly(n)$ features (for any $\poly(n)$) must incur a square loss of at least $\Omega(1/n) \ge \eps^{1-\delta/2}$.
\end{itemize}
%%%%%%%%%%%%%%%%%%%
\item \cite{allenzhu19resnets} : \textbf{\boldmath Sparse Variable Selection + Parity (implementable as a depth-$2$ ResNet})
\begin{itemize}[leftmargin=3mm]
\item {\em Learning Problem:} With $\calX = \bbR^n$ and $\calY = \bbR^k$ (for $k= \Theta(\log n)$), a source distribution is given by $\calD_{\calX} = \calU\inparen{\sbit^n}$ is uniform over Boolean hypercube and $y = Wx + \alpha \calG(Wx)$, where $W = (e_{i_1}, \ldots, e_{i_k}) \in \bbR^{k \times n}$ for $i_1, \ldots, i_k \in [n]$ and $\calG(z) := \prod_{i=1}^k z_i \cdot \mathbf{1}$ for $\alpha = 1/2^{\widetilde{\Theta}(k)} = 1/\poly(n)$. We can interpret this as $Wx$ selecting a $k$ sized subset of input coordinates, and $\calG$ computes the parity of that subset, in each of the $k$ coordinates.
\item {\em Learnability with Gradient Descent:} For an overparameterized depth-$2$ ResNet with $\poly(n/\eps)$ units and a random normal initialization (that has a high variance, leading to a biased initialization), stochastic gradient descent is shown to ensure square loss of $\eps \le \alpha^{3.9}$ with $\poly(n,1/\eps)$ steps. Notably, it is {\em not} shown to ensure an arbitrarily small error.
\item {\em Non-learnability with any Kernel Method:} It is shown that any kernel method with at most $\poly(n)$ (for any $\poly(n)$) features must incur a square loss of at least $\Omega(\eps^{0.52}) \ge \Omega(\alpha^2)$.\\[-3mm]

{\em Remark:} Note that the learning problem here depends on the choice of $\eps$. That is, for every $n$ and $\eps > 0$, there is a learning problem where stochastic gradient descent ensures square loss $\le \eps$ whereas no kernel method of dimension at most $\poly(n)$ can achieve a loss smaller than $\eps^{0.52}$.
\end{itemize}
%%%%%%%%%%%%%%%%%%%
\item \cite{allenzhu20backward} : \textbf{\boldmath A depth-$L$ Net with quadratic activations}
\begin{itemize}[leftmargin=3mm]
\item {\em Learning Problem:} With $\calX = \bbR^n$ and $\calY = \bbR$, a source distribution is given by $\calD_{\calX} = \calN(0,I_n)$ and $y = N_{\theta}(x)$, where $N_{\theta}$ is a depth-$L$ network with quadratic activations that has additional constraints on presence of skip connections, as well as constraints on the weights, that is referred to as the ``information gap''. $L$ can be taken to be any parameter that is $o(\log \log n)$.
\item {\em Learnability with Gradient Descent:} For an overparameterized depth-$L$ Net with quadratic activations, with identical layer structure as $N_{\theta}$, with $\poly(n)$ units and zero initialization (unbiased), regularized stochastic gradient descent (with a non-standard regularizer) is shown to ensure square loss $\eps$, with $\poly(n/\eps)$ number of steps. Importantly, this holds for any $\eps > 0$.
\item {\em Non-learnability with any Kernel Method:} It is shown that any kernel method of dimension at most $\poly(n)$ (for any $\poly(n)$) must incur a square loss of at least $\Omega(1/n^{0.01})$.
\end{itemize}
%%%%%%%%%%%%%%%%%%%
\item \cite{daniely20parities} : \textbf{Sparse parities over a mixture of uniform and leaky marginals}
\begin{itemize}[leftmargin=3mm]
\item {\em Learning Problem:} With $\calX= \sbit^n$ and $\calY = \sbit$, the learning problem is similar to $\Plp[n,\alpha]$ that we consider in \autoref{sec:no-weak-learn} with $\alpha = 1/2$. There are two differences: (i) The label under $\calD_I^{(1)}$ is given by the parity, instead of being random and (ii) the sparsity of the parity is fixed to be $k$, which can be taken to be $k=\Theta(\sqrt[10]{n})$.
\item {\em Learnability with Gradient Descent:} For a particular $\eps = 1/\poly(n)$, a depth-$2$ net with ReLU6 activation with $\poly(n)$ units and an unbiased initialization (obtained by pairing the units with complimentary initialization, as suggested by \citet{chizat19lazy}), a regularized gradient descent (standard $\ell_2$ regularization) over the {\em population loss} is shown to ensure hinge loss of $\eps$, with $\poly(n)$ steps. Notably, it is {\em not} shown to ensure an arbitrarily small error.
\item {\em Non-learnability with any Kernel Method:} It is shown that any kernel method of dimension at most $\poly(n)$ and norm at most $\poly(n)$ must incur a hinge loss of at least $1/3$ (recall that hinge loss is $\ell(\what{y},y) := \max\set{1 - \what{y}y, 0}$; and so that $\calL_{\calD}(0) = 1$ for any distribution $\calD$ with $\pm 1$ labels).  In yet unpublished work extending \citet{kamath20approximate}, we show it is possible to obtain a lower bound with only the restriction on the norm (i.e. without restricting the dimension).\\[-3mm]

%{\em Remark:} Note that in this work, the error achieved by gradient-descent goes to zero as both $n$, the size of the network and the number of iterations grow. Therefore, for poly-size network training for poly-many iterations, the error achieved by gradient-descent decays like $1/\poly(n)$.
% {\em Remark:} Note that the learning problem here depends on the choice of $\eps$. That is, for every $n$ and $\eps > 0$, there is a learning problem where population gradient descent using a model that also depends on both $n$ and $\eps$, ensures square loss $\le \eps$ whereas no kernel method of dimension at most $\poly(n)$ can achieve a loss smaller than an absolute constant, such as $0.1$.
\end{itemize}
\end{enumerate}

%\noindent {\em Remark:} The kernel lower bounds established by \cite{yehudai19power,li20beyondntk,allenzhu19resnets,allenzhu20backward}, which hold for {\em any} kernel (not just NTK of some model), are all with respect to the square loss and follow in a unified manner using the notion of {\em probabilistic dimension complexity} introduced by \cite{kamath20approximate} (see their Theorem 19 and Corollary 20). Indeed, the kernel lower bounds in \Cref{apx:kernel-lb} are also proved using this notion.

%%%%%%%%%%%%%%%%%%%%%%

\section{Proofs of Gradient Based Learning}\label{apx:neural-net-ub}

\subsection{Proof from \autoref{sec:separation}}\label{apx:neural-net-ub-1}

\gdubone*
\begin{proof}[Proof of \autoref{clm:gd-upper-bound-1}]
We only rely on \autoref{eq:f-theta-sec3}, which gives us that at $\theta^{(0)} = 0$, we have for all $x \in \sbit^n$ that
%At $\theta^{(0)}=0$, we have $\calG(0) =0$ and $\nabla_{\theta} \calG(\theta \circ x) = 0$. Hence we have for all $x \in \calX$ that
\begin{align}
f_{\theta^{(0)}}(x) &= 0\label{eqn:unbiased-prop}\\
\nabla_{\theta} f_{\theta^{(0)}}(x)&= 2x\label{eqn:grad-at-zero}
\end{align}
and hence $\nabla_{\theta} \calL_{\calD_I}(f_{\theta^{(0)}})$ is given by
\begin{align*}
\nabla_\theta \calL_{\calD_I}(f_{\theta^{(0)}})
&~=~ \nabla_\theta \insquare{\Ex_{(x, y) \sim \calD_I}\frac{1}{2}(f_{\theta^{(0)}}(x) - y)^2}
~=~ - \Ex_{(x, y) \sim \calD_{I}} y \cdot \nabla_\theta f_{\theta^{(0)}}(x)\\
&~=~ - 2\Ex_{(x, y) \sim \calD_I} y \cdot x
~=~ - 2\Ex_{x \sim \calD_{\calX}} \left(\prod_{i \in I} x_i\right) \cdot x.
\end{align*}
Thus, for all $j \in I$ we have
\begin{align*}
\frac{\partial}{\partial \theta_j} \calL_{\calD_I}(f_{\theta^{(0)}})~=~ -2 \Ex_{x \sim \calD_\calX} \prod_{i \in I \setminus \set{j}} x_i  ~=~ -2 \alpha \prod_{i \in I \setminus \set{j}}\Ex_{x \sim \calD_1} x_i~=~ - \frac{2\alpha}{2^{k-1}} ~=~ - \frac{4\alpha}{2^{k}}\,, %\left(\frac{1}{2}\right)^{k-1} ~\leq~ - 2\frac{\alpha}{n},
\end{align*}
and for all $j \notin I$ we have:
\begin{align*}
\frac{\partial}{\partial \theta_j} \calL_{\calD_I}(f_{\theta^{(0)}}) &~=~ - 2\Ex_{x \sim \calD_\calX} \prod_{i \in I \cup \{j\}} x_i
%\\
%&~=~ - 2\alpha \prod_{i \in I \cup \{j\}}\Ex_{x \sim \calD_1} x_i
~=~ - \frac{2\alpha}{2^{k+1}} ~=~ - \frac{\alpha}{2^{k}}\,.
\end{align*}
With step size $\eta=2^k/(\alpha n)$, we have that after the first gradient step, (i) for all $j \in I$, $\theta_j^{(1)} \in [\frac{2^k}{\alpha n} \cdot (\frac{4\alpha}{2^k} \pm \tau)] = [\frac3n, \frac5n]$, (ii) for all $j \notin I$, $\theta_j^{(1)} \in [\frac{2^k}{\alpha n} \cdot (\frac{\alpha}{2^k} \pm \tau)] = [0,\frac2n]$ (since $\tau = \alpha/2^{k}$).
%Therefore, for such values of $\theta_i$ and for $x_i \in \{+1,-1\}$ we have 
%\begin{align*}
%S(\xi(\theta \circ x)) ~=~ 1 - \prod_{i=1}^n (1-\xi(\theta_i x_i)^2) = 1
%\qquad \text{ and } \qquad 
%1 + \xi(\theta_i x_i) - \xi(\theta_i x_i)^2
%=~\begin{cases}
%1 & \text{if } \theta_i \in [0,\frac2n]\\
%x_i & \text{if } \theta_i \in [\frac3n, \frac5n]
%\end{cases}
%\end{align*}
%This latter gives $H(\xi(\theta \circ z)) = \prod_{i \in I} x_i$ and hence $\calG(\theta_1 x_1, \dots, \theta_n x_n) = \prod_{i \in I} x_i - \inangle{\theta, x}$.
From \autoref{eq:f-theta-sec3} we get $f_{\theta^{(1)}}(x) = \prod_{i \in I} x_i$, thereby concluding the proof. 
\end{proof}

\subsection{Proof from \autoref{sec:no-weak-learn}}\label{apx:neural-net-ub-2}

\gdubtwo*
\begin{proof}[Proof of \autoref{clm:gd-upper-bound-2}]
We only rely on \autoref{eq:f-theta-sec5}, which gives us that at $\theta^{(0)} = 0$, we have for all $x \in \sbit^n$ that
\begin{align}
f_{\theta^{(0)}}(x) &~=~ -1\\
\textstyle \nabla_{\theta} f_{\theta^{(0)}}(x)&\textstyle~=~ 2 \cdot (x + \frac53\alpha \mathbf{1})
\end{align}
and hence $\nabla_{\theta} \calL_{\calD_I}(f_{\theta^{(0)}})$ is given by
\begin{align*}
\nabla_\theta \calL_{\calD_I}(f_{\theta^{(0)}})
&~=~ \Ex_{(x,y) \sim \calD_I}(f_{\theta^{(0)}}(x) - y) \cdot \nabla_\theta f_{\theta^{(0)}}(x) \\
&~=~ 2 \cdot \Ex_{(x,y) \sim \calD_I} (-1-y) \cdot (x + {\textstyle \frac53} \alpha \mathbf{1})\\
&~=~ 2(1-\alpha) \underbrace{\E_{(x, y) \sim \calD_I^{(0)}} (-1-y) x}_{=\ 0}
~+~ 2\alpha \underbrace{\E_{(x, y) \sim \calD_I^{(1)}} (-1-y) x}_{=\ -x^I} 
~+~ \underbrace{\Ex_{(x,y) \sim \calD_I}(-1 - y)}_{=\ -1} \cdot {\textstyle \frac53} \cdot 2\alpha \mathbf{1}\\
&~=~ -2\alpha (x^I + {\textstyle \frac53}\mathbf{1})
\end{align*}
where we get that $\E_{(x, y) \sim \calD_I^{(0)}} x = 0 = \E_{(x, y) \sim \calD_I^{(0)}} x \cdot y$, since $|I| \ge 2$.

With gradient accuracy of $\tau = \frac43\alpha$, we get that any valid gradient estimate $g$ must satisfy $g_i \in [-\frac{12\alpha}{3}, -\frac{20\alpha}{3}]$ for $i \in I$, and $g_i \in [0,-\frac{8\alpha}{3}]$ for $i \notin I$. Using step size $\eta=\frac{3}{4\alpha n}$, we get

\[\textstyle
\theta_i^{(1)} \in [\frac3n, \frac5n]\ \ \text{ for } i \in I
\qquad \text{ and } \qquad
\theta_i^{(1)} \in [0, \frac2n]\ \ \text{ for } i \notin I\]
From \autoref{eq:f-theta-sec5}, we get that $f_{\theta^{(1)}}(x) = \prod_{i \in I} x_i$ and it is easy to see that $\calL_{\calD_I}(f_{\theta^{(1)}}) = \alpha$.
\end{proof}

\section{Lower Bounds on Kernel Methods}\label{apx:kernel-lb}

\subsection{Lower Bounds for Tangent Kernels}\label{apx:tang-kernel-lb}

\subsubsection{NTK Edge in \texorpdfstring{\autoref{sec:separation}}{Section~\ref{sec:separation}}}\label{apx:tang-kernel-lb-1}

\ntkedge*
\begin{proof}%[Proof Sketch of \autoref{clm:ntk-edge-1}]
It follows from \eqref{eqn:unbiased-prop} and \eqref{eqn:grad-at-zero} that the tangent feature map is $\phi_{\theta_0}(x)=x$, establishing that the NTK is the standard linear kernel, and that predictors in the NTK are just linear in $x$.  For uniform inputs (i.e.~from $\calD_0$), no linear predictor $h_w(x)=\inangle{w,x}$ is correlated with the parity if $k \geq 2$ bits, and so the expected square loss of any such predictor on $\calD_0$ will be at least $\frac{1}{2}$, yielding $\calL_{\calD_I}(h_w)\geq (1-\alpha)\frac{1}{2} = \frac{1}{2}-\frac{\alpha}{2}$.

To obtain an upper bound on the error (lower bound on the edge), consider a linear predictor $h(x)=c\cdot Z$ where $Z=\sum_{i\in I} x_i$.  By calculating $\Ex[Z^2]=k+\alpha(k^2-k)/4$ and $\Ex[Z y] = \alpha k / 2^{k-1}$, and verifying the optimal scaling is $c=\Ex[Z y]/E[Z^2]$, we get that with this value of $c$, $\calL_{\calD_I}(h) = \frac{1}{2} - \gamma$, where
\begin{align}
\gamma
~=~ \frac{\Ex[Zy]^2}{2\Ex[Z^2]}
~=~  \frac{\alpha^2}{2^{2k}} \cdot \frac{8}{\alpha + (4-\alpha)/k} 
~\geq~ \frac{\alpha^2}{2^{2k}} \cdot \frac{8}{3}
~\geq~ \tau^2.
\end{align}
The actual optimal linear predictor is of the form $h(x) = c \sum_{i \in I} x_i + b \sum_{i \not\in I} x_i$, and has a slightly better edge.
\end{proof}

\subsubsection{NTK Edge in \texorpdfstring{\autoref{sec:no-weak-learn}}{Section~\ref{sec:no-weak-learn}}}\label{apx:tang-kernel-lb-2}

\ntkedgetwo*
\begin{proof}
This follows from noting that for any $h_{w,b} \in \NTK_{\theta_0}^f(B)$ of the form $h_{w,b}(x) = \inangle{w,x}+b$, it holds that $\E_{(x,y)\sim\calD_I} h_{w,b}(x) \cdot y = 0$. Therefore, the optimal scaling of $h_{w,b}$ is $0$, which incurs a loss $\calL_{\calD_I}(0) = \frac{1}{2}$.
\end{proof}

\subsection{Lower Bounds for General Kernels}

In order to prove lower bounds against general kernels, we recall the lower bounds proved on the dimension or norm of a (probabilistic) feature map needed to be able to represent certain hypothesis classes that were proved by \cite{kamath20approximate}, via probabilistic variants of dimensional and margin complexity.

Consider a ``fixed-marginal'' learning problem $\calP$ over $\calX \times \calY$ with $\calY \subseteq \bbR$, where each $\calD \in \calP$ corresponds to a hypotheses in a class $\calH \subseteq (\calX \to \calY)$. Namely, sampling $(x,y) \sim \calD_h$ (for any $h \in \calH$) is equivalent to sampling $x \sim \calD_{\calX}$ (for a fixed marginal $\calD_{\calX}$) and setting $y = h(x)$. We say that $\calP$ is ``orthogonal'' if $\Ex_{x \sim \calD_{\calX}} h(x) h'(x) = 0$ for all $h, h' \in \calH$, and we say $\calP$ is ``normalized'' if $\Ex_{x \sim \calD_{\calX}} h(x)^2 = 1$ for all $h \in \calH$.

\begin{theorem}\label{thm:dc-mc-lb}[\citet{kamath20approximate}]
Consider a fixed-marginal, orthogonal and normalized learning problem $\calP$. For any probabilistic kernel $K$ corresponding to $p$-dimensional feature maps, if for all $\calD \in \calP$ it holds for $\sqloss$ that
\[ \Ex_{K}~\inf_{h \in \calF(K,B)}~\calL_\calD(h) \le \frac{1}{2} - \gamma \]
then, it holds that (i) $p \ge 2\gamma \cdot |\calP|$ and (ii) $B^2 \ge \Omega(\gamma^3 \cdot |\calP|)$.\unsure{something is off in the units...}

Equivalently, we have that there exists $\calD \in \calP$ such that
\[ \Ex_{K}~\inf_{h \in \calF(K,B)}~\calL_\calD(h) \ge \frac{1}{2} - \min\set{\frac{p}{2|\calP|}, O\inparen{\frac{B^{\nicefrac{2}{3}}}{|\calP|^{\nicefrac{1}{3}}}}} \]
\end{theorem}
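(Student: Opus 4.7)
My approach is to translate the whole question into geometry in the Hilbert space $L^2(\calD_\calX)$. By the fixed-marginal, orthogonal, normalized assumptions the labeling functions form an orthonormal family of size $|\calP|$ there. For any predictor $g$, $\calL_{\calD_h}(g) = \tfrac12(\norm{g}_{L^2}^2 - 2\inangle{g,h}_{L^2} + 1)$, so the hypothesis $\Ex_K \inf_{g \in \calF(K,B)} \calL_{\calD_h}(g) \le \tfrac12 - \gamma$ translates, on picking a near-optimal $g_{h,K} \in \calF(K,B)$ for each realization of $K$, into
\[\Ex_K\bigl[\,2\inangle{g_{h,K},h}_{L^2} - \norm{g_{h,K}}_{L^2}^2\,\bigr] \ge 2\gamma \quad \text{for every } h. \qquad(\star)\]

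\textbf{Dimension bound.} I would first observe that every $g \in \calF(K,B)$ lies in the (at most) $p$-dimensional subspace $V_K := \mathrm{span}\{\phi_1,\dots,\phi_p\} \subseteq L^2(\calD_\calX)$. The unconstrained maximum of $2\inangle{g,h}_{L^2} - \norm{g}_{L^2}^2$ over $g \in V_K$ equals $\norm{P_{V_K} h}^2$, attained at $g = P_{V_K} h$, and restricting to $\calF(K,B) \subseteq V_K$ can only shrink it. So $(\star)$ forces $\Ex_K \norm{P_{V_K} h}^2 \ge 2\gamma$ for every $h$, and summing over the orthonormal family gives
\[2\gamma|\calP| \le \Ex_K \sum_{h} \norm{P_{V_K} h}^2 \le p,\]
where the final inequality is Bessel's inequality (orthonormal family projected onto a $p$-dimensional subspace). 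This yields the claimed $p \ge 2\gamma|\calP|$.

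\textbf{Norm bound.} After rescaling so that $R := \sup_x \norm{\phi(x)}_2 \le 1$, I would introduce the kernel mean embedding $\mu_{h,K} := \Ex_x[h(x)\phi_K(x)] \in \bbR^p$, noting $\inangle{g_{h,K}, h}_{L^2} = \inangle{w_{h,K}, \mu_{h,K}} \le B\norm{\mu_{h,K}}$. Combining with $(\star)$ and summing over $h$,
\[\gamma|\calP| \le B\cdot \Ex_K \sum_{h} \norm{\mu_{h,K}}.\]
Bessel once more controls the summand on the right: for each coordinate $i$, $\sum_h \inangle{h,\phi_i}_{L^2}^2 \le \norm{\phi_i}_{L^2}^2$, hence $\sum_h \norm{\mu_{h,K}}^2 \le \Ex_x \norm{\phi(x)}_2^2 \le 1$ almost surely. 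Applying Cauchy--Schwarz over the index $h$ then gives $\sum_h \norm{\mu_{h,K}} \le \sqrt{|\calP|}$, and a refined use of $(\star)$ (keeping, rather than discarding, the $\norm{g}^2$ term) will produce the claimed $B^2 \ge \Omega(\gamma^3|\calP|)$.

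\textbf{Main obstacle.} The hard step is extracting the exponent $\gamma^3$ in the norm bound: the naive chain above yields only $B^2 \gtrsim \gamma^2|\calP|$, and tightening to $\gamma^3$ requires exploiting the full quadratic relation $(\star)$ instead of its weakened linear consequence $\inangle{g_{h,K},h} \ge \gamma$. This is the usual margin-complexity obstruction, and I would address it via a case split on whether $\norm{g_{h,K}}_{L^2}$ is of order $\gamma$ or is substantially larger: in the small-norm regime the bound $\norm{w_{h,K}} \le B$ is not the binding constraint and one uses the edge directly, while in the large-norm regime the quadratic penalty in $(\star)$ must be traded off carefully against the bilinear gain, producing the extra factor of $\gamma$. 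The ``equivalent'' restatement of the theorem follows by averaging the two bounds over $\calD \in \calP$ and passing to the worst $\calD$.
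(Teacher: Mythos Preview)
Your Part (i) argument is correct and is essentially the content of Theorem~19 in \citet{kamath20approximate}, which the paper simply cites; you have reconstructed it.

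For Part (ii) your route is genuinely different from the paper's, and in fact your ``naive'' argument already proves more than is stated. The paper does \emph{not} argue directly in $L^2$: instead it invokes a dimension-reduction lemma (Lemma~5 of \citet{kamath20approximate}) which, given a norm-$B$ predictor achieving edge $\gamma$, produces a randomized kernel of dimension $p' = O(B^2/\gamma^2)$ achieving edge $\gamma/2$; plugging this $p'$ back into Part~(i) yields $B^2 \ge \Omega(\gamma^3|\calP|)$. Your direct chain via $\mu_{h,K}$, Bessel, and Cauchy--Schwarz gives $\gamma|\calP| \le B\sqrt{|\calP|}$, i.e.\ $B^2 \ge \gamma^2|\calP|$, and that argument is sound as written.

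The only actual error is in your ``Main obstacle'' paragraph: you have the inequality direction backwards. Since $\gamma \le 1$ always, $\gamma^2 |\calP| \ge \gamma^3 |\calP|$, so your naive bound $B^2 \ge \gamma^2|\calP|$ is \emph{stronger} than the paper's claimed $B^2 \ge \Omega(\gamma^3|\calP|)$ and already implies it. There is nothing to ``tighten''; the case split and the discussion of trading off the quadratic penalty are unnecessary. (Incidentally, the authors themselves flag the exponent with an ``unsure'' note, and your argument shows their stated bound is indeed loose by a factor of~$\gamma$.) Simply delete the last paragraph, state that $B^2 \ge \gamma^2|\calP| \ge \Omega(\gamma^3|\calP|)$, and you are done.
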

\begin{proof}
Part (i). The claim of $p \ge 2\gamma \cdot |\calP|$ follows immediately from Theorem 19 in \cite{kamath20approximate}.

Part (ii). It follows from Lemma~5 in \cite{kamath20approximate} that irrespective of $p$, it is possible to construct a new randomized kernel $K'$ corresponding to a feature map $\phi' : \calX \to \bbR^{p'}$ with $p' \le B^2 \cdot O\inparen{\frac{\frac12 - \gamma + \eta}{\eta^2}}$ for any choice of $\eta > 0$, to get that
\[ \Ex_{K'}~\inf_{h \in \calF(K',B)}~\calL_\calD(h) \le \frac{1}{2} - \gamma + \eta \]
We set $\eta = \gamma/2$, so that we can take $p' \le B^2 \cdot O(1/\gamma^2)$. From Part (i), we now have that $p' \ge 2 (\gamma - \eta) \cdot |\calP| = \gamma \cdot |\calP|$. Putting everything together, we get that $B^2 \ge \Omega(\gamma^3 \cdot |\calP|)$.
\end{proof}

We now use \autoref{thm:dc-mc-lb} to prove lower bounds in Sections~\ref{sec:separation} and \ref{sec:no-weak-learn}. The key idea is that while, the learning problems considered are either not orthogonal, or do not have fixed marginals, they contain a large component that is a learning problem with fixed marginals that is orthogonal.

\subsubsection{Proof from \autoref{sec:separation}}\label{apx:kernel-lb-2}

\kerlbone*
\begin{proof}[Proof of \autoref{clm:ker-lower-bound-1}]
Fix a probabilistic kernel $K$ corresponding to $p$-dimensional feature maps. For all $\calD_I \in \calP$, we have
\begin{align}
\Ex_{K} ~ \inf_{h \in \calF(K,B)}~\calL_{\calD_I}(h)
&~\ge~ (1-\alpha) \cdot \Ex_{K} ~ \inf_{h \in \calF(K,B)}~\calL_{\calD_I^{(0)}}(h)
~+~ \alpha \cdot \Ex_{K} ~ \inf_{h \in \calF(K,B)}~\calL_{\calD_I^{(1)}}(h)\label{eq:ker-lb-1-1}
\end{align}
where $\calD_I^{(b)}$ corresponds to sampling $x \sim \calD_b$ and setting $y = \chi_I(x)$. We note that the second term, involving $\calD_I^{(0)}$ is non-negative. To lower bound the first term, observe that the learning problem $\calP^{(0)}$, consisting of distributions $\calD_I^{(0)}$, has a fixed-marginal, and is orthogonal and normalized.
Thus from \autoref{thm:dc-mc-lb} it follows that
\begin{align*}
\Ex_{K}~\inf_{h \in \calF(K,B)}~\calL_{\calD_I^{(0)}}(h)
&~\ge~ \frac{1}{2} - \min\set{\frac{p}{2|\calP|}, O\inparen{\frac{B^{\nicefrac{2}{3}}}{|\calP|^{\nicefrac{1}{3}}}}}
\end{align*}
Combining with \eqref{eq:ker-lb-1-1}, we get
\[
\Ex_{K}~\inf_{h \in \calF(K,B)}~\calL_{\calD_I}(h) ~\ge~ (1-\alpha) \cdot \inparen{\frac{1}{2} - \min\set{\frac{p}{2|\calP|}, O\inparen{\frac{B^{\nicefrac{2}{3}}}{|\calP|^{\nicefrac{1}{3}}}}}} ~\ge~ \frac{1}{2} - \frac{\alpha}{2} - \min\set{\frac{p}{2|\calP|}, O\inparen{\frac{B^{\nicefrac{2}{3}}}{|\calP|^{\nicefrac{1}{3}}}}}
\]
This completes the proof by noting that $\calL_{\calD_I}(0) = 1/2$ for all $\calD_I \in \calP$.
\end{proof}

\begin{remark}\label{remove_mix}
One can also directly bound the optimal edge of a kernel with feature map $\phi^p=(\phi_1,\dots, \phi_p)$ when the input distribution is i.i.d.\ of bias $b=1/2$, without relying on the mixture component. Denoting by $\mathcal{D}_1^{(b)}$ the production measure with $\mathbb{E}x_i=b$, the optimal edge of a $p$-dimensional kernel 
$$\max_{\phi_1,\dots,\phi_p} \min_{I \subseteq [n]: |I|=k}  \|\chi_I/\langle \phi^p \rangle  \|_{L_2(\mathcal{D}_1^{(b)})}^2,$$ can be upper-bounded by splitting the contribution of features of the degree $\le d$ and degree $>d$ polynomials, and optimizing over $d$. More specifically, defining 
the Fourier Walsh basis $\{\tilde{\chi}_I\}_{I \subseteq [n]}$ for the biased measure $\mathcal{D}_1^{(b)}$, one can decompose for $d \le k$ the features by projecting them on $\{\tilde{\chi}_I\}_{I \subseteq [n]: |I| \le d}$ and $\{\tilde{\chi}_I\}_{I \subseteq [n]: |I| > d}$. One 
can then crudely bound the first contribution of the edge by taking the best approximation in $\{\tilde{\chi}_I\}_{I \subseteq [n]: |I| \le d}$, ignoring the dimension-$p$ constraint of the features, which gives a bound of  $\sum_{t=0}^d \binom{k}{t} (1-b^2)^t b^{2(k-t)}$. The second contribution can then be upper-bounded with techniques similar as to the previous section; with a bound that is multiplicative in $p$ but exponential in $d\log(k/n)$, due to the high-degree contribution. Choosing $d$ appropriately, such $d=\sqrt{\log(n)}$, then gives a vanishing edge (for both contributions and in total). The mixture distribution $\mathcal{D}_I$ allows instead for the more direct argument based on \cite{kamath20approximate}. 
\end{remark}

\subsubsection{Proof from \autoref{sec:no-weak-learn}}\label{apx:kernel-lb-3}
\kerlbtwo*
\begin{proof}[Proof of \autoref{clm:ker-lower-bound-2}]
Fix a probabilistic kernel $K$ corresponding to $p$-dimensional feature maps.
For all $\calD_I \in \calP$, we have
\begin{align*}
\Ex_{K} ~ \inf_{h \in \calF(K,B)}~\calL_{\calD_I}(h) %~\le~ \frac{1}{2} - \gamma\,.
&~\ge~ (1-\alpha) \cdot \Ex_{K} ~ \inf_{h \in \calF(K,B)}~\calL_{\calD_I^{(0)}}(h)
+ \alpha \cdot \Ex_{K} ~ \inf_{h \in \calF(K,B)}~\calL_{\calD_I^{(1)}}(h)
\end{align*}
Note that $\calL_{\calD_I^{(1)}}(h) ~\ge~ 1/2$ for any $h$.
%Since $\calD_{I} = (1-\alpha) \calD_I^{(0)} + \alpha \calD_I^{(1)}$,
% \begin{align*}
% \Ex_{K}~\inf_{h \in \calF(K,B)}~\Ex_{x \sim \calD_I^{(0)}}~\sqloss(h(x),y) ~\le~\frac{1}{2}-\gamma
% \end{align*}
Now, the learning problem $\calP^{(0)}$, consisting of distributions $\calD_I^{(0)}$, has a fixed-marginal, and is orthogonal and normalized.
Thus we have from \autoref{thm:dc-mc-lb} that
\begin{align*}
\Ex_{K}~\inf_{h \in \calF(K,B)}~\calL_{\calD_I^{(0)}}(h)
&~\ge~ \frac{1}{2} - \min\set{\frac{p}{2|\calP|}, O\inparen{\frac{B^{\nicefrac{2}{3}}}{|\calP|^{\nicefrac{1}{3}}}}}
\end{align*}
This completes the proof by noting that $\calL_{\calD_I}(0) = 1/2$ for all $\calD_I \in \calP$.
\end{proof}

%%%%%%%%%%%%%%%%%%%

% \newpage
% \eject \pdfpagewidth=23in \pdfpageheight=10in

\end{document}